\documentclass{article}

\setlength{\parindent}{0pt}
\setlength{\parskip}{.2\baselineskip}
\usepackage{fullpage}

\usepackage{microtype}
\usepackage{graphicx}
\usepackage{subfigure}
\usepackage{booktabs} %
\usepackage{array}
\usepackage{tikz}
\usepackage{forest}
\usepackage{wrapfig}
\usepackage{mdframed}

\usepackage{algorithm}
\usepackage[noend]{algpseudocode}
\algdef{SE}[DOWHILE]{Do}{doWhile}{\algorithmicdo}[1]{\algorithmicwhile\ #1}%

\usepackage{xcolor}         %

\definecolor{darkred}{RGB}{165,42,42}
\definecolor{darkgreen}{RGB}{0,130, 0}%
\definecolor{darkblue}{RGB}{0,0,150}

\usepackage[colorlinks=true, citecolor = darkgreen, linkcolor = darkred, urlcolor = darkblue]{hyperref}

\usepackage{amsmath}
\usepackage{amssymb}
\usepackage{mathtools}
\usepackage{amsthm}

\usepackage[utf8]{inputenc} %
\usepackage[T1]{fontenc}    %
\usepackage{url}            %
\usepackage{booktabs}       %
\usepackage{amsfonts, dsfont}       %
\usepackage{nicefrac}       %
\usepackage{microtype}      %
\usepackage{enumitem}
\usepackage{soul}

\usepackage[
backend=biber,
style=alphabetic,
maxalphanames=4,
useprefix=true,
sorting=anyt,
maxbibnames=99,
backref=true
]{biblatex}
\addbibresource{SLB.bib}

\usepackage{amsmath}
\usepackage{amssymb}
\usepackage{mathtools}
\usepackage{amsthm, amsfonts}
\usepackage{mathrsfs}

\theoremstyle{plain}

\newtheorem{theorem}{Theorem}

\newtheorem{lemma}[theorem]{Lemma}

\newtheorem{definition}[theorem]{Definition}

\newtheoremstyle{lowspace}%
  {\topsep}%
  {-.1\baselineskip}%
  {\itshape}%
  {0pt}%
  {\bfseries}%
  {.}%
  { }%
  {\thmname{#1}\thmnumber{ #2}\textnormal{\thmnote{ (#3)}}}
\theoremstyle{lowspace}
\newtheorem{lemmalowspace}[theorem]{Lemma}

\newtheorem{theoremlowspace}[theorem]{Theorem}

\providecommand{\argmax}{\mathop\mathrm{arg\, max}} 

\newcommand{\ttheta}{\widetilde\theta}

\DeclareMathOperator*{\argmin}{arg\,min}

\usepackage[textsize=tiny]{todonotes}

\newcommand{\eff}{\mathbf{R}}
\newcommand{\saf}{\mathbf{S}}

\newcommand{\indi}{\mathds{1}}

\newcommand{\hist}{\mathfrak{H}}

\newcommand{\confset}{\mathcal{C}}

\newcommand{\lil}{\mathrm{LIL}}

\newcommand{\tO}{{\widetilde{O}}}
\newcommand{\tPhi}{\widetilde{\Phi}}
\newcommand{\lpt}{\mathsf{LP}}
\newcommand{\socp}{\mathsf{SOCP}}
\newcommand{\asafe}{a_{\mathsf{safe}}}
\newcommand{\gsafe}{\Gamma(\asafe)}

\newcommand{\con}{\mathsf{Con}}
\newcommand{\consistency}{\con_t(\delta)}
\newcommand{\acnorm}{\|a_t\|_{V_t^{-1}}}

\newcommand{\acnorms}{\|a_s\|_{V_s^{-1}}}

\newcommand{\optimism}{\mathsf{L}}
\newcommand{\Goptimism}{\mathsf{G}}

\newcommand{\colts}{\textsc{colts}}
\newcommand{\rcolts}{\textsc{r-colts}}
\newcommand{\scolts}{\textsc{s-colts}}
\newcommand{\ecolts}{\textsc{e-colts}}

\newcommand{\ts}{\textsc{ts}}
\newcommand{\onem}{\mathbf{1}_m}

\renewcommand{\succeq}{\succcurlyeq}

\newcommand{\bbst}{\bar{b}_{s\to t}}
\newcommand{\bbtt}{\bar{b}_{\tau(t) \to t}}

\newcommand{\ssst}{\sigma_{s \to t}}

\begin{document}

\title{Constrained Linear Thompson Sampling}
\author{Aditya Gangrade \\ Boston University \\\texttt{gangrade@bu.edu} \and  Venkatesh Saligrama\\Boston University\\ \texttt{srv@bu.edu}}
\date{\vspace{-\baselineskip}}

\maketitle
\begin{abstract}
We study safe linear bandits (SLBs), where an agent selects actions from a convex set to maximize an unknown linear objective subject to unknown linear constraints in each round. Existing methods for SLBs provide strong regret guarantees, but require solving expensive optimization problems (e.g., second-order cones, NP hard programs). To address this, we propose Constrained Linear Thompson Sampling (\colts), a sampling-based framework that selects actions by solving perturbed linear programs, which significantly reduces computational costs while matching the regret and risk of prior methods. We develop two main variants: 
\scolts, which ensures zero risk and ${\tO(\sqrt{d^3 T})}$ regret given a safe action, and \rcolts, which achieves ${\tO(\sqrt{d^3 T})}$ regret and risk with no instance information. In simulations, these methods match or outperform state of the art SLB approaches while substantially improving scalability. On the technical front, we introduce a novel coupled noise design that ensures frequent `local optimism' about the true optimum, and a scaling-based analysis to handle the per-round variability of constraints.

\end{abstract}

\section{Introduction}

Stochastic bandit problems are a fundamental model for optimising unknown objectives through repeated trials. While single-objective bandit theory is well-developed, real-world learners must also deal with \emph{unknown constraints} at every round of interaction. For instance, in \emph{dose-finding}
\cite{aziz2021multi}, \emph{micro-grid control}
\cite{feng2022stability}, and \emph{fair recommendation}
\cite{chohlas2024learning}, a learner must choose actions that maximise reward
while never crossing unknown toxicity, voltage, or exposure limits (see \S\ref{app:domains}).

The \emph{safe linear bandit} (SLB) problem models these scenarios in a linear programming (LP) setting: a learner selects actions $\{a_t\}$ from a convex domain $\mathcal{A}$ to optimize an unknown objective vector $\theta_* \in {\mathbb{R}^d}$ subject to unknown constraints of the form $\Phi_* a \le \alpha,$ where $\Phi_* \in {\mathbb{R}^{m \times d}}$. After each action, the learner observes noisy feedback of the objective ${\theta_*^\top a} + \mathrm{noise}$ and the constraints ${\Phi_* a} + \mathrm{noise}$, thus acquiring information to guide future actions. Performance in SLBs is measured via the
\begin{equation} \label{eqn:reg_and_risk_definitions}%
\textit{regret, } \eff_T := \sum_{t \le T} \bigl(\theta_*^\top(a_* - a_t)\bigr)_+,
\quad\textrm{and}\quad
\textit{risk, }  \saf_T :=\sum_{t \le T}  \Bigl(\max_i \bigl(\Phi_* a_t - \alpha\bigr)^i\Bigr)_+,
\end{equation}
where $a_*$ is the optimal action under the true (but unknown) constraints, and $(\cdot)_+ := \max(\cdot, 0)$. %
There are two main notions of safety in SLBs:\vspace{-.3\baselineskip}
\begin{itemize}[wide, nosep,topsep = 0pt, labelindent = 10pt]
\item \emph{Hard constraint enforcement}, which requires that with high probability, $\saf_T=0$ for all $T$. This is only achievable if the learner has prior access to a \emph{known safe action}~$\asafe$.
\item \emph{Soft constraint enforcement}, which requires $\saf_T = o(T)$ with high probability (whp). This is a weaker requirement, but does not need prior information.\vspace{-.3\baselineskip}
\end{itemize}%
A series of confidence‐based algorithms \cite[e.g.][]{gangrade2024safe,pacchiano2024contextual, amani2019linear, moradipari2021safe} has offered strong theoretical performance for SLBs, but in practice they often require nontrivial or NP‐hard constraints at each round (e.g., second‐order conic programs) and sometimes multiple such solves per step. This motivates a more computationally efficient design.

\textbf{Contributions.}
We introduce a \emph{sampling-based} approach, \emph{COnstrained Linear Thompson Sampling} (\colts), which adds carefully chosen noise to estimates of both the objective and constraint parameters, and selects actions according to this perturbed program. This allows us to maintain the same order of regret and risk bounds as prior methods, while substantially reducing the complexity of each round. However, just perturbing the program as above does not directly yield good actions, since the perturbed program may be infeasible, or its optimum may be unsafe. We therefore develop two augmentations of $\colts$, which address the SLB problem under distinct regimes:
\begin{itemize}[wide, nosep, topsep = -.1\baselineskip, itemsep = 0.1\baselineskip, labelindent = 10pt]
    \item \textbf{\scolts}  assumes a \emph{given safe action} $\asafe$. Actions are picked by first solving a perturbed LP (while ensuring that $\asafe$ is feasible), and then scaling its optimum towards $\asafe$ to ensure safety. This yields \emph{zero} risk, and regret $\mathcal{R}(\asafe) \cdot {\tO(\sqrt{d^3 T})}$ (see \S\ref{sec:defi}, or Table \ref{table:result_comparison} for definition of $\mathcal{R}(a)$).
    \item \textbf{\rcolts} requires only \emph{feasibility} of the true problem, and operates by sampling $O(\log T)$ perturbed programs, and setting $a_t$ to be the optimiser of the one with largest value. This resampling directly yields optimism, leading to instance-independent ${\tO(\sqrt{d^3 T})}$ regret and risk bounds. We additionally argue that under Slater's condition, and with extra exploration, a similar regret and risk guarantee follows without resampling, and so solving only one optimisation per round.%
\end{itemize}
Table~\ref{table:result_comparison} summarizes our results in comparison to prior work. Each variant attains regret and risk bounds matching those of prior methods, whilst selecting actions by only optimising over linear constraints (in addition to those due to $\mathcal{A}$). This yields the first efficient method for soft enforcement, and significantly speeds up hard constraint enforcement. Contextual extensions are discussed in \S\ref{sec:contextual}.%

\begin{table}[tb]
\caption{\footnotesize\textsc{Comparison of SLB Methods.} `Known $\asafe$' means that the method requires an action known a priori to be safe. $\Delta(a) := \theta_*^\top(a_* - a)$ is the reward gap of an action $a$, and $\Gamma(a) := \min_i (\alpha - \Phi_*a)^i_+$ is its safety margin. Slater's condition is that $\max_{a \in \mathcal{A}} \Gamma(a) > 0$, while feasibility assumes that $\mathcal{A} \cap \{\Phi_* a \le \alpha\} \neq \emptyset$. $\mathcal{R}(a) := 1 + (\nicefrac{\Delta(a)}{\Gamma(a)})$ if $\Gamma(a) > 0$, and $\infty$ otherwise. $\lpt$ is the computation needed to optimize a linear objective with $m$ linear constraints over $\mathcal{A}$ to constant approximation. $\socp$ is the same with $m$ second-order conic constraints. \textsc{opt-pess} refers to most frequentist hard enforcement methods discussed in \S\ref{sec:related_work}, which have similar costs and bounds; \textsc{safe-LTS} is due to \cite{moradipari2021safe}; \textsc{doss} and the lower bound are due to \cite{gangrade2024safe}.}\vspace{.5\baselineskip}
\label{table:result_comparison}
\centering
\setlength{\tabcolsep}{3pt}
{\begin{tabular}{llccc}
\toprule
Algorithm  & Assumptions                  & Regret & Risk & Compute at $t$  \\
\midrule
 \textsc{opt-pess} & Known $\asafe$ & $\mathcal{R}(\asafe) \cdot \tO(\sqrt{d^2 T})$ & $0$ & NP-hard\\
  Relaxed \textsc{opt-pess} & Known $\asafe$ & $\mathcal{R}(\asafe) \cdot \tO(\sqrt{d^3 T})$ & 0 & $d\cdot \socp\cdot\log(t)$
\\
  \textsc{safe-LTS} & Known $\asafe$ & $\mathcal{R}(\asafe) \cdot \tO(\sqrt{d^3 T})$ & $0$ & $\socp\cdot\log(t)$\\
\textbf{$\scolts$}  & Known $\asafe$ & $ \mathcal{R}(\asafe) \cdot \tO(\sqrt{d^3 T})$  & $0$ & $\lpt\cdot\log(t)$                 \\
 \midrule
\textsc{doss}  & Feasibility & $\tO(\sqrt{d^2 T})$  & $\tO(\sqrt{d^2 T})$ & NP-hard  \\
\textbf{$\rcolts$}  & Feasibility          & $\tO(\sqrt{d^3 T})$ & $\tO(\sqrt{d^3 T})$ & $\lpt \cdot \log^2(t)$ \\
\midrule
\textsc{lower-bound} & Feasibility & \multicolumn{3}{c}{ $\max(\eff_T, \saf_T)  = \Omega(\sqrt{T})$, \emph{no matter the instance}; } \\
\bottomrule
\end{tabular}} %
\end{table}
\if0 
\textbf{Technical Innovations.}  Due to the variability of the perturbed constraints, (i) the feasible set for picking actions at each round fluctuates, which disables existing analyses of linear $\ts$; and (ii) the optimum $a_*$ may be infeasible at any round, which causes existing methods for analysing the frequency of certain reference (unsaturation/optimism) events to fail. We address these by a mixture of algorithm and analysis design, informed by two key insights:
\begin{enumerate}[label=\Alph*), wide, nosep, topsep = -.4\baselineskip, labelindent = 10pt, itemsep = 0.1\baselineskip]
    \item \emph{Coupled Noise Design.} A natural approach to perturbing the objective and constraints is to add independent noise to each. However, this design is difficult to analyse, and existing approaches to doing so yield an untenable $e^{\Omega(m)}$ factor in the regret and risk bounds. We sidestep this issue by \emph{coupling} the objective and constraint perturbations: we sample a single vector $\psi$, and add $\psi$ to the objective estimate, and $-\psi$ to \emph{every row} of the constraint estimate. We prove that this coupled noise ensures a large \emph{local optimism rate}, i.e., with constant chance, after perturbation, the true optimum $a_*$ is both feasible, and has value better than $\theta_*^\top a_*$. This enables bounds scaling only with $\log(m)$. Empirical studies (\S\ref{appx:simulations}) bear out the practical advantages of this coupled design.
    \item \emph{Scaling and Resampling.} The variation of the constraints from round to round disables both of the extant frameworks for analysing linear $\ts$ (the `unsaturation' approach \cite{agrawal2013thompson}, and the `optimism' approach \cite{abeille_lazaric}). To analyse $\scolts$, we work within the unsaturation framework, and use a second analysis-only scaling trick to allow comparison of the rewards of actions meeting distinct constraints. In $\rcolts,$ we work within the optimism framework, and use resampling to directly generate effective actions, thus bypassing these challenges.%
\end{enumerate}
\fi 

\textbf{Technical Innovations.} The random perturbations in our sampling-based approach cause two challenges that break existing analyses of linear $\ts$: (i) the feasible region fluctuates at each round; and (ii) the true optimum $a_*$ can become infeasible under perturbed constraints, complicating direct analysis. We address these via two key innovations:

\begin{enumerate}[label=\Alph*), wide, nosep, topsep=-.4\baselineskip, labelindent=10pt, itemsep=0.1\baselineskip]
    \item \emph{Coupled Noise Design.} Independent perturbations of objectives and constraints are difficult to analyze and yield undesirable exponential factors ($e^{\Omega(m)}$). We instead \emph{couple} the perturbations by adding a single random vector $\psi$ to the objective estimate and $-\psi$ to each row of the constraint estimate. This coupling ensures a high \emph{local optimism rate}: with constant probability, the perturbed program is feasible at the true optimum $a_*$, achieving regret bounds scaling only with $\log(m)$. Empirical studies (\S\ref{sec:simulations},\ref{appx:simulations}) confirm the advantages of coupled noise.
    
    \item \emph{Scaling and Resampling.} The fluctuating constraints disable both existing analysis frameworks for linear $\ts$: the `unsaturation' approach of \cite{agrawal2013thompson} and the `optimism' approach of \cite{abeille_lazaric}. To analyze $\scolts$, we adapt the unsaturation framework with a new scaling-based trick allowing comparisons across distinct feasible regions. For $\rcolts$, we instead use resampling to directly generate optimistic and feasible actions, bypassing these analytic barriers entirely.\vspace{-.5\baselineskip}
\end{enumerate}

\subsection{Related Work}\label{sec:related_work}

\textbf{Safe Bandits.} Safe bandits have been studied under two main notions of constraint enforcement: \emph{soft}~\cite{chen2022strategies, gangrade2024safe} and \emph{hard}~\cite{amani2019linear, moradipari2021safe, pacchiano2021stochastic, pacchiano2024contextual, hutchinson2023impact, hutchinson2024directional}. Soft enforcement achieves regret and risk bounds of ${\tO(\sqrt{d^2 T})}$, with improved instance-specific guarantees for polytopal domains. Hard enforcement achieves zero risk, and regret bounds of ${\tO(\mathcal{R}(\asafe)\sqrt{d^2T})}$ but given a safe action $\asafe$. Efficient variants of these methods instead achieve weaker regret bounds of ${\tO( \mathcal{R}(\asafe) \sqrt{d^3 T})}$.  In contrast to safe bandits, \emph{bandits with knapsacks}~\cite{badanidiyuru2013bandits, agrawal2016linear} control aggregate constraints, which is unsuitable for roundwise safety enforcement (see \S\ref{appx:related_work}).\vspace{0.3\baselineskip}\\
\textbf{Computational Complexity.} Existing efficient hard-enforcement methods rely on frequentists confidence sets for constraints, which induce $m$ expensive second-order conic (SOC) constraints during action selection~\cite{pacchiano2021stochastic, pacchiano2024contextual, amani2019linear, moradipari2021safe}. Most variants require solving $2d$ such problems per round, and further suffer from poor numerical conditioning. Our approach, $\scolts$, instead uses perturbations combined with scaling and resampling techniques, requiring only linear constraints per round while maintaining near-optimal guarantees. This scaling approach is related to \textsc{roful}~\cite{hutchinson2024directional} although this prior method utilises a the NP-hard method \textsc{doss} as a subroutine.\vspace{0.2\baselineskip}\\
Notably, no computationally efficient methods have previously been proposed for soft enforcement-the main point of comparison, \textsc{doss} needs $(2d)^{m+1}$ linear optimisations each round \cite{gangrade2024safe}. $rcolts$ resolves this gap by sampling $O(\log(t))$ perturbed programs each round. Under mild conditions (Slater's condition), one can further reduce to a single LP per round. See \S\ref{appx:related_work} for more details.\vspace{0.2\baselineskip}\\%
\textbf{Thompson Sampling ($\ts$).} Frequentist bounds for linear $\ts$ were first established by Agrawal \& Goyal~\cite{agrawal2013thompson} through an `unsaturation' approach, while Abeille \& Lazaric~\cite{abeille_lazaric} developed a related `global optimism' approach. Neither approach extends to SLBs due the per-round fluctuation of the perturbed constraints, and the ensuing variability of the `feasible regions' for each round (see \S\ref{appx:related_work} for more details). We overcome these challenges through our coupled noise design, ensuring frequent optimism, and a novel scaling trick to compare solutions across distinct feasible regions.\vspace{0.2\baselineskip}\\
The only existing sampling-based treatment of unknown constraints is due to Chen et al.~\cite{chen2022strategies} for multi-armed settings, who use posterior quantiles to enforce constraints. Although their method does not scale to continuous action sets, our resampling approach can be interpreted as an efficient, scalable analogue for simultaneously enforcing constraints and optimizing reward indices.%

\section{Problem Definition and Background}\label{sec:defi}%

\emph{Notation.~} For a vector $v$, $\|v\|$ denotes its $\ell_2$-norm. For a PSD matrix $M, \|v\|_M := \|M^{1/2} v\|$.  $\mathbb{S}^d$ is the unit sphere in $\mathbb{R}^d$. For a matrix $M$, $M^i$ is the $i$th row of $M$. $\onem$ is the all ones vector in $\mathbb{R}^m$. Also see \S\ref{app:glossary} for an extensive glossary of notation used in the paper.%

\textbf{Setup.} An instance of a SLB problem is defined by an objective $\theta_* \in \mathbb{R}^d,$ a constraint matrix $\Phi_* \in \mathbb{R}^{m \times d}$, constraint levels $\alpha \in \mathbb{R}^m$, a compact \emph{convex} domain $\mathcal{A} \subset \mathbb{R}^d,$ and $\delta \in (0,1)$. $\mathcal{A},\alpha,\delta$ are known to the learner, but $\theta_*$ and $\Phi_*$ are not. The program of interest is \( \max \theta_*^\top a \textrm{ s.t. } \Phi_* a \le \alpha, a \in \mathcal{A}, \) assumed to be feasible. $a_*$ denotes a(ny) maximiser of this program. The \emph{reward gap} of $a \in \mathcal{A}$ is $\Delta(a) := \theta_*^\top(a_* - a)$, and its \emph{safety margin} is $\Gamma(a) = \min_u(\alpha - \Phi_* a)^i_+$. For infeasible $a$, $\Gamma(a) = 0$, and $\Delta$ may be negative. We set $\mathcal{R}(a) = 1 + \nicefrac{\Delta(a)}{\Gamma(a)}$ if $\Gamma(a)>0$, and $\infty$ otherwise.%

\textbf{Play.} We index rounds by $t$. At each $t$, the learner picks $a_t \in \mathcal{A}$, and receives the feedback $R_t = \theta_*^\top a_t + w_t^R,$ and $S_t = \Phi_* a_t + w_t^S,$ where $w_t^R \in \mathbb{R}$ and $w_t^S\in \mathbb{R}^m$ are noise processes. $C_t$ denotes algorithmic randomness at round $t$. The historical filtration is $\hist_{t-1} := \sigma( \{(a_s, R_s, S_s, C_s)\}_{s < t}),$ and $\mathfrak{G}_t := \sigma( \hist_{t-1} \cup \{(a_t, C_t)\})$. The action $a_t$ must be adapted to $\sigma(\hist_{t-1} \cup \sigma(\{C_t\}))$.%

\textbf{The Soft Enforcement SLB problem} demands algorithms that ensure, with high probability, that both the metrics $\eff_T$ and $\saf_T$ (see (\ref{eqn:reg_and_risk_definitions}) grow sublinearly with $T.$\vspace{0.2\baselineskip}\\
\textbf{The Hard Enforcement SLB problem} demands algorithms that ensure, with high probability, that $\saf_T = 0$ and $\eff_T = o(T)$. This is enabled by a safe starting point $\asafe$ such that $\gsafe > 0.$%

\textbf{Standard Assumptions.} We assume the following standard conditions \cite[e.g.][]{abbasi2011improved} on the instance $(\theta_*, \Phi_*, \mathcal{A})$, and noise. All subsequent results only hold under these assumptions. \begin{itemize}[left = 0pt, wide, nosep, topsep = -.3\baselineskip, labelindent = 10pt]
    \item \emph{Boundedness}: $\|\theta_*\| \le 1,$ for each row $i$, $\|\Phi_*^i\| \le 1$, and $\mathcal{A} \subset \{ a : \|a\| \le 1\}$.
    \item \emph{SubGaussian noise}: $w_t := (w_t^R, (w_t^S)^\top)^\top$ is centred and $1$-subGaussian given $\mathfrak{G}_t$, i.e., \(\mathbb{E}[w_t|\mathfrak{G}_t] = 0,\) and \( \forall \lambda \in \mathbb{R}^{m+1}, \mathbb{E}[ \exp( \lambda^\top w_t)|\mathfrak{G}_t] \le \exp(\|\lambda^2\|/2).  \)
\end{itemize}
To simplify the form of our bounds, we also assume that $m/\delta = O(\mathrm{poly}(d))$ when stating theorems.%

\textbf{Background on Linear Regression.} The ($1$-)Regularised Least Squares estimates for $\theta_*,\Phi_*$ given the history $\hist_{t-1}$ are \[ %
\hat\theta_t = \argmin_{\hat\theta} \sum_{s < t} (\hat\theta^\top a_s - R_s)^2 + \|\hat\theta\|^2, \textit{ and } \hat\Phi_t = \argmin_{\hat\Phi} \sum_{s < t} \|\hat\Phi a_s - S_s\|^2 + \sum_i \|\hat\Phi^i\|^2. \] The standard \emph{confidence sets} \cite{abbasi2011improved} for $(\theta_*,\Phi_*)$ are  \[ %
\confset_t^\theta(\delta) = \{\ttheta : \|\ttheta - \hat\theta_t\|_{V_t} \le \omega_t(\delta)\}, \textrm{ and } \confset_t^\Phi(\delta) = \{\tPhi: \forall \textrm{ rows } i, \|\tPhi^i - \hat\Phi_t^i\|_{V_t} \le \omega_t(\delta)\}, \] where \( V_t := I + \sum_{s < t} a_sa_s^\top, \textit{ and } \omega_t(\delta) := 1 + \sqrt{\nicefrac12 \log((m+1)/\delta) + \nicefrac14 \log(\det V_t)}.\)
A key standard result states that these confidence sets are \emph{consistent} \cite{abbasi2011improved}.
\begin{lemma}\label{lemma:online_linear_regression}
    Let the \emph{consistency event at time $t$} be $\con_t(\delta) := \{ \theta_* \in \confset_t^\theta(\delta), \Phi_* \in \confset_t^\Phi\},$ and let $\con(\delta) := \bigcap_{t \ge 1} \con_t(\delta).$ Under the standard assumptions, for all $\delta \in (0,1),$ $\mathbb{P}(\con(\delta)) \ge 1-\delta$.%
\end{lemma}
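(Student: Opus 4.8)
The plan is to reduce the vector-valued regression problem to the scalar case and then invoke the standard self-normalized martingale concentration bound of Abbasi-Yadkori, Pál and Szepesvári directly. First I would recall the scalar result: for a $1$-subGaussian (conditionally on $\mathfrak{G}_s$) scalar noise sequence $\{\eta_s\}$, with $V_t = I + \sum_{s<t} a_s a_s^\top$ and $\hat\mu_t$ the $1$-regularised least squares estimate of a ground-truth $\mu_*$ with $\|\mu_*\|\le 1$ and $\|a_s\|\le 1$, one has
\[
\PP\Bigl(\exists t \ge 1 : \|\hat\mu_t - \mu_*\|_{V_t} > 1 + \sqrt{\tfrac12\log(1/\delta') + \tfrac14\log\det V_t}\Bigr) \le \delta'.
\]
This is exactly Theorem 2 of \cite{abbasi2011improved} after absorbing the regularisation bias $\|\mu_*\|$ into the constant $1$ and simplifying the determinant term; crucially, the bound holds uniformly over all $t$, which is what lets us pass to $\con(\delta) = \bigcap_t \con_t(\delta)$ without an extra union bound over rounds.

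Next I would apply this $m+1$ times, once to each scalar regression problem embedded in the SLB feedback. The objective feedback $R_s = \theta_*^\top a_s + w_s^R$ is a scalar linear regression with noise $w_s^R$, giving the estimate $\hat\theta_t$ and the event $\{\theta_* \in \confset_t^\theta(\delta/(m+1))\}$ with failure probability at most $\delta/(m+1)$. Similarly, the $i$th coordinate of the constraint feedback, $S_s^i = (\Phi_*^i)^\top a_s + (w_s^S)^i$, is a scalar regression whose $1$-regularised least-squares solution is precisely the $i$th row $\hat\Phi_t^i$ (the row-wise decoupling of the objective $\sum_s \|\hat\Phi a_s - S_s\|^2 + \sum_i\|\hat\Phi^i\|^2 = \sum_i(\sum_s ((\hat\Phi^i)^\top a_s - S_s^i)^2 + \|\hat\Phi^i\|^2)$ is immediate), and $\|\Phi_*^i\|\le 1$ by the boundedness assumption. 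So each row $i$ yields $\{\Phi_*^i \in \confset\text{-row}_i\}$ with failure probability $\delta/(m+1)$. One subtlety to check: the marginal subGaussianity of each scalar noise $w_s^R$ and each $(w_s^S)^i$ given $\mathfrak{G}_s$ follows from the joint $1$-subGaussianity of $w_s = (w_s^R, (w_s^S)^\top)^\top$ by choosing $\lambda$ to be a scaled standard basis vector, so each scalar problem indeed satisfies the hypotheses of the scalar theorem.

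Finally I would take a union bound over the $m+1$ scalar failure events. Since $\con_t(\delta) = \{\theta_* \in \confset_t^\theta(\delta)\} \cap \bigcap_i \{\|\hat\Phi_t^i - \Phi_*^i\|_{V_t} \le \omega_t(\delta)\}$ and each scalar confidence radius with parameter $\delta/(m+1)$ is $1 + \sqrt{\tfrac12\log((m+1)/\delta) + \tfrac14\log\det V_t} = \omega_t(\delta)$, the complement of $\con(\delta)$ is contained in the union of the $m+1$ uniform-in-$t$ failure events, each of probability at most $\delta/(m+1)$, so $\PP(\con(\delta)^c) \le \delta$, i.e. $\PP(\con(\delta)) \ge 1-\delta$. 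I do not expect any genuine obstacle here — the only mild care-points are (i) confirming that the definition of $\omega_t(\delta)$ in the excerpt already has the $(m+1)$ baked into the $\log((m+1)/\delta)$ term so that no further inflation of the radius is needed, and (ii) making sure the uniform-in-$t$ (anytime) form of the scalar bound is used so the intersection over all $t$ costs nothing extra.
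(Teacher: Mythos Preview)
Your proposal is correct and matches the intended derivation. The paper does not supply its own proof of this lemma but merely cites \cite{abbasi2011improved} as a standard result; your plan of applying the scalar self-normalized bound of that reference to each of the $m+1$ scalar regressions (the objective and each constraint row), then union-bounding, is exactly how the result is obtained, and you have correctly observed that the $(m+1)$ factor is already built into the definition of $\omega_t(\delta)$.
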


\section{The Constrained Linear Thompson Sampling Approach}\label{sec:colts_approach}

We begin by describing the $\colts$ framework. In the frequentist viewpoint, $\ts$ is a randomised method for bandits that, at each $t$, perturbs an estimate of the unknown objective, in a manner sensitive to the historical information $\hist_{t-1}$, and then picks actions by optimising this perturbed objective. 

Naturally, then, we will perturb the estimates $\hat\theta_t, \hat\Phi_t$, for which we use a law $\mu$ on $\mathbb{R}^{1\times d} \times \mathbb{R}^{m \times d}$. For $(\eta, H) \sim \mu,$ independent of $\hist_{t-1},$ we define the perturbed parameters \begin{equation}\label{eqn:perturbed_param_definition} %
\ttheta(\eta, t)^\top := \hat\theta_t^\top + \omega_t(\delta) \eta V_t^{-1/2} \textrm{ and } \tPhi(H,t) := \hat\Phi_t + \omega_t H V_t^{-1/2}. \end{equation} Notice that these perturbations are aligned with $\hist_{t-1}$ only via the scaling by ${\omega_t(\delta)V_t^{-1/2}.}$ The underlying thesis of the $\colts$ approach is that for well-chosen $\mu$, the action \begin{equation} \label{eqn:generic_a_t_definition} %
a(\eta, H, t) = \argmax \{ \ttheta(\eta, t)^\top a : \tPhi(H,t) a \le \alpha, a \in \mathcal{A}\},\end{equation} if it exists, is a good choice to play, in that it is either underexplored, or nearly safe and optimal. Here we abuse notation, and treat $\argmax$ as a point function that (measurably) picks any one optimal solution. Two major issues arise with this view. Firstly, the set $\smash{\mathcal{A} \cap \{\tPhi(H,t) a \le \alpha\}}$ may be empty for certain $H,$ meaning $a(\eta,H,t)$ need not exist. Secondly, in hard enforcement, $a(\eta, H,t)$ need not actually be safe, and so cannot directly be used. Thus, the main questions are 1) what $\mu$ we should use, 2) how we should augment the $\colts$ principle to design effective algorithms, and 3) how we can analyse these algorithms to prove effectiveness. These questions occupy the rest of this paper.

Before proceeding, however, we observe that if $\eta$ or $H$ are very large, then they will `wash out' the `signal' in ${\hat\theta_t}$ and ${\hat\Phi_t}$, meaning that their size must be contained. We state this as a generic condition.
\begin{definition} \label{def:bconc}
    Let $B: (0,1] \to \mathbb{R}_{\ge 0}$ be a nondecreasing map. A law $\mu$ on $\mathbb{R}^{1\times d} \times \mathbb{R}^{m\times d}$ is said to satisfy $B$-concentration if \( \forall \xi \in (0,1], \mu\left(\left\{ \max( \|\eta\|, \max_{i \in [1:m]} \|H^i\|) \ge B(\xi) \right\}\right) \le \xi. \)%
\end{definition}
As an example, if marginally each $\eta, H^i$ were normal, then $B(\xi) = \sqrt{d \log((m+1)/\xi)}$. Henceforth, we will assume that $\mu$ satisfies $B$-concentration for some map $B,$ and define quantities in terms of this $B$. This condition has the following useful consequence (\S\ref{appx:analysis}).
\begin{lemma}\label{lemma:basic_noise_concentration_and_cauchy-schwarz}
    For $B: (0,1] \to \mathbb{R}_{\ge 0}$, and $t \in \mathbb{N},$ define $B_t = 1 + \max(1, B(\delta_t)),$ and $M_t(a) = {B_t \omega_t(\delta)\acnorm}$, where $\delta_t = \nicefrac{\delta}{t(t+1)}$. For all $t$, let $(\eta_t, H_t)$ be drawn from $\mu$ independently of $\hist_{t-1}$ at time $t$. If $\mu$ satisfies $B$-concentration, then with probability at least $1-2\delta,$ \[ %
    \forall t, a , \max \left( | (\theta_* - \ttheta(\eta_t,t))^\top a|, \max_i | (\tPhi(H_t,t)^i - \Phi_*^i) a |\right) \le M_t(a). \] Further, $\sum_{t \le T} M_t(a_t) \le B_T \omega_T \cdot O(\sqrt{d T}) \le B_T \tO(\sqrt{d^2 T}).$
\end{lemma}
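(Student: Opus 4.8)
The plan is to establish the first display as a high-probability bound that is uniform in $(t,a)$, obtained by intersecting the regression consistency event with a noise-concentration event, and then the second display as an essentially deterministic elliptical-potential estimate. For the first part I would work on $\con(\delta)\cap\mathcal E$, where $\mathcal E:=\bigcap_{t\ge1}\{\max(\|\eta_t\|,\max_i\|H_t^i\|)\le\max(1,B(\delta_t))\}$. Lemma~\ref{lemma:online_linear_regression} gives $\mathbb P(\con(\delta))\ge1-\delta$. For $\mathcal E$, apply the $B$-concentration of $\mu$ at level $\xi=\delta_t$ to the marginal law of $(\eta_t,H_t)$: the complementary event at round $t$ has probability at most $\delta_t=\delta/(t(t+1))$, and since $\sum_{t\ge1}\delta_t=\delta\sum_{t\ge1}(\tfrac1t-\tfrac1{t+1})=\delta$, a union bound gives $\mathbb P(\mathcal E)\ge1-\delta$, hence $\mathbb P(\con(\delta)\cap\mathcal E)\ge1-2\delta$.

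On this event, fix $t$ and $a$ and decompose $(\theta_*-\ttheta(\eta_t,t))^\top a=(\theta_*-\hat\theta_t)^\top a-\omega_t(\delta)\,\eta_tV_t^{-1/2}a$. The generalized Cauchy--Schwarz inequality $|u^\top v|\le\|u\|_{V_t}\|v\|_{V_t^{-1}}$ bounds the first term by $\|\theta_*-\hat\theta_t\|_{V_t}\|a\|_{V_t^{-1}}\le\omega_t(\delta)\|a\|_{V_t^{-1}}$ on $\con_t(\delta)$, and ordinary Cauchy--Schwarz bounds the second by $\omega_t(\delta)\|\eta_t\|\,\|V_t^{-1/2}a\|=\omega_t(\delta)\|\eta_t\|\,\|a\|_{V_t^{-1}}$; adding and using $1+\|\eta_t\|\le B_t$ on $\mathcal E$ gives $|(\theta_*-\ttheta(\eta_t,t))^\top a|\le\omega_t(\delta)B_t\|a\|_{V_t^{-1}}=M_t(a)$. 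The same computation applied row-by-row, using $\tPhi(H_t,t)^i-\Phi_*^i=(\hat\Phi_t^i-\Phi_*^i)+\omega_tH_t^iV_t^{-1/2}$ and $\|\hat\Phi_t^i-\Phi_*^i\|_{V_t}\le\omega_t(\delta)$ on $\con_t(\delta)$, gives $|(\tPhi(H_t,t)^i-\Phi_*^i)a|\le M_t(a)$ for every $i$, which is the first display. The second display holds surely, since $M_t(a_t)$ depends only on the realized actions: observe $\|a_t\|_{V_t^{-1}}^2=a_t^\top V_t^{-1}a_t\le\|a_t\|^2\le1$ because $V_t\succeq I$ and $\mathcal A\subset\{\|a\|\le1\}$; since $t\mapsto B_t$ and $t\mapsto\omega_t(\delta)$ are nondecreasing, $\sum_{t\le T}M_t(a_t)\le B_T\omega_T(\delta)\sum_{t\le T}\|a_t\|_{V_t^{-1}}$. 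The standard elliptical potential lemma~\cite{abbasi2011improved} gives $\sum_{t\le T}\|a_t\|_{V_t^{-1}}^2\le2\log(\det V_T/\det V_0)=O(d\log T)$, so Cauchy--Schwarz over $t$ yields $\sum_{t\le T}\|a_t\|_{V_t^{-1}}\le\sqrt{T\cdot O(d\log T)}=O(\sqrt{dT\log T})$. Combining and recalling that $\omega_T(\delta)=\tO(\sqrt d)$ under the assumption $m/\delta=O(\mathrm{poly}(d))$, and absorbing the logarithmic factor, gives $\sum_{t\le T}M_t(a_t)\le B_T\omega_T(\delta)\cdot O(\sqrt{dT})\le B_T\tO(\sqrt{d^2T})$.

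There is no deep obstacle here; the care is entirely in the bookkeeping. The noise bound must hold \emph{simultaneously} over all rounds $t$, which is exactly why $\delta_t$ is chosen so that $\sum_t\delta_t=\delta$; $B$-concentration is applied to the marginal of each $(\eta_t,H_t)$, and joint independence across $t$ is unnecessary since the union bound is indifferent to the dependence structure; and one must verify $\|a_t\|_{V_t^{-1}}\le1$ (using boundedness of $\mathcal A$) so the potential lemma applies without the usual $\min(1,\cdot)$ truncation. I would also confirm the monotonicity of $B_t$ and $\omega_t(\delta)$ used to pull the scale factor out to time $T$; if $B_t$ is not monotone one simply replaces $B_T$ by $\max_{t\le T}B_t$, which does not affect the stated bound.
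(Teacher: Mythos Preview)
Your proposal is correct and follows essentially the same approach as the paper: intersect the consistency event $\con(\delta)$ with the noise-norm event (union bounded to probability $\ge 1-\delta$ via the telescoping sum $\sum_t \delta_t = \delta$), then apply the triangle inequality and Cauchy--Schwarz to bound $|(\theta_* - \ttheta_t)^\top a|$ and $|(\tPhi_t^i - \Phi_*^i)a|$ by $(1 + \|\eta_t\|)\omega_t\|a\|_{V_t^{-1}} \le M_t(a)$. The only cosmetic difference is that the paper first bounds $\|\ttheta_t - \theta_*\|_{V_t}$ in norm and then invokes its Cauchy--Schwarz lemma, whereas you decompose the inner product directly; you also spell out the elliptical-potential argument for the summation bound, which the paper leaves implicit.
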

\section{Hard Constraint Enforcement via Scaling-COLTS}\label{sec:scolts}

\begin{wrapfigure}[13]{r}{.47\linewidth}
\vspace{-1.7\baselineskip}
\begin{minipage}{\linewidth}
    \begin{algorithm}[H]
   \caption{Scaling-$\colts$ ($\scolts(\mu,\delta)$)}
   \label{alg:scolts}
\begin{algorithmic}[1]
   \State \textbf{Input}: $\asafe, \Gamma_0 \in [\gsafe/2, \gsafe]$.
   \For{$t = 1, 2, \dots$}
   \State Draw $(\eta_t, H_t) \sim \mu$
   \If{$M_t(\asafe) > \Gamma_0/3$ OR $a(\eta_t, H_t,t)$ does not exist} 
        \State $a_t\gets \asafe$.
    \Else
    \State $b_t \gets a(\eta_t,H_t,t)$
    \State Compute $a_t$ as in (\ref{eqn:scolts_at_defi}).
    \EndIf
    \State Play $a_t$, observe $R_t,S_t,$ update $\hist_t$.
   \EndFor
\end{algorithmic} 
\end{algorithm}
\end{minipage}
\end{wrapfigure}
We turn to the problem of hard constraint enforcement of minimising $\eff_T$ while ensuring that w.h.p., $\saf_T = 0$, using a safe action $\asafe$ such that $\Gamma(\asafe) > 0$. We will extend $\colts$ with a `scaling heuristic,' that was first proposed in the context of SLBs by Hutchinson et al.~\cite{hutchinson2024directional}, who used it to design a (inefficent) method \textsc{roful}.%

To begin, our method, $\scolts$, draws noise $(\eta_t, H_t)\sim \mu$, and computes the preliminary action $b_t := a(\eta_t,H_t,t)$, assuming for now that this exists. As argued in \S\ref{sec:colts_approach}, this action $b_t$ either has low-regret, or is informative. Of course, this $b_t$ need not be safe---we only know via Lemma~\ref{lemma:basic_noise_concentration_and_cauchy-schwarz} that $\Phi_* b_t \le \alpha + M_t(b_t)\onem$---and so cannot be used for hard enforcement. However, the action $\asafe$ is safe, with a large slack of at least $\gsafe$ in each constraint. Via linearity, and the convexity of $\mathcal{A},$ this means we can \emph{scale back} $b_t$ towards $\asafe$ to find a safe action, i.e., play $a_t$ of the form $(1-\rho_t)\asafe + \rho_t b_t$ for some $\rho_t \in [0,1]$. If $\rho_t$ is not too small, this maintains fidelity with respect to the informative direction $b_t$, while retaining safety. %

Note that if we knew that margin $\gsafe$ of $\asafe$, then since \[ %
\Phi_* (\rho b_t + (1-\rho)\asafe) \le \alpha + (\rho M_t(b_t) - (1-\rho)\gsafe) \onem, \] we could ensure that $1- \rho_t \le M_t(b_t)/\gsafe,$ which vanishes for small $M_t(b_t)$. Of course, we do not per se know the value of $\gsafe$. However, this can be estimated by repeatedly playing $\asafe$, and maintaining anytime bounds on its risk via a law of iterated logarithms. This results in an estimated  value $\Gamma_0$ such that ${\gsafe}/{2} \le \Gamma_0 \le \gsafe$ using $\tO(\gsafe^{-2})$ rounds. We leave a detailed account of this estimation to \S\ref{appx:scolts_sampling_asafe}, and henceforth just assume that we know such a value $\Gamma_0$.%

Define $\ttheta_t = \ttheta(\eta_t, t)$ and $\tPhi_t = \tPhi(H_t, t)$. A critical observation for $\scolts$ is that if $M_t(\asafe) \le \Gamma_0/3,$ then the action $b_t$ exists with high probability. Indeed, in this case, \[ %
\tPhi_t \asafe \le \Phi_*\asafe + M_t(\asafe) \onem \le \alpha - (\gsafe - \nicefrac{\Gamma_0}{3})\onem \le \alpha - \nicefrac{2\gsafe}{3}\onem.\] Thus, the constraints induced by $\tPhi_t$ are feasible (and so $b_t$ exists). To play a safe action, we set \begin{align}  %
    a_t = \mathfrak{a}_t(\rho_t), \textit{ where } \mathfrak{a}_t(&\rho) := (1- \rho) \asafe + \rho b_t, \textit{ and } \label{eqn:scolts_at_defi} \\
    \rho_t := \max\{ \rho \in [0,1]: \hat\Phi_t &\mathfrak{a}_t(\rho) + B_t^{-1}M_t( \mathfrak{a}_t(\rho)) \onem \le \alpha \},\notag  
\end{align}
where $B_t^{-1}M_t(a) = \omega_t \|a\|_{V_t^{-1}}$ is used since $\max_i \|\hat\Phi_t^i - \Phi_*^i\| \le \omega_t$ whp. Now, $M_t(\asafe) \le\Gamma_0/3$ ensures that $1-\rho_t \le 6M_t(b_t)/\gsafe,$ giving roughly the same fidelity as if we knew $\gsafe$. %

This leaves the law $\mu$ undetermined. In \S\ref{sec:unsaturation_and_lookback}, we first describe an \emph{unsaturation} condition on $\mu$ that induces low regret, and then in  \S\ref{sec:coupled_noise_design}, we provide a general construction of unsaturated laws using a local analysis at $a_*$. This operationalises the $\scolts$ design, with regret bounds described in \S\ref{sec:scolts_bounds}.

\subsection{Analysis of \texorpdfstring{$\scolts$}{Scaling COLTS}: Unsaturation and Looking-Back}\label{sec:unsaturation_and_lookback}

The scaling above direclty ensures the safety of $a_t$. We now present the main ideas behind a ${\tO(\sqrt{T})}$ regret bound for $\scolts$, leaving detailed proofs to \S\ref{appx:scolts_analysis}. Specifically, we describe how a \emph{look back} method operationalised with an \emph{unsaturation event} yields low-regret, using a \emph{scaling} strategy to handle shifting constraints. Finally, we finally contrast our analysis with prior studies.%

\textbf{Unsaturation.} Following \cite{agrawal2013thompson}, we say that an action $a$ is \emph{unsaturated} at time $t$ if $\Delta(a) \le M_t(a)$. Now, if $b_t$ is unsaturated, then (as we asserted in \S\ref{sec:colts_approach}) it is either informative (large $M_t(b_t)$) or low regret (small $M_t(b_t),$ and so small $\Delta(b_t)$). In fact, if for all $t$, $b_t$ was unsaturated and $b_t = a_t$, then we would already get a regret bound using Lemma~\ref{lemma:basic_noise_concentration_and_cauchy-schwarz}: $\sum_t \Delta(a_t) = \sum \Delta(b_t) \le \sum_t M_t(b_t) = \sum M_t(a_t) = \tO(\sqrt{T}).$ But, $b_t$ need not always be unsaturated (and usually is $\neq a_t$), and we must ensure that enough unsaturated $b_t$ occur, motivating the following definition.%
\begin{definition}\label{def:unsaturation} 
    Let $\mu$ be a $B$-concentrated law. Define the \emph{unsaturation event}  at time $t$ as \[ \setlength{\abovedisplayskip}{.4\baselineskip} \setlength{\belowdisplayskip}{.3\baselineskip} \mathsf{U}_t(\delta) := \{ (\eta, H) : a(\eta, H,t) \textrm{ exists, and } \Delta( a(\eta, H,t)) \le M_t( a(\eta, H ,t)).\] For $\chi \in (0,1],$ we say that $\mu$-satisfies $\chi$-unsaturation if for all $t$ such that $\delta/(t(t+1)) \le \chi/2,$ \[ \setlength{\abovedisplayskip}{.4\baselineskip} \setlength{\belowdisplayskip}{-\baselineskip} \mathbb{P}[ \mathsf{U}_t(\delta) |\hist_{t-1}] \indi_{\consistency} =  \mathbb{E}[ \mu(\mathsf{U}_t(\delta)) |\hist_{t-1}] \indi_{\consistency} \ge (\chi/2) \indi_{\consistency}. \]
\end{definition}
In words, at all $t$, given the past, $b_t$ is unsaturated with chance at least $\chi/2$.%

\textbf{Handing the scaling.} Our action $a_t$ equals $\rho_t b_t + (1-\rho_t)\asafe \neq b_t$. But, by linearity, \[ %
\Delta(a_t) = \rho_t \Delta(b_t) + (1-\rho_t) \Delta(\asafe).\] Thus, it suffices for us to control each of these terms. To bound these, we use the following result 
\begin{lemma}\label{lemma:rho_bounds_main_text}
    At any $t$ such that $M_t(\asafe) \le \Gamma_0/3,$ it holds that \[ (1-\rho_t) \le 6 \frac{M_t(a_t)}{\gsafe} \textit{ and } \rho_t M_t(b_t) \le 2 M_t(a_t). \]
\end{lemma}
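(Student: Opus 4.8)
The plan is to argue entirely on the event $\con(\delta)$ intersected with the good event of Lemma~\ref{lemma:basic_noise_concentration_and_cauchy-schwarz}; on this event, when $M_t(\asafe)\le\Gamma_0/3$ the preliminary action $b_t=a(\eta_t,H_t,t)$ exists (as in the main text), so $a_t=\mathfrak a_t(\rho_t)$ is well defined. The whole argument rests on three per-row linear facts valid there: (a) $\Phi_*^i\asafe\le\alpha^i-\gsafe$, since $\asafe$ has safety margin $\gsafe$; (b) $\Phi_*^i b_t\le\alpha^i+M_t(b_t)$, since $\tPhi_t b_t\le\alpha$ and $|(\tPhi_t^i-\Phi_*^i)b_t|\le M_t(b_t)$ by Lemma~\ref{lemma:basic_noise_concentration_and_cauchy-schwarz}; and (c) $|(\hat\Phi_t^i-\Phi_*^i)a|\le\omega_t(\delta)\|a\|_{V_t^{-1}}=B_t^{-1}M_t(a)$ for every $a$, by Cauchy--Schwarz using $\Phi_*\in\confset_t^\Phi(\delta)$. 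I will also use freely that $a\mapsto M_t(a)$ is a scalar multiple of the seminorm $\|\cdot\|_{V_t^{-1}}$ (so both the triangle and the reverse-triangle inequalities apply to it), that $M_t(\asafe)\le\Gamma_0/3\le\gsafe/3$ under the hypothesis, and that $B_t=1+\max(1,B(\delta_t))\ge2$, hence $B_t^{-1}\le\tfrac12$.

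First I would pin down the structure of $\rho_t$. For each row $i$ the map $\rho\mapsto g_i(\rho):=\hat\Phi_t^i\mathfrak a_t(\rho)+B_t^{-1}M_t(\mathfrak a_t(\rho))$ is convex and continuous in $\rho$ (an affine function plus a norm of an affine function), so $F:=\{\rho\in[0,1]:g_i(\rho)\le\alpha^i\ \forall i\}$ is a closed interval. By (a) and (c), $g_i(0)\le\Phi_*^i\asafe+2B_t^{-1}M_t(\asafe)\le\alpha^i-\gsafe+\tfrac23\gsafe<\alpha^i$, so $0$ is strictly feasible and $F=[0,\rho_t]$. If $\rho_t=1$ both claims are immediate ($1-\rho_t=0$, and $a_t=b_t$ so $\rho_t M_t(b_t)=M_t(b_t)=M_t(a_t)$). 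So assume $\rho_t<1$: taking a sequence $\rho\downarrow\rho_t$ of infeasible points and using that there are finitely many rows, a continuity/pigeonhole argument produces a single row $i^\star$ that is active at $\rho_t$, i.e.\ $\hat\Phi_t^{i^\star}a_t+B_t^{-1}M_t(a_t)=\alpha^{i^\star}$.

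The core estimate then comes from expanding this tight constraint. By (c), $\alpha^{i^\star}\le\Phi_*^{i^\star}a_t+2B_t^{-1}M_t(a_t)$; by linearity of $\Phi_*$ together with (a) and (b), $\Phi_*^{i^\star}a_t=(1-\rho_t)\Phi_*^{i^\star}\asafe+\rho_t\Phi_*^{i^\star}b_t\le\alpha^{i^\star}-(1-\rho_t)\gsafe+\rho_t M_t(b_t)$. Subtracting $\alpha^{i^\star}$ and rearranging gives $(1-\rho_t)\gsafe\le\rho_t M_t(b_t)+2B_t^{-1}M_t(a_t)\le\rho_t M_t(b_t)+M_t(a_t)$, using $B_t^{-1}\le\tfrac12$. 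Separately, the reverse triangle inequality for $\|\cdot\|_{V_t^{-1}}$ yields $M_t(a_t)=M_t\bigl((1-\rho_t)\asafe+\rho_t b_t\bigr)\ge\rho_t M_t(b_t)-(1-\rho_t)M_t(\asafe)$, i.e.\ $\rho_t M_t(b_t)\le M_t(a_t)+(1-\rho_t)M_t(\asafe)\le M_t(a_t)+\tfrac13(1-\rho_t)\gsafe$. Substituting this into the previous display and collecting the $(1-\rho_t)\gsafe$ terms gives $\tfrac23(1-\rho_t)\gsafe\le2M_t(a_t)$, hence $1-\rho_t\le3M_t(a_t)/\gsafe\le6M_t(a_t)/\gsafe$; feeding $\tfrac13(1-\rho_t)\gsafe\le M_t(a_t)$ back into the bound on $\rho_t M_t(b_t)$ gives $\rho_t M_t(b_t)\le2M_t(a_t)$.

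I expect the only genuinely delicate step to be the structural claim that some row is exactly active at $\rho_t$ when $\rho_t<1$: this needs the convexity of each $g_i$ (so $F$ is an interval with $0$ strictly feasible) followed by a compactness/pigeonhole argument to handle that the active row may change along a sequence approaching $\rho_t$. Everything after that is bookkeeping with the triangle and reverse-triangle inequalities for $\|\cdot\|_{V_t^{-1}}$, where the two quantitative savings that make the stated constants come out are precisely $M_t(\asafe)\le\gsafe/3$ and $B_t^{-1}\le\tfrac12$.
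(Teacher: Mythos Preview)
Your proof is correct, but it takes a genuinely different route from the paper's. The paper (in the more detailed Lemma~\ref{lemma:rho_bounds}) exhibits two explicit feasible values $\rho_0$ for the scaling constraint in (\ref{eqn:scolts_at_defi}), thereby obtaining the a priori lower bounds $\rho_t\ge \gsafe/(\gsafe+3M_t(b_t))$ and $\rho_t\ge 2M_t(\asafe)/(2M_t(\asafe)+M_t(b_t))$, and then manipulates these together with the triangle inequality $\rho_t M_t(b_t)\le M_t(a_t)+(1-\rho_t)M_t(\asafe)$ to extract the three stated consequences. You instead argue from the active-constraint side: when $\rho_t<1$, some row is tight at $a_t$, and expanding that equality via the true parameters yields $(1-\rho_t)\gsafe\le \rho_t M_t(b_t)+2B_t^{-1}M_t(a_t)$ directly, which you then close against the same triangle-inequality bound on $\rho_t M_t(b_t)$. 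Your approach avoids ever writing down an explicit feasible $\rho_0$, and in fact delivers the sharper constant $1-\rho_t\le 3M_t(a_t)/\gsafe$ (which you then relax to $6$). The paper's route has the minor advantage of also producing the intermediate bound $(1-\rho_t)M_t(\asafe)\le M_t(a_t)$ and the clean closed-form lower bounds on $\rho_t$, which are reused elsewhere in the look-back analysis; your route is more self-contained for this lemma alone. The ``delicate'' step you flag (existence of an active row) is in fact immediate from continuity of the finitely many convex $g_i$ and maximality of $\rho_t$, and needs no pigeonhole or sequence argument.
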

This Lemma is directly implied by a slightly more detailed result, Lemma~\ref{lemma:rho_bounds} in \S\ref{appx:look_back_proof}. By the first conclusion above and Lemma~\ref{lemma:basic_noise_concentration_and_cauchy-schwarz}, we have \[ \sum (1-\rho_t)\Delta(\asafe) = \frac{\Delta(\asafe)}{\gsafe} \sum M_t(a_t) = \tO(\mathcal{R}(\asafe) \sqrt{d^3 T}).\] Thus, we only need to analyse $\sum \rho_t \Delta(b_t)$. Notice that if $b_t$ was always saturated, then via the control on $\rho_t M_t(b_t)$ above, we would also get regret control through the relation $\sum \rho_t \Delta(b_t) \le \sum \rho_t M_t(b_t) = O(\sum M_t(a_t))$.%

\textbf{Look Back Method.} In reality, $b_t$ is not always unsaturated. To handle this, we `look back' at the \emph{last time} $s < t$ that a $b_s$ was unsaturated. Concretely, define \[ %
\tau(t) := \inf \{ s < t: M_s(\asafe) \le \Gamma_0/3, \Delta(b_s) \le M_s(b_s)\}, \quad \inf\emptyset := 0. \] For the sake of notational brevity, we will often write $\tau$ instead of $\tau(t)$ below. Our basic approach is to bound $\Delta(b_t)$ in terms of $\Delta(b_{\tau})$, the idea being that since $b_\tau$ is unsaturated, $ \Delta(b_\tau) \le M_\tau(b_\tau),$ and as long as unsaturation is frequent, this $M_\tau(b_\tau)$ should not be much larger than $M_t(b_t),$ yielding regret control along the lines of the discussion above. %

\newcommand{\bbttLITE}{\bar{b}_{\tau \to t}}

A major issue in this program, however, is that ${b_{\tau}}$ may not be feasible for the perturbed constraints $\smash{\tPhi_t}$. This means that inequalities like $\smash{\ttheta_t^\top( b_{\tau} - b_t)} \le 0$ may be false, preventing us from using the basic fact that $b_t$ optimises $\smash{\ttheta_t}$ over the perturbed feasible set $\smash{\mathcal{A} \cap \{\tPhi_t a \le \alpha\}}$. To address this, we introduce a \emph{second, analysis-only, scaling step}. Concretely, for $s < t,$ let \[%
\bbst := \ssst b_{s} + (1-\ssst) \asafe, \textit{ where } \ssst = \frac{\Gamma_0/3}{\Gamma_0/3 + M_t(b_s) + M_t(b_t)}.\] Since $M_t(\asafe) \le \Gamma_0/3,$ using Lemma~\ref{lemma:basic_noise_concentration_and_cauchy-schwarz} twice yields \( \tPhi_s b_s \le \alpha \implies \tPhi_t \bbst \le \alpha.\) Now, setting $s = \tau(t)$ and using linearity, we write 
\[ %
\rho_t \Delta(b_t) = \rho_t \Delta(\bbttLITE) + \rho_t \ttheta_t^\top(\bbttLITE - b_t) + \rho_t (\theta_* - \ttheta_t)^\top (\bbttLITE - b_t).\]

\newcommand{\ssttLITE}{\sigma_{\tau \to t}}
Since $\bbttLITE$ is feasible, and $b_t$ optimal, for ${(\ttheta_t, \tPhi_t)},$ the second term is $\le 0$. Via Lemma~\ref{lemma:basic_noise_concentration_and_cauchy-schwarz}, Lemma~\ref{lemma:rho_bounds_main_text} and the triangle inequality, the final term is bounded as \[\rho_t (\theta_* - \ttheta_t)^\top (\bbttLITE - b_t) \le \rho_t M_t(\bbttLITE) + \rho_t M_t(b_t) \le \ssttLITE M_t(b_\tau) + (1-\ssttLITE) M_t(\asafe) +2 M_t(a_t), \] 
\begin{wrapfigure}[16]{r}{.35\linewidth}
\centering
\vspace{-1.2\baselineskip}
\includegraphics[width = \linewidth]{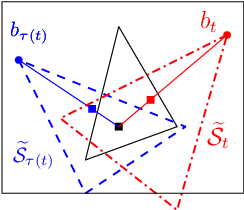}\vspace{-\baselineskip}
\caption{\footnotesize A schematic of the look-back analysis. $\smash{\tilde{\mathcal{S}}}$ are the perturbed feasible regions. $a_t$ (red box) is a mixture of $b_t$ and $\asafe$ (black box). $\smash{b_{\tau(t)}} \not\in \smash{\tilde{\mathcal{S}}_t}$, and we instead mix it with $\asafe$ to produce $\smash{\bbtt} \in {\tilde{\mathcal{S}}_t}$ (blue box).}
\end{wrapfigure}
Finally, we use the saturation of $b_\tau$ to observe that \begin{align*} %
\Delta(\bbttLITE) &= \ssttLITE \Delta(b_{\tau}) +  (1-\ssttLITE) \Delta(\asafe) \\ & \le \ssttLITE M_{\tau}(b_{\tau}) + (1-\ssttLITE) \Delta(\asafe). \end{align*} Thus, this calculation bounds $\Delta(a_t)$ by $M_t(a_t)$ and terms depending on $\ssttLITE,M_t(b_{\tau}),$ and $M_\tau(b_\tau)$. To handle these terms, we argue that, roughly speaking, $\ssttLITE$ behaves essentially like $\rho_\tau$. This, coupled with $M_\tau(\asafe) \le \Gamma_0/3$ lets us essentially reuse Lemma~\ref{lemma:rho_bounds_main_text} to show that $\ssttLITE M_\tau(b_\tau) \le 2M_\tau(a_\tau)$ and $(1-\ssttLITE) \gsafe \le 6 M_\tau(a_\tau).$ This yields the following key bound, proved in \S\ref{appx:look_back_proof}.%
\begin{lemma}\label{lemma:look_back_bound}
    Define $M_0(a) = \Delta(a_0) = B_0 = \omega_0(\delta) = 1$, and for $s < t,$ set $J_{s \to t} = 1 + \nicefrac{B_t \omega_t}{B_s \omega_s}.$ If $\mu$ is $B$-concentrated, then with probability at least $1-3\delta,$ for all $t$ with $M_t(\asafe) \le \Gamma_0/3,$ 
    \[ %
    \Delta(a_t) \le 6 \mathcal{R}(\asafe) \left( M_t(a_t) + J_{\tau(t) \to t} M_{\tau(t)}(a_{\tau(t)}) \right).\]
\end{lemma}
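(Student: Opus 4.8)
The plan is to run the whole argument on the intersection of the consistency event $\con(\delta)$ of Lemma~\ref{lemma:online_linear_regression} and the probability-$(1-2\delta)$ event of Lemma~\ref{lemma:basic_noise_concentration_and_cauchy-schwarz}; by a union bound this event has probability at least $1-3\delta$, and on it everything below is deterministic. Fix any round $t$ with $M_t(\asafe)\le\Gamma_0/3$. As observed in \S\ref{sec:scolts}, this forces $\tPhi_t\asafe\le\Phi_*\asafe+M_t(\asafe)\onem\le\alpha-(\gsafe-\Gamma_0/3)\onem<\alpha$, so $\asafe$ is perturbed-feasible at round $t$; hence $b_t=a(\eta_t,H_t,t)$ exists, the algorithm plays $a_t=\mathfrak{a}_t(\rho_t)$, and Lemma~\ref{lemma:rho_bounds_main_text} is in force. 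By linearity $\Delta(a_t)=\rho_t\Delta(b_t)+(1-\rho_t)\Delta(\asafe)$, and since $\asafe\in\mathcal{A}$ is feasible we have $\Delta(\asafe)\ge 0$; combining the first bound of Lemma~\ref{lemma:rho_bounds_main_text} with $\Delta(\asafe)/\gsafe=\mathcal{R}(\asafe)-1$ gives $(1-\rho_t)\Delta(\asafe)\le 6(\mathcal{R}(\asafe)-1)M_t(a_t)$, so it remains to bound $\rho_t\Delta(b_t)$. If $\tau(t)=0$, I just use the crude bound $\Delta(a_t)\le\|\theta_*\|(\|a_*\|+\|a_t\|)\le 2$ from boundedness: since $\mathcal{R}(\asafe)\ge 1$, $M_0(a_0)=1$ and $J_{0\to t}=1+B_t\omega_t(\delta)\ge 3$ (as $B_t\ge 2$ and $\omega_t(\delta)\ge 1$), the asserted right-hand side is at least $18\ge 2$, so the bound holds trivially.

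Now suppose $\tau:=\tau(t)\ge 1$. By definition of $\tau$, round $\tau$ had $M_\tau(\asafe)\le\Gamma_0/3$ (so, as above, $b_\tau$ exists, $a_\tau=\mathfrak{a}_\tau(\rho_\tau)$, and Lemma~\ref{lemma:rho_bounds_main_text} applies at $\tau$) and $b_\tau$ was unsaturated, i.e., $\Delta(b_\tau)\le M_\tau(b_\tau)$. Introduce the analysis-only mixture $\bar{b}_{\tau\to t}:=\sigma_{\tau\to t}b_\tau+(1-\sigma_{\tau\to t})\asafe$ with $\sigma_{\tau\to t}$ as in \S\ref{sec:unsaturation_and_lookback}. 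Applying Lemma~\ref{lemma:basic_noise_concentration_and_cauchy-schwarz} once to carry $b_\tau$'s perturbed feasibility from $\tPhi_\tau$ to $\Phi_*$ and once from $\Phi_*$ to $\tPhi_t$, and using the $\Gamma_0/3$ slack of $\asafe$ at round $t$, one verifies $\tPhi_t\bar{b}_{\tau\to t}\le\alpha$; since $\bar{b}_{\tau\to t}\in\mathcal{A}$ by convexity, it lies in the round-$t$ perturbed feasible set. Then decompose
\[ \Delta(b_t)=\Delta(\bar{b}_{\tau\to t})+\ttheta_t^\top(\bar{b}_{\tau\to t}-b_t)+(\theta_*-\ttheta_t)^\top(\bar{b}_{\tau\to t}-b_t), \]
and multiply by $\rho_t\in[0,1]$. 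The middle term is $\le 0$ because $b_t$ maximises $\ttheta_t^\top(\cdot)$ over that feasible set. The last term is $\le\rho_t M_t(\bar{b}_{\tau\to t})+\rho_t M_t(b_t)$ by Lemma~\ref{lemma:basic_noise_concentration_and_cauchy-schwarz}, which, using the triangle inequality for $\|\cdot\|_{V_t^{-1}}$, then $\rho_t\le 1$ and the second bound of Lemma~\ref{lemma:rho_bounds_main_text}, is at most $\sigma_{\tau\to t}M_t(b_\tau)+(1-\sigma_{\tau\to t})M_t(\asafe)+2M_t(a_t)$. For the first term, linearity, unsaturation of $b_\tau$, $\rho_t\le 1$ and nonnegativity give $\rho_t\Delta(\bar{b}_{\tau\to t})\le\sigma_{\tau\to t}M_\tau(b_\tau)+(1-\sigma_{\tau\to t})\Delta(\asafe)$.

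It remains to push the surviving terms back to round $\tau$. From $V_t\succeq V_\tau$ we get $\|b_\tau\|_{V_t^{-1}}\le\|b_\tau\|_{V_\tau^{-1}}$, hence $M_t(b_\tau)\le(J_{\tau\to t}-1)M_\tau(b_\tau)$. Re-running the argument of Lemma~\ref{lemma:rho_bounds_main_text} with the artificial parameter $\sigma_{\tau\to t}$ in place of $\rho_\tau$ — using $M_\tau(\asafe)\le\Gamma_0/3$ and the explicit form of $\sigma_{\tau\to t}$ — yields the analogues $\sigma_{\tau\to t}M_\tau(b_\tau)\le 2M_\tau(a_\tau)$ and $(1-\sigma_{\tau\to t})\gsafe\le 6M_\tau(a_\tau)$. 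Feeding these in, together with $M_t(\asafe)\le\Gamma_0/3\le\gsafe/3$ and $\Delta(\asafe)/\gsafe=\mathcal{R}(\asafe)-1$, the coefficient of $M_\tau(a_\tau)$ collects to $2+2(J_{\tau\to t}-1)+2+6(\mathcal{R}(\asafe)-1)=6\mathcal{R}(\asafe)+2J_{\tau\to t}-4\le 6\mathcal{R}(\asafe)J_{\tau\to t}$ (as $\mathcal{R}(\asafe)\ge 1$ and $J_{\tau\to t}\ge 1$), while the $M_t(a_t)$ terms — the $2M_t(a_t)$ above plus the $6(\mathcal{R}(\asafe)-1)M_t(a_t)$ from the $(1-\rho_t)\Delta(\asafe)$ piece — collect to at most $6\mathcal{R}(\asafe)M_t(a_t)$. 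Adding $\rho_t\Delta(b_t)$ and $(1-\rho_t)\Delta(\asafe)$ then gives exactly the claimed bound.

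The main obstacle is the last ingredient: $\sigma_{\tau\to t}$ is given by a closed-form expression rather than defined, like $\rho_\tau$, as a maximal feasible scaling, so showing it obeys the same two bounds as $\rho_\tau$ in Lemma~\ref{lemma:rho_bounds_main_text} requires revisiting the proof of that lemma together with a short case split on the size of $M_\tau(b_\tau)$ and $M_t(b_\tau)+M_t(b_t)$ relative to $\Gamma_0$; this step is also what brings in the cross-round quantity $J_{\tau\to t}$ via the monotonicity $V_t\succeq V_\tau$. The feasibility verification $\tPhi_t\bar{b}_{\tau\to t}\le\alpha$ and the closing constant-chasing are routine, but need care so everything collapses into the single prefactor $6\mathcal{R}(\asafe)$.
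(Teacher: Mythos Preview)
Your overall strategy matches the paper's: split $\Delta(a_t)=\rho_t\Delta(b_t)+(1-\rho_t)\Delta(\asafe)$, rescale $b_\tau$ towards $\asafe$ to make it $\tPhi_t$-feasible, use optimality of $b_t$, and push the remaining terms back to $M_\tau(a_\tau)$. The $\tau(t)=0$ case and the final constant collection are fine.

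The genuine gap is exactly in the step you flag as ``the main obstacle''. You take $\sigma_{\tau\to t}$ as in \S\ref{sec:unsaturation_and_lookback}, i.e.\ with denominator $\Gamma_0/3+M_t(b_\tau)+M_t(b_t)$, but this main-text expression is a slip: the transfer of $b_\tau$ from $\tPhi_\tau$-feasibility to $\tPhi_t$-feasibility costs $M_\tau(b_\tau)+M_t(b_\tau)$, not $M_t(b_\tau)+M_t(b_t)$, so with your $\sigma$ the claim $\tPhi_t\bar{b}_{\tau\to t}\le\alpha$ need not hold. Worse, your asserted bound $\sigma_{\tau\to t}M_\tau(b_\tau)\le 2M_\tau(a_\tau)$ can fail outright: if $M_t(b_\tau)$ and $M_t(b_t)$ are both tiny (perfectly consistent with $V_t\succeq V_\tau$) while $M_\tau(b_\tau)$ is large, your $\sigma_{\tau\to t}\approx 1$ but $\rho_\tau$ and hence $M_\tau(a_\tau)$ can be small. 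No case split on sizes relative to $\Gamma_0$ fixes this, because your $\sigma_{\tau\to t}$ carries no information about round-$\tau$ quantities.

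The paper's actual proof (\S\ref{appx:look_back_proof}) uses $\sigma_{\tau\to t}=\frac{\gsafe}{\gsafe+3(M_t(b_\tau)+M_\tau(b_\tau))}$ instead. With this choice feasibility is immediate, and the crux is a one-line comparison rather than a re-derivation: since $M_\tau(b_\tau)$ sits in the denominator, $\sigma_{\tau\to t}\le\frac{\gsafe}{\gsafe+3M_\tau(b_\tau)}\le\rho_\tau$ by the first lower bound in Lemma~\ref{lemma:rho_bounds}. Then $\sigma_{\tau\to t}M_\tau(b_\tau)\le\rho_\tau M_\tau(b_\tau)\le 2M_\tau(a_\tau)$ directly, and $(1-\sigma_{\tau\to t})\gsafe=3\sigma_{\tau\to t}(M_t(b_\tau)+M_\tau(b_\tau))\le 3J_{\tau\to t}\sigma_{\tau\to t}M_\tau(b_\tau)\le 6J_{\tau\to t}M_\tau(a_\tau)$. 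Note the extra $J_{\tau\to t}$ factor here compared to what you (and the main-text sketch) claim; it is harmless for the final bound, but it is what actually holds.
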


\textbf{Controlling Accumulation Error.} By Lemma~\ref{lemma:basic_noise_concentration_and_cauchy-schwarz}, $\sum M_t(a_t) = \tO(\sqrt{T})$. Thus, to bound regret, we are left with $\sum {J_{\tau(t) \to t} M_{\tau(t)}(a_{\tau(t)})}$. But, due to $\chi$-unsaturation, the time between two unsaturation events is typically only ${O(\chi^{-1})},$ which enables a martingale analysis in \S\ref{appx:better_width_control} to give\begin{lemmalowspace}\label{lemma:width_control} %
    If $\mu$ satisfies $\chi$-unsaturation, then with probability at least $1-\delta,$ for all $T \ge \sqrt{2/\chi},$ \[ \sum_{\substack{t \le T, M_t(\asafe)\le\Gamma_0/3}} J_{\tau(t) \to t} M_{\tau(t)}(a_{\tau(t)}) \le 10 \omega_T B_T \cdot \chi^{-1} \cdot ( \log(1/\delta) + \sum_{t\le T} \|a_t\|_{V_t^{-1}} ). \]
\end{lemmalowspace}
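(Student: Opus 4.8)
The plan is to expand the summand, reduce to a weighted counting problem, and then control that weighted count with an exponential supermartingale driven by $\chi$-unsaturation. Write $s=\tau(t)$. Since $J_{s\to t}M_s(a_s)=M_s(a_s)+B_t\omega_t\|a_s\|_{V_s^{-1}}$, and since $B_t,\omega_t$ are nondecreasing in $t$ (because $\det V_t$ is nondecreasing), for $s\le t\le T$ we have $M_s(a_s)=B_s\omega_s\|a_s\|_{V_s^{-1}}\le B_T\omega_T\|a_s\|_{V_s^{-1}}$, so each summand is at most $2B_T\omega_T\|a_{\tau(t)}\|_{V_{\tau(t)}^{-1}}$ when $\tau(t)\ge 1$; when $\tau(t)=0$ the convention $M_0=B_0=\omega_0=1$ gives $J_{0\to t}M_0(a_0)=1+B_t\omega_t\le 2B_T\omega_T$. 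Thus, putting $\tilde w_s:=\|a_s\|_{V_s^{-1}}\le 1$ for $s\ge 1$ (using $V_s\succeq I$, $\|a_s\|\le 1$) and $\tilde w_0:=1$, it suffices to bound $W_T:=\sum_{t\le T:\,M_t(\asafe)\le\Gamma_0/3}\tilde w_{\tau(t)}$ by $5\chi^{-1}\bigl(\log(1/\delta)+\sum_{t\le T}\|a_t\|_{V_t^{-1}}\bigr)$ and multiply back by $2B_T\omega_T$.

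Next I would decompose $W_T$ into blocks between consecutive unsaturated rounds. Call round $t$ \emph{good} if it is active ($M_t(\asafe)\le\Gamma_0/3$) and $(\eta_t,H_t)\in\mathsf U_t(\delta)$, i.e. $b_t$ exists with $\Delta(b_t)\le M_t(b_t)$; enumerate the good rounds as $s_1<s_2<\cdots$. By definition $\tau(t)$ is the largest good round $<t$ (or $0$), hence $\tau(\cdot)$ is constant on each block of rounds lying between consecutive good rounds, and $\tilde w_{s_k}$ is measurable with respect to $\hist_{s_k}$, the information at the start of block $k$. Writing $R_k$ for the number of active rounds $t\le T$ with $\tau(t)=s_k$ (and $R_0$ for those with $\tau(t)=0$), we get $W_T=R_0+\sum_k \tilde w_{s_k}R_k$ with $\sum_k \tilde w_{s_k}\le \sum_{t\le T}\|a_t\|_{V_t^{-1}}$ since the $s_k$ are distinct indices $\le T$. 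The probabilistic engine is that $\{M_t(\asafe)\le\Gamma_0/3\}$ is $\hist_{t-1}$-measurable and $(\eta_t,H_t)\perp\hist_{t-1}$, so for any active $t$ with $\delta/(t(t+1))\le\chi/2$, $\chi$-unsaturation gives $\PP[t\text{ good}\mid\hist_{t-1}]=\mu(\mathsf U_t(\delta))\ge\chi/2$ on $\con_t(\delta)$ (the standard device of calling a round ``pseudo-good'' if it is good or $\con_t(\delta)$ fails makes this unconditional, and on $\con(\delta)$ the two coincide). Consequently $\PP[R_k>r\mid\hist_{s_k}]\le(1-\chi/2)^r$, so each $R_k$ is conditionally stochastically dominated by a $\mathrm{Geom}(\chi/2)$ variable; the at most $\lceil\sqrt{2/\chi}\rceil\le 2\chi^{-1}$ rounds with $\delta/(t(t+1))>\chi/2$ have weight $\le 1$ and are set aside.

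The last step is a maximal concentration inequality for $\sum_k\tilde w_{s_k}R_k$ that is uniform in $T$. Using the Bernstein-type MGF bound $\mathbb E[e^{\mu(G-\mathbb E G)}]\le e^{2\mu^2/\chi^2}$ for $G\sim\mathrm{Geom}(\chi/2)$ and $0<\mu\le\chi/4$, together with $\tilde w_{s_k}^2\le\tilde w_{s_k}$ and $\mathbb E[R_k\mid\hist_{s_k}]\le 2/\chi$, one checks that $Z_K:=\exp\!\bigl(\lambda\sum_{k\le K}\tilde w_{s_k}R_k-\tfrac{2\lambda}{\chi}\sum_{k\le K}\tilde w_{s_k}-\tfrac{2\lambda^2}{\chi^2}\sum_{k\le K}\tilde w_{s_k}\bigr)$ is a nonnegative supermartingale in $K$ with $Z_0\le 1$ for $\lambda=\chi/4$; Ville's inequality then gives, with probability $\ge 1-\delta$ and simultaneously for all $K$ (hence $K=N_T$, all $T$), $\sum_{k\le N_T}\tilde w_{s_k}R_k\le \tfrac3\chi\sum_{k\le N_T}\tilde w_{s_k}+\tfrac4\chi\log(1/\delta)$. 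A one-off Chernoff bound handles $R_0$ (a single geometric after a $\le\sqrt{2/\chi}$ head start). Assembling these, substituting $\sum_k\tilde w_{s_k}\le\sum_t\|a_t\|_{V_t^{-1}}$, multiplying back by $2B_T\omega_T$, and collecting constants yields the claim; the hypothesis $T\ge\sqrt{2/\chi}$ is exactly what guarantees at least one in-range round and lets the set-aside and $R_0$ terms be absorbed.

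The main obstacle is this concentration step: one needs the deviation of the weighted geometric sum to be \emph{linear} in $\sum_k\tilde w_{s_k}$ (not of $\sqrt{\cdot}$ type) and to hold uniformly over all horizons at once — the exponential-supermartingale-plus-Ville route delivers both, but obtaining the right coefficients requires the Bernstein control of the $\mathrm{Geom}(\chi/2)$ MGF precisely in the regime $\mu\asymp\chi$, rather than the crude $\mu\to 0$ bound. A secondary but real nuisance is the bookkeeping: the consistency event and the pseudo-good reduction, the $\tau(t)=0$ convention with $M_0=1$, the initial block, and the finitely many out-of-range early rounds must all be tracked carefully to land the explicit constant $10$, though none of these is conceptually deep.
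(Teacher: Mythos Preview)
Your proof is essentially the same as the paper's: both reduce the sum to $2B_T\omega_T\sum_{s}L_s\|a_s\|_{V_s^{-1}}$ with $L_s$ counting how many $t$ look back to $s$, use that each $L_s$ is conditionally dominated by a $\mathrm{Geom}(\chi/2)$ variable, build an exponential supermartingale in the block index, and apply Ville's inequality to get a bound linear in $\sum_s\|a_s\|_{V_s^{-1}}$. The one substantive difference is the MGF step---the paper uses the exact geometric MGF and a short calculus lemma showing $-\log\frac{1-ue^x}{1-u}\le\frac{\sqrt u}{1-\sqrt u}x$ on $[0,-\tfrac12\log u]$, then sets $\lambda=-\tfrac12\log(1-\chi/2)$ to get the constant $5/\chi$---whereas your stated Bernstein constant is a bit too tight (for small $\chi$ and $\mu=\chi/4$ the centred log-MGF is $\approx\log 2-\tfrac12\approx0.19>2\mu^2/\chi^2=\tfrac18$), so you would need to loosen it slightly or switch to the exact-MGF route to land the target constant.
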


Finally, by Lemma~\ref{lemma:basic_noise_concentration_and_cauchy-schwarz}, $\sum \acnorm = \tO(\sqrt{T})$ as well, concluding the regret analysis.%

\textbf{Novelty Relative to Prior Work.} In unconstrained linear $\ts,$ Agrawal and Goyal \cite{agrawal2013thompson} compare $a_t$ to a \emph{minimum-norm unsaturated action} $\ddot{a}_t$ (see Lemma~4 in their arxiv version \cite{agrawal2012arxiv}), using $\smash{\ttheta_t^\top (\ddot{a}_t - a_t)} \le 0$ and $\Delta(\ddot{a}_t) \le M_t(\ddot{a}_t)$ to bound their $\Delta(a_t)$. Therein, $\chi$-unsaturation is essentially used to show $M_t(\ddot{a}_t) \le \chi^{-1}M_t(a_t).$ Our analysis sees three differences.\begin{itemize}[left = 0pt, wide, nosep, topsep = -.4\baselineskip, labelindent = 10pt, itemsep = 0.1\baselineskip]
    \item \emph{Scaled Actions}: $b_t \neq a_t,$ which is managed by expanding the linear blend and analysing $\rho_t$.
    \item \emph{Look-back Analysis}: instead of a reference action at time $t$, we look back to the last unsaturated $b_{\tau(t)}$. This, in our opinion, yields a more intuitive analysis, but is not an essential step.
    \item \emph{Fluctuating Constraints}: The reference action $b_{\tau(t)}$ may be infeasible for $\smash{\tPhi_t}$, breaking the comparison of values between it and $b_t$. This would also affect the equivalent of $\smash{\ddot{a}_t}$, and causes the prior analysis to break down. We handle this via a second analysis-only scaling step.\vspace{0.2\baselineskip}
\end{itemize}

The technical gap from \cite{hutchinson2024directional} is similar: $a_*$ may be infeasible for $\smash{\tPhi_t}$, which breaks their analysis.%

\if0
\begin{wrapfigure}{r}{0.46\textwidth}   %
\vspace{-\baselineskip}                 %
\centering
\begin{tikzpicture}[%
        >=latex,                           %
        every node/.style={draw, rounded corners, font=\scriptsize,
                           align=center, inner sep=3pt},
        level distance=10mm,
        sibling distance=22mm]

\node (root) {Sample $b_t$}
  child[grow=up] { node[fill=green!15] (F) {\textcolor{green!60!black}{feasible}\\play $b_t$} }
  child[grow=up] { node[fill=orange!15] (S) {\textcolor{orange!80!black}{infeasible}\\scale $\to a_t$}
        child[grow=up,xshift=7mm] { node[fill=green!15] (GB) {\textcolor{green!60!black}{gap-bounded}\\good} }
        child[grow=up,xshift=-7mm] { node[fill=red!15] (GV) {\textcolor{red!70!black}{gap-violating}\\look-back} }
  };

\foreach \src/\dst/\col in {root/F/green!60!black,%
                            root/S/orange!80!black,%
                            S/GB/green!60!black,%
                            S/GV/red!70!black}
  \draw[-latex,very thick,\col] (\src) -- (\dst);

\end{tikzpicture}
\vspace{-0.6\baselineskip}
\caption{\scriptsize Per-round decision flow.  
Only the \textcolor{red!70!black}{red} branch (gap-violating) needs the
look-back + re-scaling analysis.}
\label{fig:decision-tree}
\end{wrapfigure}

\begin{wrapfigure}{r}{0.48\textwidth}
\vspace{-\baselineskip}
\begin{mdframed}[linewidth=.8pt,
                 roundcorner=4pt,
                 backgroundcolor=gray!6,
                 innertopmargin=4pt,
                 innerbottommargin=2pt]
\scriptsize
\textbf{Per–round decision flow}
\begin{enumerate}[label=(\arabic*),itemsep=2pt,leftmargin=*]
  \item \emph{Posterior maximiser $b_t$ feasible?}\\
        Yes $\!\to$ play $b_t$ (easy).\\
        No  $\!\to$ scale $\;a_t=\rho_tb_t+(1-\rho_t)a_{\text{safe}}$.
  \item \emph{Scaled arm gap-bounded?}\\
        Yes $\!\to$ $\Delta(a_t)\le M_t(a_t)$ (cheap).\\
        No  $\!\to$ charge regret to last gap-bounded round $\tau(t)$.
  \item \emph{$b_{\tau(t)}$ still feasible now?}\\
        Yes $\!\to$ direct comparison.\\
        No  $\!\to$ rescale $b_{\tau(t)}\mapsto\bar b_{\tau\to t}$,
        then compare.
\end{enumerate}
\centering
{\footnotesize \color{gray!60}
Only branch \textbf{1b–2b–3} is costly;\;
$\chi$-gap-boundedness makes it appear $O(1/\chi)$ times.}
\end{mdframed}
\vspace{-0.4\baselineskip}
\end{wrapfigure}
\fi

\subsection{The Coupled Noise Design}\label{sec:coupled_noise_design}

\S\ref{sec:unsaturation_and_lookback} shows that $\chi$-unsaturation yields control on the regret. To operationalise this, we need to design well-concentrated laws with good unsaturation. In single-objective $\ts$, unsaturation is enabled via anticoncentration of the $\eta$s, and a good balance is attained by, e.g., $\mathrm{Unif}(\sqrt{3d} \mathbb{S}^d)$ or $\mathscr{N}(0,I_d)$.%

A natural guess with unknown constraints is to sample both $\eta$ and each row of $H$ from such a law. However, the unsaturation rate under such a design is difficult to control well. The main issue arises from maintaining feasibility with respect to  all $m$ constraints under perturbation, since each such perturbation gets an independent shot at shaving away some unsaturated actions, suggesting that $\chi$ decays as $e^{-\Omega(m)}$ and indeed, experimentally, increase in $m$ may lead to at least a polynomial decay in the unsaturated rate with such independent noise (see \S\ref{appx:simulations_decoupled}). We sidestep this issue by \emph{coupling} the perturbations of the reward and constraints, as encapsulated below.
\begin{lemma}\label{lemma:coupled_noise_design}
    Let $\bar{B} \in \{ (0,1] \to \mathbb{R}_{\ge 0}\}$ be a map, and $p \in (0,1]$. Let $\nu$ be a law on $\mathbb{R}^{d \times 1}$ such that \[ %
    \forall u \in \mathbb{R}^d, \nu( \{ \zeta: \zeta^\top u \ge \|u\|) \ge p, \textrm{ and } \forall \xi \in (0,1], \nu(\{\zeta:  \|\zeta\| > \bar{B}(\xi)\} ) \le \xi. \] Let $\mu$ be the law of $\zeta \mapsto (\zeta^\top, -\mathbf{1}_m \zeta^\top)$ for $\zeta \sim \nu$. Then $\mu$ is $p$-unsaturated and $\bar{B}$-concentrated.%
\end{lemma}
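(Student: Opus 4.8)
The plan is to verify the two conclusions separately; $\bar{B}$-concentration is essentially immediate, and $p$-unsaturation is where the coupling pays off, via a ``local optimism at $a_*$'' argument. For concentration: under $\mu$ we have $\eta = \zeta^\top$ and $H^i = -\zeta^\top$ for every row $i$, so $\|\eta\| = \|H^i\| = \|\zeta\|$ and hence $\max(\|\eta\|, \max_i \|H^i\|) = \|\zeta\|$ identically. Thus for any $\xi \in (0,1]$, $\mu(\{\max(\|\eta\|,\max_i\|H^i\|) \ge \bar{B}(\xi)\}) = \nu(\{\|\zeta\| \ge \bar{B}(\xi)\}) \le \xi$ by the concentration hypothesis on $\nu$, which is exactly Definition~\ref{def:bconc} for $\mu$ with the same map $\bar{B}$.

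For unsaturation, fix a round $t$ with $\delta_t := \delta/(t(t+1)) \le p/2$. Since $\hat\theta_t,\hat\Phi_t, V_t, \omega_t(\delta), B_t$, the event $\con_t(\delta)$, and hence $\mathsf{U}_t(\delta)$ (as a $\hist_{t-1}$-measurable subset of the perturbation space) are all determined by $\hist_{t-1}$, while $(\eta_t,H_t)\sim\mu$ is drawn independently of $\hist_{t-1}$, the required bound $\mathbb{E}[\mu(\mathsf{U}_t(\delta))\mid\hist_{t-1}]\indi_{\con_t(\delta)} \ge (p/2)\indi_{\con_t(\delta)}$ reduces to showing $\mu(\mathsf{U}_t(\delta)) \ge p/2$ on $\con_t(\delta)$. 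So fix a history in $\con_t(\delta)$, set $u_* := V_t^{-1/2}a_*$ (so $\|u_*\| = \|a_*\|_{V_t^{-1}}$), and define $G := \{\zeta : \zeta^\top u_* \ge \|u_*\|\} \cap \{\zeta : \|\zeta\| \le \bar{B}(\delta_t)\}$. By the anticoncentration hypothesis applied with $u = u_*$, the concentration hypothesis at level $\delta_t$, and a union bound, $\nu(G) \ge p - \delta_t \ge p/2$. It then suffices to show that for every $\zeta \in G$, the image $(\eta,H) = (\zeta^\top, -\mathbf{1}_m\zeta^\top)$ lies in $\mathsf{U}_t(\delta)$.

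This is the crux, and the place the coupling is essential: the single scalar inequality $\zeta^\top u_* \ge \|u_*\|$ makes $a_*$ feasible for all $m$ perturbed constraints at once, and simultaneously makes the perturbed reward of $a_*$ at least its true reward. Indeed, from (\ref{eqn:perturbed_param_definition}), for $(\eta,H) = (\zeta^\top,-\mathbf{1}_m\zeta^\top)$ we have $\ttheta(\eta,t)^\top a_* = \hat\theta_t^\top a_* + \omega_t(\delta)\,\zeta^\top u_*$ and $\tPhi(H,t)^i a_* = \hat\Phi_t^i a_* - \omega_t(\delta)\,\zeta^\top u_*$ for each $i$. On $\con_t(\delta)$, Cauchy--Schwarz gives $|(\hat\theta_t - \theta_*)^\top a_*| \le \omega_t(\delta)\|u_*\|$ and $|(\hat\Phi_t^i - \Phi_*^i) a_*| \le \omega_t(\delta)\|u_*\|$; combined with $\zeta^\top u_* \ge \|u_*\|$, $\Phi_*^i a_* \le \alpha^i$, and $\omega_t(\delta) \ge 0$, these yield $\tPhi(H,t) a_* \le \alpha$ and $\ttheta(\eta,t)^\top a_* \ge \theta_*^\top a_*$. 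Since $a_* \in \mathcal{A}$ and $\mathcal{A}\cap\{\tPhi(H,t)a\le\alpha\}$ is then nonempty and compact, $a^+ := a(\eta,H,t)$ exists, and
\begin{align*}
\Delta(a^+) &= \theta_*^\top a_* - \theta_*^\top a^+ \le \ttheta(\eta,t)^\top a_* - \theta_*^\top a^+ \le \ttheta(\eta,t)^\top a^+ - \theta_*^\top a^+ \\
&= (\hat\theta_t - \theta_*)^\top a^+ + \omega_t(\delta)\,\zeta^\top V_t^{-1/2} a^+ \le \omega_t(\delta)(1 + \|\zeta\|)\,\|a^+\|_{V_t^{-1}} \le B_t\,\omega_t(\delta)\,\|a^+\|_{V_t^{-1}} = M_t(a^+),
\end{align*}
where the first inequality is the reward-optimism just established, the second is optimality of $a^+$ for $\ttheta(\eta,t)$ together with feasibility of $a_*$, the fourth is Cauchy--Schwarz, and the fifth uses $\|\zeta\| \le \bar{B}(\delta_t)$ and $B_t = 1 + \max(1,\bar{B}(\delta_t)) \ge 1 + \bar{B}(\delta_t)$. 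Hence $(\eta,H) \in \mathsf{U}_t(\delta)$, so $\mu(\mathsf{U}_t(\delta)) \ge \nu(G) \ge p/2$ on $\con_t(\delta)$, which is $p$-unsaturation.

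The main obstacle is precisely the highlighted observation: because every row of $H$ is the \emph{same} vector $-\zeta^\top$, pushing $\zeta$ into the halfspace $\{\zeta^\top u_* \ge \|u_*\|\}$ relaxes all $m$ perturbed constraints at $a_*$ simultaneously --- and by exactly enough to dominate the confidence half-width $\omega_t(\delta)\|a_*\|_{V_t^{-1}}$ --- while the very same shift only helps the perturbed reward of $a_*$; choosing $u = u_*$ rather than the raw $a_*$ is what aligns this halfspace with the $V_t^{-1}$-geometry of the confidence sets. This lets one constant-probability event do the job of $m+1$ independent ones, avoiding the $\mathrm{poly}(m)$ or $e^{\Omega(m)}$ loss of decoupled noise. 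Everything else --- the Cauchy--Schwarz bookkeeping on $\con_t(\delta)$, the compactness argument for existence of $a^+$, and the measurability remarks letting one pass between $\mathbb{P}[\,\cdot\mid\hist_{t-1}]$ and $\mu(\cdot)$ --- is routine.
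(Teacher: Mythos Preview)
Your proof is correct and follows essentially the same route as the paper: establish local optimism at $a_*$ via the coupling (the single halfspace event $\{\zeta^\top V_t^{-1/2}a_* \ge \|V_t^{-1/2}a_*\|\}$ simultaneously handles the objective and all $m$ constraints), then combine with noise concentration to conclude unsaturation of the perturbed optimum. The paper separates this into an auxiliary lemma on local optimism and then argues by contrapositive (every saturated action has perturbed value strictly below $\theta_*^\top a_*$), whereas you fold the two together and bound $\Delta(a^+)$ directly, but the substance is identical.
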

Our proof of this lemma, executed in \S\ref{appx:coupled_noise_proof}, is based upon analysing the \emph{local optimism event} at $a_*$: \begin{equation} %
\label{eqn:local_optimism} \optimism_t(\delta) := \{ (\eta, H) : \ttheta(\eta,t)^\top a_* \ge \theta_*^\top a_*, \tPhi(H,t) a_* \le \alpha\}. \end{equation} Notice that $\optimism_t$ demands that the perturbation is such that $a_*$ remains feasible with respect to $\smash{\tPhi_t},$ and its value at $\smash{\ttheta}_t$ increases beyond $\theta_*^\top a_*$, in other words, the perturbed program is \emph{optimistic `at' the true optimum $a_*$}. Our proof first directly analyses $a_*$ under the perturbations to show that $\mathbb{P}[\optimism_t(\delta)|\hist_{t-1}] \indi_{\consistency} \ge p \indi_{\consistency}$, i.e., frequent local optimism. This enables an argument due to \cite{agrawal2013thompson}: since $a_*$ is unsaturated ($\Delta(a_*) = 0$), and, given the consistency event $\consistency$, the perturbed reward of any saturated action is dominated by that of $a_*$, it follows that $\optimism_t(\delta) \subset \mathsf{U}_t(\delta),$ yielding lower bounds on $\mu(\mathsf{U}_t(\delta))$ .%

We note that the conditions of Lemma~\ref{lemma:coupled_noise_design} are the same as those used for unconstrained linear $\ts$ in prior work \cite{agrawal2013thompson, abeille_lazaric}, and so this generic result extends this unconstrained guarantee to the constrained setting.   %
In our bounds, we will set $\mu$ to be the law induced by the coupled design with $\nu = \mathrm{Unif}(\sqrt{3d}\mathbb{S}^d)$, which is $0.28$-unsaturated, and $B$-concentrated for $B(\xi) = {\sqrt{3d}}$ (\S\ref{appx:simple_reference_laws}).

\subsection{Regret Bounds for \texorpdfstring{$\scolts$}{\textsc{s-colts}}}\label{sec:scolts_bounds}

With the pieces in place, we state and discuss our main result, which is formally proved in \S\ref{appx:scolts_analysis}.%

\begin{theoremlowspace}\label{thm:scolts_regret}
    Let $\mu$ be the law induced by $\mathrm{Unif}(\sqrt{3d} \mathbb{S}^d)$ under the coupled noise design. Then $\scolts(\mu,\delta/3)$ ensures that with probability at least $1-\delta,$ for all $T$, it holds that \[  \saf_T = 0 \quad \textrm{ and } \quad \eff_T =  \mathcal{R}(\asafe) \cdot \tO(  \sqrt{d^3 T + d^2 T \log(m/\delta)})  + \tO( {d^2}\Delta(\asafe){\gsafe}^{-2}). \]
\end{theoremlowspace}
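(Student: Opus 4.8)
The plan is to assemble the machinery of \S\ref{sec:unsaturation_and_lookback} and \S\ref{sec:coupled_noise_design}: Lemma~\ref{lemma:look_back_bound} is a per-round regret decomposition, Lemma~\ref{lemma:width_control} controls its accumulation, and Lemma~\ref{lemma:coupled_noise_design} (with \S\ref{appx:simple_reference_laws}) guarantees that the chosen coupled law is $0.28$-unsaturated and $B$-concentrated with $B(\xi)=\sqrt{3d}$. First I would fix the ``good event'' $\mathcal{E}$ as the intersection of: the consistency event $\con(\delta')$ of Lemma~\ref{lemma:online_linear_regression}; the event of Lemma~\ref{lemma:basic_noise_concentration_and_cauchy-schwarz} (on which $|(\theta_*-\ttheta_t)^\top a|$ and $\max_i|(\tPhi_t^i-\Phi_*^i)a|$ are $\le M_t(a)$ for all $t,a$, and $\sum_{t\le T}M_t(a_t)\le B_T\omega_T\,O(\sqrt{dT})$, $\sum_{t\le T}\acnorm=\tO(\sqrt{dT})$); the conclusion of Lemma~\ref{lemma:look_back_bound}; the conclusion of Lemma~\ref{lemma:width_control} with $\chi=0.28$; and the event that the $\Gamma_0$-estimation subroutine of \S\ref{appx:scolts_sampling_asafe} returns $\Gamma_0\in[\gsafe/2,\gsafe]$ using $\tO(\gsafe^{-2})$ plays of $\asafe$. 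Invoking each ingredient at a suitable constant fraction of $\delta$ --- the $3$ in $\scolts(\mu,\delta/3)$ being chosen so this closes --- a union bound gives $\PP(\mathcal{E})\ge1-\delta$, and I would argue everything below on $\mathcal{E}$.

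\textbf{Safety.} I would first record that on each round $a_t$ is either $\asafe$ (feasible since $\Phi_*\asafe\le\alpha-\gsafe\onem$) or $\mathfrak{a}_t(\rho_t)$ from~\eqref{eqn:scolts_at_defi}, which by the maximisation defining $\rho_t$ obeys $\hat\Phi_t a_t+\omega_t\acnorm\onem\le\alpha$; since $\con(\delta')$ gives $\max_i\|\hat\Phi_t^i-\Phi_*^i\|_{V_t}\le\omega_t$, this forces $\Phi_* a_t\le\hat\Phi_t a_t+\omega_t\acnorm\onem\le\alpha$. So every $a_t$ is feasible, whence $(\max_i(\Phi_* a_t-\alpha)^i)_+=0$ and $\saf_T=0$; feasibility of $a_t$ and of $\asafe$ also gives $\theta_*^\top a_t\le\theta_*^\top a_*$, so $\Delta(a_t)\ge0$ and $\eff_T=\sum_{t\le T}\Delta(a_t)$.

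\textbf{Counting the fallback rounds (the main obstacle).} I would then split $\{1,\dots,T\}$ into the set $P$ of rounds with $a_t=\asafe$ and its complement $E$. Besides the $\tO(\gsafe^{-2})$ estimation rounds, a round lands in $P$ only through the ``If'' branch of Algorithm~\ref{alg:scolts}, and since on $\con(\delta')$ the condition $M_t(\asafe)\le\Gamma_0/3$ would force $\tPhi_t\asafe\le\alpha-\tfrac{2}{3}\gsafe\onem$ (hence existence of $b_t$), every such round has $M_t(\asafe)>\Gamma_0/3\ge\gsafe/6$. The key is that this is \emph{self-limiting}: after $\asafe$ has been played $k$ times, $V_t\succeq I+k\,\asafe\asafe^\top$, so Sherman--Morrison and $\|\asafe\|\le1$ give $\|\asafe\|_{V_t^{-1}}^2\le1/k$, hence $M_t(\asafe)=B_t\omega_t\|\asafe\|_{V_t^{-1}}\le B_t\omega_t/\sqrt{k}$; thus a fallback round with $k$ prior $\asafe$-plays requires $k<36(B_t\omega_t)^2/\gsafe^2$. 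Since $B_t=O(\sqrt{d})$ and $\omega_t\le\omega_T=\tO(\sqrt{d+\log(m/\delta)})$ grow only polylogarithmically in $T$, this yields $|P|=\tO(d^2\gsafe^{-2})$ uniformly in $T$, and so $\sum_{t\in P}\Delta(a_t)\le|P|\,\Delta(\asafe)=\tO(d^2\Delta(\asafe)\gsafe^{-2})$. The one place that needs care is confirming that the slow growth of $B_t\omega_t$ never re-triggers a fallback once $k$ is large; everything else reduces to the cited lemmas.

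\textbf{Putting it together.} Finally, for $t\in E$ we have $M_t(\asafe)\le\Gamma_0/3$, so Lemma~\ref{lemma:look_back_bound} applies ($\mathcal{R}(\asafe)<\infty$ because $\gsafe>0$): $\Delta(a_t)\le6\mathcal{R}(\asafe)(M_t(a_t)+J_{\tau(t)\to t}M_{\tau(t)}(a_{\tau(t)}))$. Summing over $t\in E$: the first term is $\le\sum_{t\le T}M_t(a_t)\le B_T\omega_T\,O(\sqrt{dT})$ (Lemma~\ref{lemma:basic_noise_concentration_and_cauchy-schwarz}); for the second, $0.28$-unsaturation lets me invoke Lemma~\ref{lemma:width_control} with $\chi=0.28$ and bound it by $O(\omega_T B_T)(\log(1/\delta)+\sum_{t\le T}\acnorm)$, with $\sum_{t\le T}\acnorm=\tO(\sqrt{dT})$ again by Lemma~\ref{lemma:basic_noise_concentration_and_cauchy-schwarz}. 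Hence $\sum_{t\in E}\Delta(a_t)\le\mathcal{R}(\asafe)\cdot\tO(B_T\omega_T\sqrt{dT})$, and inserting $B_T=O(\sqrt{d})$, $\omega_T=\tO(\sqrt{d+\log(m/\delta)})$ gives $\tO(B_T\omega_T\sqrt{dT})=\tO(\sqrt{d^2T(d+\log(m/\delta))})=\tO(\sqrt{d^3T+d^2T\log(m/\delta)})$. Adding the fallback contribution, $\eff_T=\sum_{t\in P}\Delta(a_t)+\sum_{t\in E}\Delta(a_t)=\mathcal{R}(\asafe)\cdot\tO(\sqrt{d^3T+d^2T\log(m/\delta)})+\tO(d^2\Delta(\asafe)\gsafe^{-2})$, while $\saf_T=0$ --- all on $\mathcal{E}$, which has probability $\ge1-\delta$. (The finitely many small-$t$ rounds where the $t$-condition of Definition~\ref{def:unsaturation} or the $T$-condition of Lemma~\ref{lemma:width_control} fails contribute only $O(1)$, absorbed into $\tO$.)
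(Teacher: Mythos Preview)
Your proposal is correct and follows essentially the same route as the paper's proof: the safety argument via the pessimistic constraint in \eqref{eqn:scolts_at_defi} plus consistency, the Sherman--Morrison count of fallback rounds (which is exactly Lemma~\ref{lemma:explore_bounds_scolts}), and the main-term bound via Lemma~\ref{lemma:look_back_bound} summed with Lemma~\ref{lemma:width_control}, all instantiated with $\chi=0.28$ and $B_T=\sqrt{3d}$ from the coupled design. The only cosmetic difference is that the paper splits the rounds into four explicit pieces (initial $\Gamma_0$-estimation, large-$M_t(\asafe)$ fallbacks, small-$t$ where unsaturation need not hold, and the remaining ``good'' rounds) while you fold the first two into a single set $P$ and absorb the third into the $\tO$ at the end---both arrive at the same bound with the same probabilistic accounting.
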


\textbf{Comparison of Regret Bounds to Prior Results.} As noted in \S\ref{sec:related_work}, prior inefficient hard enforcement SLB methods attain regret ${\tO(\mathcal{R}(\asafe) \sqrt{d^2 T})},$ while efficient methods attain regret ${\tO(\mathcal{R}(\asafe) \sqrt{d^3 T})}$. We recover the latter bounds above. The loss of $\sqrt{d}$ relative to inefficient methods is expected since it appears in all known efficient linear bandit methods (without or without unknown constraints). The ${\Omega(\sqrt{T})}$ dependence is necessary (even with instance-specific information) \cite{gangrade2024safe} as is the additive $\Delta(\asafe)/\gsafe^2$ term \cite{pacchiano2021stochastic}. Thus, $\scolts$ recovers previously known guarantees via a sampling approach instead of an \textsc{oful}-type approach based on optimising over confidence sets. %

\textbf{Computational Aspects.} An advantage of $\scolts$ is that it only optimises over linear constraints (beyond those of $\mathcal{A}$), instead of SOC constraints of the form $\{ \forall i \in [1:m], {\hat\Phi_t^i} a + \omega_t(\delta) {\|a\|_{V_t^{-1}}} \le \alpha^i\}$ imposed by prior methods. While convex, these $m$ SOC constraints can have a palpable practical slowdown on the time needed for optimisation, especially as $m$ grows. As examples, over $\mathcal{A} = {[0,1/\sqrt{d}]^d},$ with the modest $d= m = 9$  we see a $>5\times$ speedup, and with $d = 2, m = 100$, a $18\times$ speedup, in \S\ref{sec:simulations}. In particular, when $\mathcal{A}$ is a polyhedron, $\scolts$ can be implemented with just linear programming.%

We explicitly note that $\scolts$ is efficient for convex $\mathcal{A}$. The dominating step is the computation of $b_t$, which can be carried out to an approximation of $1/t$ with no loss in Theorem~\ref{thm:scolts_regret}. With, say, interior point methods, this needs $O(\lpt \cdot \log(t))$ computation at round $t$, where $\lpt$ is the computation needed to optimise $\max\{\theta^\top a : \Phi a \le \alpha, a \in \mathcal{A}\}$ to constant error \cite{boyd2004convex}.%

\textbf{Practical Choice of Noise.} It has long been understood that while existing theoretical techniques for analysing linear \textsc{ts} need large noise (with $B(\xi) = {\Theta(\sqrt{d})}$), in practice much smaller noise (e.g., $\mathrm{Unif}(\mathbb{S}^d)$ with $B(\xi) = \Theta(1)$) typically retains a large enough rate of unsaturation, and significantly improve regret (although not in the worst-case \cite{hamidi2020worst}). Our practical recommendation is to indeed use such a small noise, which we find to be effective in simulations (\S\ref{sec:simulations}). We underscore that no matter the noise used, the risk guarantee for $\scolts$ is maintained. 

\section{Soft Constraint Enforcement with Resampling-COLTS}\label{sec:rcolts}

\begin{wrapfigure}[19]{r}{.47\linewidth}
\vspace{-2\baselineskip}
\begin{minipage}{\linewidth}
\begin{algorithm}[H]
   \caption{Resampling-$\colts$ ($\rcolts(\mu, r,\delta)$)}
   \label{alg:rcolts}
\begin{algorithmic}[1]
   \State \textbf{Input}: $\mu, \delta,$ `resampling order' $r \in \mathbb{N}$
   \State \textbf{Initialise}: $I_t \gets 1 + \lceil r \log\nicefrac{t(t+1)}{\delta}\rceil$%
   \For{$t = 1, 2, \dots$}
   \For{$i = 1,2,\dots, I_t$}
        \State Draw $(\eta_{i,t}, H_{i,t})\sim \mu$. 
        \If{$a(\eta_{i,t}, H_{i,t}, t)$ exists}
            \State $K(i,t) \gets \ttheta(\eta_{i,t},t)^\top a(\eta_{i,t}, H_{i,t}, t)$
        \Else
            \State $K(i,t)\gets -\infty$
        \EndIf
   \EndFor
   \If{$\max K(i,t) = -\infty$}
        \State $a_t \gets a_{t-1}.$
    \Else
        \State $i_{*,t} \gets \argmax_i K(i,t),$ 
        \State $a_t \gets a(\eta_{i_{*,t},t}, H_{i_{*,t},t}, t)$.
        \State $\ttheta_t \gets \ttheta(\eta_{i_{*,t}, t}, t)$. 
    \EndIf
    \State Play $a_t,$ observe $R_t,S_t,$ update $\hist_t$.
   \EndFor
\end{algorithmic} 
\end{algorithm}
\end{minipage}
\end{wrapfigure}
We now turn to the case where the learner does not a priori know an $\asafe$ with positive safety margin (thus disabling $\scolts$). In this case, it is impossible to ensure that $\saf_T = 0$, and we instead show instance-independent ${\tO(\sqrt{T})}$ bounds on $\saf_T.$ %

$\scolts$ uses forced exploration of $\asafe$ to ensure the feasibility of perturbed programs. However, the local optimism underlying our proof of Lemma~\ref{lemma:coupled_noise_design} gives a different way to achieve this. Indeed, the event $\optimism_t(\delta)$ of (\ref{eqn:local_optimism}) implies that $a_*$ is feasible, and so $a(\eta, H,t)$ exists. Thus, if $\mathbb{P}[\optimism_t(\delta)|\hist_{t-1}] \ge \pi,$ then we can just resample the noise $O(\log(t))$ times and end up with feasibility. In fact, even more is true: given $\optimism_t,$ we have ${\ttheta}^\top a_* \ge \theta_*^\top a_*$, and so the value of the perturbed program dominates optimum, yielding direct optimism. Thus, resampling $\pi^{-1} \Theta(\log(t))$ times w.h.p.~ensures not only feasibility, but also \emph{optimism} of the `best' perturbed optimum. The $\rcolts$ method is based on this observation.%

Concretely, we parametrise $\rcolts$ with $\mu,\delta$ and a resampling order $r$. At each time $t$, we sample $I_t = 1 + \lceil r \log \nicefrac{t(t+1)}{\delta}\rceil$ independent $(\eta,H)$ from $\mu$, optimise each perturbed program, and pick the optimiser of the one with largest value as $a_t$. If all are infeasible, we just set $a_t = a_{t-1}$ (picking $a_0$ arbitrarily). We let $\smash{\ttheta_t}$ denote the objective vector of this `winning' perturbed program: in the notation of Alg.~\ref{alg:rcolts}, $\smash{\ttheta_t = \ttheta(\eta_{i_{*,t}},t)}$. The main idea is captured in the following simple lemma.
\begin{lemma}\label{lemma:rcolts_main_lemma}
    Let $\pi \in (0,1],$ and suppose $\mu$ satisfies \(\indi_{\con_t(\delta)} \mathbb{E}[ \mu( \optimism_t(\delta))|\hist_{t-1}] \ge \pi \indi_{\con_t(\delta)}\) for every $t$. If $r\ge {\pi^{-1}}$, then with probability at least $1-2\delta,$ at all $t$, the actions $a_t$ and perturbed objective $\smash{\ttheta_t}$ selected by $\rcolts(\mu, r,\delta)$ are optimistic, i.e., they satisfy that $\theta_*^\top a_* \le \smash{\ttheta_t^\top} a_t$. %
\end{lemma}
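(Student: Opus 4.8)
The plan is a Bernoulli-resampling (Borel--Cantelli style) argument: condition on the consistency event, show that with high probability at \emph{every} round at least one of the $I_t$ resampled perturbations lands in the local optimism set $\optimism_t(\delta)$ of (\ref{eqn:local_optimism}), and then observe that the presence of such a draw forces the `winning' perturbed program to have value at least $\theta_*^\top a_*$.

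First I would invoke Lemma~\ref{lemma:online_linear_regression} to work on $\con(\delta) = \bigcap_{t\ge 1}\con_t(\delta)$, which has probability at least $1-\delta$. A preliminary observation is that $\optimism_t(\delta)$ is determined by $\hat\theta_t,\hat\Phi_t,V_t,\omega_t(\delta)$, all of which are $\hist_{t-1}$-measurable; hence $\mu(\optimism_t(\delta))$ is $\hist_{t-1}$-measurable, and the hypothesis reduces to $\mu(\optimism_t(\delta)) \ge \pi$ on $\con_t(\delta)$. Now fix $t$ and condition on $\hist_{t-1}$: the draws $(\eta_{i,t},H_{i,t})$, $i \le I_t$, are i.i.d.\ $\mu$ and independent of $\hist_{t-1}$, so the conditional probability that none of them lies in $\optimism_t(\delta)$ equals $(1-\mu(\optimism_t(\delta)))^{I_t}$, which on $\con_t(\delta)$ is at most $(1-\pi)^{I_t} \le e^{-\pi I_t}$. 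Using $I_t = 1 + \lceil r\log(t(t+1)/\delta)\rceil$ and $r \ge \pi^{-1}$ gives $\pi I_t \ge \log(t(t+1)/\delta)$, so this bound is at most $\delta/(t(t+1))$.

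Let $E_t$ be the event that at least one round-$t$ draw lands in $\optimism_t(\delta)$. The previous step shows $\mathbb{P}(E_t^c \cap \con_t(\delta)) \le \delta/(t(t+1))$, so a union bound over $t\ge 1$ with $\sum_t 1/(t(t+1)) = 1$ yields $\mathbb{P}(\bigcup_t (E_t^c \cap \con_t(\delta))) \le \delta$. Combining with the $\delta$ from consistency, and noting that on $\con(\delta)$ every $\con_t(\delta)$ holds so that $E_t^c \cap \con(\delta) \subseteq E_t^c \cap \con_t(\delta)$, the event $G := \con(\delta) \cap \bigcap_t E_t$ has probability at least $1-2\delta$. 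Finally I would check the conclusion deterministically on $G$: fix $t$ and pick $i$ with $(\eta_{i,t},H_{i,t}) \in \optimism_t(\delta)$. Then $\tPhi(H_{i,t},t)a_* \le \alpha$ and $a_* \in \mathcal{A}$, so the $i$-th perturbed feasible region is nonempty, $a(\eta_{i,t},H_{i,t},t)$ exists, and by its optimality together with feasibility of $a_*$, $K(i,t) = \ttheta(\eta_{i,t},t)^\top a(\eta_{i,t},H_{i,t},t) \ge \ttheta(\eta_{i,t},t)^\top a_* \ge \theta_*^\top a_*$, the last inequality again from the definition of $\optimism_t(\delta)$. Hence $\max_j K(j,t) \ge \theta_*^\top a_* > -\infty$, so $\rcolts$ enters its `else' branch and sets $a_t = a(\eta_{i_{*,t},t},H_{i_{*,t},t},t)$, $\ttheta_t = \ttheta(\eta_{i_{*,t},t},t)$, so that $\ttheta_t^\top a_t = \max_j K(j,t) \ge \theta_*^\top a_*$.

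There is no deep obstacle here; the delicate points are just bookkeeping: (i) the measurability check that lets $\mu(\optimism_t(\delta))$ be treated as a constant after conditioning on $\hist_{t-1}$, and (ii) arranging the union bounds so the consistency failure and the per-round ``all draws miss $\optimism_t(\delta)$'' failures together cost only $2\delta$ --- which is exactly why $I_t$ is chosen to grow like $\log(t(t+1)/\delta)$, so that $\sum_t e^{-\pi I_t} \le \delta$.
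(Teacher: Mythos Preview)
Your proposal is correct and follows essentially the same approach as the paper's proof: condition on consistency, use independence of the $I_t$ draws to bound the probability that none lands in $\optimism_t(\delta)$ by $(1-\pi)^{I_t} \le e^{-\pi I_t} \le \delta_t$, union bound over $t$, and then observe that a single locally optimistic draw forces $\max_j K(j,t) \ge \theta_*^\top a_*$. Your version is more explicit about the measurability of $\mu(\optimism_t(\delta))$ and the arrangement of the union bounds, but these are just bookkeeping details the paper leaves implicit.
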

The `local optimism condition' on $\mu$ above is reminiscent of the global optimism condition of Abeille \& Lazaric \cite{abeille_lazaric}, and indeed the same result holds under a global optimism assumption with unknown constraints. However, the analysis in this prior work does not extend to unknown constraints due to its reliance of convexity (\S\ref{sec:related_work}), and resampling bypasses this issue. See \S\ref{appx:conditions} for more details.%

Lemma~\ref{lemma:rcolts_main_lemma} enables the use of standard optimism based regret analyses \cite[e.g.][]{abbasi2011improved}. By operationalising the condition on $\mu$ via the coupled design in \S\ref{sec:coupled_noise_design}, we show
\begin{theoremlowspace}\label{thm:rcolts_bounds}
    If $\mu$ is the law induced by ${\mathrm{Unif}(\sqrt{3d}\mathbb{S}^d)}$ under the coupled design of Lemma~\ref{lemma:coupled_noise_design}, then with probability at least $1-\delta,$  $\rcolts(\mu, 4, \delta/2)$ ensures that for all T, \[ %
    \max(\saf_T, \eff_T) = \tO( \sqrt{d^3 T + d^2 T \log(m/\delta)}).\]%
\end{theoremlowspace}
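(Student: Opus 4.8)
The argument is an optimism-based regret analysis; the one nonstandard point is that optimism is supplied by \emph{resampling} rather than by construction. First I would instantiate the supplied machinery with the stated noise. The base law $\nu = \mathrm{Unif}(\sqrt{3d}\,\mathbb{S}^d)$ satisfies the anticoncentration/concentration hypotheses of Lemma~\ref{lemma:coupled_noise_design} with constant $p = 0.28$ and deterministic bound $\bar B \equiv \sqrt{3d}$ (this is the verification deferred to \S\ref{appx:simple_reference_laws}), so the coupled law $\mu$ (the push-forward of $\zeta \mapsto (\zeta^\top, -\onem\zeta^\top)$) inherits, from the proof of Lemma~\ref{lemma:coupled_noise_design}, the \emph{local optimism rate} $\indi_{\con_t(\delta/2)}\,\mathbb{E}\bigl[\mu(\optimism_t(\delta/2))\mid\hist_{t-1}\bigr] \ge 0.28\,\indi_{\con_t(\delta/2)}$ at every $t$. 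Since the resampling order is $r = 4 \ge 1/0.28$, Lemma~\ref{lemma:rcolts_main_lemma}, applied with confidence parameter $\delta/2$, yields an event $\mathsf{E}$ of probability at least $1-\delta$ on which the consistency event $\con(\delta/2)$ holds \emph{and} the chosen pair is optimistic, $\theta_*^\top a_* \le \ttheta_t^\top a_t$, at every $t$. On $\mathsf{E}$ the resampling has succeeded each round --- some perturbation keeps $a_*$ feasible --- so the fallback branch $a_t = a_{t-1}$ is never invoked, $\ttheta_t$ and $\tPhi_t := \tPhi(H_{i_{*,t}},t)$ are well defined, and $a_t$ is a genuine maximiser of the feasible program $\max\{\ttheta_t^\top a : \tPhi_t a \le \alpha,\, a \in \mathcal{A}\}$; in particular $\tPhi_t a_t \le \alpha$.

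Next I would push the deterministic norm bound $\|\zeta\| = \sqrt{3d}$ through the coupled design: every resampled $\eta_{i,t}$ and every row of every resampled $H_{i,t}$ has norm $\le \sqrt{3d}$ surely, so the Cauchy--Schwarz step inside Lemma~\ref{lemma:basic_noise_concentration_and_cauchy-schwarz} in fact holds for \emph{all} $I_t$ resamples at once on the consistency event alone (no extra union bound), giving on $\mathsf{E}$, uniformly in $t$ and $a$, $|(\theta_* - \ttheta_t)^\top a| \le M_t(a)$ and $\max_i |(\tPhi_t^i - \Phi_*^i)a| \le M_t(a)$, with $M_t$ as in Lemma~\ref{lemma:basic_noise_concentration_and_cauchy-schwarz} and $B_t = 1+\sqrt{3d} = O(\sqrt d)$. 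From here the two metrics fall out in parallel. For regret, optimism plus the first bound give $\bigl(\Delta(a_t)\bigr)_+ = \bigl(\theta_*^\top a_* - \theta_*^\top a_t\bigr)_+ \le \bigl((\ttheta_t - \theta_*)^\top a_t\bigr)_+ \le M_t(a_t)$. For risk, $\tPhi_t a_t \le \alpha$ plus the second bound give $(\Phi_* a_t - \alpha)^i \le M_t(a_t)$ for every row $i$, hence $\bigl(\max_i (\Phi_* a_t - \alpha)^i\bigr)_+ \le M_t(a_t)$. Summing either bound over $t \le T$ and invoking the second conclusion of Lemma~\ref{lemma:basic_noise_concentration_and_cauchy-schwarz}, together with the standard $\omega_T = \tO\bigl(\sqrt{d + \log(m/\delta)}\bigr)$ (via $\log\det V_T \le d\log(1+T/d)$), gives, for all $T$ on $\mathsf{E}$, \[ \max(\eff_T, \saf_T) \;\le\; \sum_{t \le T} M_t(a_t) \;\le\; B_T\,\omega_T\cdot O(\sqrt{dT}) \;=\; \tO\bigl(\sqrt{d^3 T + d^2 T\log(m/\delta)}\bigr), \] which is exactly the claimed bound.

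The crux --- and the only real departure from standard linear-bandit optimism --- is producing the event $\mathsf{E}$: showing that $O(\log(t/\delta))$ resamples reliably deliver an optimistic \emph{and} feasible action every round. This hinges on the local optimism rate being bounded below by a universal constant, which is the content of Lemma~\ref{lemma:coupled_noise_design} and the reason the \emph{coupled} perturbation $\zeta \mapsto (\zeta^\top, -\onem\zeta^\top)$ is used instead of independent row noise: with independent rows the chance that all $m$ perturbed constraints simultaneously retain $a_*$ would decay like $e^{-\Omega(m)}$, forcing $e^{\Omega(m)}$ resamples. The remaining work is bookkeeping --- checking the fallback branch is vacuous on $\mathsf{E}$ so that $\ttheta_t,\tPhi_t$ are defined and $a_t$ respects its perturbed constraints, and confirming that the deterministic support of $\mathrm{Unif}(\sqrt{3d}\,\mathbb{S}^d)$ lets the concentration bound cover all $I_t$ resamples without inflating $B_t$ or consuming extra probability.
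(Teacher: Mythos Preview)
Your proposal is correct and follows essentially the same optimism-based argument as the paper's proof: invoke Lemma~\ref{lemma:rcolts_main_lemma} (with $r=4\ge 1/0.28$) to obtain per-round optimism and feasibility, then use the Cauchy--Schwarz bound to control both $\Delta(a_t)$ and the per-round constraint violation by $M_t(a_t)$, and sum via the elliptical potential lemma. Your explicit observations that the deterministic support $\|\zeta\|=\sqrt{3d}$ obviates any union bound across the $I_t$ resamples, and that the fallback branch $a_t=a_{t-1}$ is vacuous on the good event, are exactly the points the paper makes (more tersely) when it notes ``the second happens with certainty for us.''
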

\textbf{Instance-Independent Regret Bound.} The above result limits both regret and risk to $\tO(\sqrt{d^3 T})$, with no instance-specific terms, unlike $\mathcal{R}(\asafe)$ in $\scolts$. In particular, this bound holds even if $\max_a \Gamma(a) = 0,$ i.e., the problem is marginally feasible. This result is directly comparable to the ${\tO(\sqrt{d^2 T})}$ bound on both regret and risk under the \textsc{doss} method \cite{gangrade2024safe}, and loses a ${\sqrt{d}}$-factor relative to this, a loss that appears in all known efficient linear bandit methods.%

\textbf{Computational Costs.} $\rcolts$ with $\mu$ as above solves $\sim 4\log(t^2/\delta)$ optimisations of $\ttheta_t^\top a$ over $\{\smash{\tPhi_t} a \le \alpha\} \cap \mathcal{A}$. Again, Theorem~\ref{thm:rcolts_bounds} is resilient to approximate optimisation with error, say, $1/t,$ and so this takes ${O(\lpt \cdot \log^2 t)}$ computation per round, a factor of $\log(t)$ slower than $\scolts$, but still efficient in the practical regime of $\log(T/\delta) = O(\mathrm{poly}(d,m))$. The main point of comparison, however, is \textsc{doss}, which instead needs to solve $(2d)^{m+1}$ such programs, and so uses $(2d)^{m+1} \lpt \cdot \log(t)$ computation per round. $\rcolts$ is practically \emph{much faster} even for small domains with long horizons---for instance, even with $T = 1/\delta = 10^{10}, 4\log(t^2/\delta) \le (2d)^{m+1}$ for all $d \ge 4, m \ge 2$.%

\textbf{Relationship to Posterior Quantile Indices and Safe MABs.} The resampling approach executed in $\rcolts$ is closely related to the posterior-quantile approach of the \textsc{BayesUCB} method \cite{kaufmann2012bayesian}, which uses a quantile of the arm posteriors as a reward index instead of a frequentist upper confidence bound. Indeed, we can compute such a quantile in a randomised way by taking many samples from the posterior of each arm, and then picking the largest of the samples as the reward index. Most pertinently, this approach was proposed for safe multi-armed bandits \cite{chen2022strategies}, wherein this posterior quantile index is used to decide on the `plausible safety' of putative actions.  The same work further argued that the usual \emph{single-sample} \textsc{ts} cannot obtain sublinear regret in safe MABs. The $\rcolts$ approach can be viewed as an efficient extension of this principle to linear bandits with continuum actions, and differs by directly optimising the indices under each draw, and then picking the largest, instead of performing an untenable per-arm posterior quantile computation.%

\textbf{$\rcolts$ Without Resampling.} Given the lack of a safe action to play, one cannot direct establish the feasibility of the perturbed programs by contracting the confidence radius of a single action as in $\scolts$. However, if we introduce a small amount of `flat' exploration whenever $V_t$ is `small', then this ensures that any $a$ with $\Gamma(a) > 0$ will eventually be strictly feasible under perturbations. If such $a$ exists, we only need a single noise draw to attain feasibility, and can bootstrap the scaling analysis of $\scolts$ to show bounds. We term this method `exploratory-COLTS', or $\ecolts$, and specify and analyse it in \S\ref{appx:ecolts}. This results in the following soft-enforcement guarantee.%
\begin{theorem}\label{thm:ecolts_new_main}
    If $\mu$ is the law induced by $\mathrm{Unif}(\sqrt{3d} \mathbb{S}^d)$ under the coupled noise design, then the $\ecolts(\mu, \delta/3)$ method of Algorithm~\ref{alg:ecolts} ensures that with probability at least $1-\delta,$ for all $T$, \begin{align*}%
    \saf_T = \tO(\sqrt{d^3 T}) + \min_{a}\tO\Bigl(  \frac{d^3\|a\|^4}{\kappa^2 \Gamma(a)^4}\Bigr),\textit{ and } \eff_T =  \min_{a : \Gamma(a) > 0} \left\{ \mathcal{R}(a) \tO(\sqrt{d^3 T}) + \tO\Bigl(\frac{d^3 \|a\|^4}{\kappa^2 \Gamma(a)^4}\Bigr) \right\}, \end{align*} where $\kappa$ is a constant depending on the geometry of $\mathcal{A}$. %
\end{theorem}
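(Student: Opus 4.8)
The plan is to treat $\ecolts$ as $\scolts$ in disguise: for the purposes of the analysis only, the known safe action $\asafe$ is replaced by an arbitrary target $a$ with $\Gamma(a)>0$. The single property of $\asafe$ that the $\scolts$ proof of \S\ref{sec:unsaturation_and_lookback} actually uses is that, once $M_t(\asafe)\le\gsafe/3$, the anchor is \emph{deterministically} feasible for the perturbed constraints $\tPhi_t$, which is what makes the perturbed program non-empty and lets one build the analysis-only scaling $\bar b_{s\to t}$. The job of the forced ``flat'' exploration in $\ecolts$ is precisely to manufacture this property when no a priori safe action is available. Thus the proof has two halves: (i) an exploration-phase analysis fixing a burn-in time $n_*(a)$ past which $M_t(a)\le\Gamma(a)/3$; and (ii) a re-run of the look-back argument of \S\ref{sec:unsaturation_and_lookback} with $(\asafe,\gsafe)$ replaced by $(a,\Gamma(a))$ on the rounds after $n_*(a)$, with the earlier and exploration rounds charged trivially.

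\textbf{Step 1 (exploration drives up $V_t$).} Fix the exploration distribution to be a near-optimal design $\pi$ on $\mathcal{A}$ and set $\kappa := \max_{\pi}\lambda_{\min}(\mathbb{E}_{a\sim\pi}[aa^\top])$, the geometry constant in the statement; $\ecolts$ plays an independent draw from $\pi$ whenever $V_t$ is ``small'' against a slowly growing threshold (and also whenever the perturbed program turns out empty). A matrix Freedman/Tropp bound for $\sum_s (a_sa_s^\top - \mathbb{E}[a_sa_s^\top\mid\hist_{s-1}])$ over the exploration rounds yields, with high probability, $\lambda_{\min}(V_t)=\tilde\Omega(\kappa N_t)$ where $N_t$ is the number of exploration rounds up to $t$; because the exploration set is itself $\hist$-adaptive, this is carried out along a stopping-time filtration and peeled over dyadic values of $N_t$. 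Together with the threshold rule this gives: the total number of exploration rounds is $\tO(\sqrt{T})$ (so, at $O(1)$ risk and regret each, they cost $\tO(\sqrt{T})$); and, for $t\ge n_*(a)$, $\lambda_{\min}(V_t)$ is large enough that, using $B_t=\Theta(\sqrt{d})$ and $\omega_t(\delta)=\tO(\sqrt{d})$, $M_t(a)\le B_t\,\omega_t(\delta)\,\|a\|/\sqrt{\lambda_{\min}(V_t)}\le \Gamma(a)/3$. Matching this width requirement against the exploration-driven growth rate is exactly what produces $n_*(a)=\tO\!\big(d^3\|a\|^4\,\kappa^{-2}\,\Gamma(a)^{-4}\big)$. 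I expect this step --- engineering a trigger that fires only $\tO(\sqrt{T})$ times no matter how $\mathcal{A}$ and the noise conspire, while forcing $\lambda_{\min}(V_t)$ up at exactly the rate giving the stated $n_*(a)$, and while remaining compatible with the adaptive-schedule concentration --- to be the main obstacle.

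\textbf{Step 2 (anchor feasibility and the risk bound).} For $t\ge n_*(a)$, on the consistency event, Lemma~\ref{lemma:basic_noise_concentration_and_cauchy-schwarz} gives $\tPhi_t a\le \Phi_* a + M_t(a)\onem\le \alpha-(2\Gamma(a)/3)\onem<\alpha$, so $a$ is strictly feasible for the $t$-th perturbed program; hence $b_t:=a(\eta_t,H_t,t)$ exists and the algorithm plays $a_t=b_t$ (never the default). Since $\Phi_* b_t\le\alpha+M_t(b_t)\onem$, the per-round risk past burn-in is at most $M_t(a_t)$, and $\sum_t M_t(a_t)=\tO(\sqrt{d^3 T})$ by Lemma~\ref{lemma:basic_noise_concentration_and_cauchy-schwarz} with $B_T=\Theta(\sqrt{d})$. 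Adding the $O(n_*(a))$ burn-in rounds and the $\tO(\sqrt{T})$ exploration rounds, and minimising over $a$, yields the claimed bound on $\saf_T$.

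\textbf{Step 3 (regret via the re-anchored look-back).} On $t\ge n_*(a)$ we play $a_t=b_t$ directly --- soft enforcement needs no safety scaling, so effectively $\rho_t\equiv 1$ --- so it suffices to bound $\sum\Delta(b_t)$. Re-run \S\ref{sec:unsaturation_and_lookback} with $\asafe$ replaced by $a$: by Lemma~\ref{lemma:coupled_noise_design}, each $b_t$ is unsaturated with probability at least $0.28/2$ --- this needs only local optimism at $a_*$, which holds unconditionally given consistency and in particular regardless of how much exploration has occurred; looking back to the last unsaturated post-burn-in round $\tau(t)$, the analysis-only scaling $\bar b_{\tau(t)\to t}$ towards $a$ is $\tPhi_t$-feasible precisely because $M_t(a)\le\Gamma(a)/3$, and linearity together with a re-anchored Lemma~\ref{lemma:rho_bounds_main_text} give the analogue of Lemma~\ref{lemma:look_back_bound}, $\Delta(b_t)\le 6\,\mathcal{R}(a)\big(M_t(a_t)+J_{\tau(t)\to t}M_{\tau(t)}(a_{\tau(t)})\big)$. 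The accumulation error $\sum J_{\tau(t)\to t}M_{\tau(t)}(a_{\tau(t)})$ is then bounded by $\tO(\sqrt{d^3 T})$ as in Lemma~\ref{lemma:width_control} with $\chi=0.28$; here one must check that $\tau(t)$ is restricted to the non-exploration (COLTS) rounds and that excising the interleaved exploration and burn-in rounds leaves the ``gap between successive unsaturated rounds is $O(1/\chi)$'' martingale argument intact --- which it does, since those rounds are simply skipped. Adding $O(n_*(a))+\tO(\sqrt{T})$ from the pre-burn-in and exploration rounds and minimising over $a$ with $\Gamma(a)>0$ gives the bound on $\eff_T$. The secondary bookkeeping difficulty here --- reconciling the $\scolts$ look-back machinery, which was written for an uninterrupted run of COLTS rounds, with its puncturing by exploration and burn-in rounds --- is routine but needs care.
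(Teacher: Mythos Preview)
Your proposal is correct and follows essentially the same route as the paper: replace $\asafe$ by an arbitrary anchor $a$ with $\Gamma(a)>0$, use forced exploration to drive $M_t(a)\le\Gamma(a)/3$ after a burn-in of order $d^3\|a\|^4/(\kappa^2\Gamma(a)^4)$, then re-run the look-back analysis of \S\ref{sec:unsaturation_and_lookback} with $\rho_t\equiv1$, charging burn-in and exploration rounds trivially.

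The one substantive difference is in the exploration mechanism. You propose a \emph{randomised} near-optimal design $\pi$ and invoke a matrix Freedman/Tropp bound to control $\lambda_{\min}(V_t)$; you then flag the design of the trigger rule and the adaptive-schedule concentration as the main obstacle. The paper instead uses a \emph{deterministic} $\kappa$-good policy (e.g.\ round-robin over a barycentric spanner of $\mathcal{A}$), so that after $N$ exploration plays one has $\sum e_ie_i^\top\succeq\kappa\lfloor N/d\rfloor I$ \emph{surely}, with no concentration argument needed. The trigger is then just the explicit counter rule $u_{t-1}\le B_t\omega_t(\delta)\sqrt{dt}$, which fires at most $B_T\omega_T\sqrt{dT}=\tO(\sqrt{d^3T})$ times by construction. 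This sidesteps entirely what you identified as the hard part: there is no stopping-time peeling, no matrix martingale, and the burn-in bound $t_0(a)$ drops out by directly inverting $M_t(a)^2\le B_t\omega_t\|a\|^2\kappa^{-1}\sqrt{d/t}\le(\Gamma(a)/3)^2$. Your randomised version would also work but is more machinery than necessary. The remaining steps (anchor feasibility, per-L-step risk $\le M_t(a_t)$, the re-anchored Lemma~\ref{lemma:look_back_bound} and Lemma~\ref{lemma:width_control}) match the paper essentially line for line.
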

Relative to $\rcolts,$ the above guarantees are instance-dependent, and are only nontrivial if $\max_a \Gamma(a) > 0,$ i.e., the Slater parameter of the optimisation problem induced by $\theta_*, \Phi_* , \mathcal{A}$ is nonzero. The advantage of $\ecolts$ lies in its reduced computation. Comparing to $\scolts$, the above loses the strong $\saf_T = 0$ safety, but improves regret by adapting to the best possible $\mathcal{R}(a)$. %

\section{An Informal Discussion of Contextual Safe Linear Bandits}\label{sec:contextual}

Rather than static bandit problems, most practical scenarios are contextual, wherein the learner observes some side information $x_t$ before choosing an action, and this side information affects the reward and constraint structure at time $t$. A common setting to model this \cite{pacchiano2024contextual, agrawal2013thompson} is to assume that there is a known feature map $\varphi : \mathcal{X} \times \mathcal{A} \to \mathbb{R}^d$ such that the reward and constraints at time $t$ are of the form \[ \theta_*^\top \varphi(x_t, a) \textrm{ and } \Phi_* \varphi(x_t, a) \le \alpha. \] Throughout, we assume the same feedback structure, i.e., noisy measurements of $\theta_*^\top \varphi(x_t, a_t) $ and $\Phi_* \varphi(x_t,a_t)$. Naturally, regret is compared to the optimal policy $\mathscr{A}_* : \mathcal{X} \to \mathcal{A}$, where \[ \mathscr{A}_*(x) = \argmax \theta_*^\top \varphi(x,a) : \Phi_*\varphi(x,a) \le \alpha, a \in \mathcal{A}.\] It should be noted that the Lemma~\ref{lemma:online_linear_regression} on consistency, and the elliptical potential lemma (Lemma~\ref{lemma:elliptical_potential_lemma}) continue to hold, with $V_t$ replaced by $I + \sum_{s \le t} \varphi(x_s, a_s)\varphi(x_s,a_s)^\top,$ and $a_t$ by $\varphi(x_t, a_t)$. Notationally, we extend $\Delta(a), \Gamma(a)$ to $\Delta(x, a) = \theta_*^\top (\varphi(x,\mathscr{A}_*(x)) - \varphi(x,a) )$ and $\Gamma(x,a) = \max_i(( \alpha - \Phi_* \varphi(x, a))^i)_+$.

A key observation is that our result on the frequency of the local optimism persists in this contextual setting. Under the hood, this essentially shows that at any $t$, and for any vector $\varphi,$ \[ \mathbb{P}\left\{ (\eta, H) : \ttheta^\top \varphi \ge \theta_*^\top \varphi, \tPhi \varphi \le \Phi_* \varphi \middle| \hist_{t-1} \right\} \indi_{\con_t(\delta)} \ge \pi \indi_{\con_t(\delta)},\] where $\pi \ge 0.28$ for the coupled noise driven by $\mathrm{Unif}(\sqrt{3d}\mathbb{S})^d$. Consequently, frequent local optimism follows in the contextual setting by using this result for $\varphi(x_t, \mathscr{A}_*(x_t))$ at time $t$. 

The above observation means that using the same coupled noise lets us extend the results of Theorem~\ref{thm:rcolts_bounds}  on the regret of $\rcolts$ to the contextual case with only cosmetic changes in the analysis. This holds no matter how the sequence $x_t$ is selected, as long as the noise remains conditionally centred and subGaussian given $a_t, x_t,$ the algorithmic randomness, and the history. Note, however, that the optimisation over $a$ may become harder due to the feature map $\varphi$, and efficiency requires further structural assumptions on $\varphi$.

\newcommand{\policysafe}{\mathscr{A}_{\mathrm{safe}}}
Focusing now on $\scolts$, let us first note that if we were given a safe action $\asafe$ that was safe no matter the context, i.e., such that \( \inf_x \Gamma(x, \asafe) \ge \Gamma_{\mathrm{safe}} > 0,\) and $\varphi$ were `nice' in terms of $a \in \mathcal{A}$,\footnote{We essentially need a way to efficiently select an action $a$ such that $\varphi(x_t, a) = \rho \varphi(x_t, b_t) + (1-\rho)\varphi(x_t,\asafe),$ so that safety can still be attained by mixing with $\asafe$.} then as long as we \emph{know} $\Gamma_{\mathrm{safe}}$ a priori, no real change is required, and the guarantees of Theorem~\ref{thm:scolts_regret} for $\scolts$ extend to the contextual setting,\footnote{upto replacing $\Delta(\asafe)$ by $1$} since we can again guarantee the frequent choice of unsaturated actions through our persistent local optimism property. We note that previous works on safe contextual bandits \cite{pacchiano2024contextual} assume exactly this existence of an `always very safe' action. Nevertheless, this structure is unrealistic: practically, safety should depend strongly on the context, and it is unlikely that a single action would always be safe, let alone have a large safety margin. 

A more natural assumption is that instead of a single safe action, we are given a safe policy $\policysafe :\mathcal{X}\to \mathcal{A}$. Here, again, if we know that $\inf_x \Gamma(x, \policysafe(x)) \ge \Gamma_{\mathrm{safe}} > 0,$ and we know the value of $\Gamma_{\mathrm{safe}}$, then the bound of Theorem~\ref{thm:scolts_regret} can be extended to show regret of $\tO( (\inf_x \Gamma(x,\policysafe(x)))^{-1} \sqrt{d^3 T})$. Without knowing this value, we need to be able to determine a good estimate of $\Gamma(x_t, \policysafe(x_t))$ in order to appropriately ensure feasiblity of perturbed programs, and to scale back the actions $b_t$. This can be a challenging task, especially if $x_t$ varies in an adversarial way, and structures enabling such estimation must be assumed.\footnote{For instance, if $x_t$ were drawn in some static randomised way, and $\Gamma$ were sufficiently simple, then we could learn $\Gamma(x,\policysafe(x))$ using regression techniques.} Finally, note that even given $\Gamma(x,\policysafe(x)),$ the aforementioned regret bound scales with $(\inf_x \Gamma(x, \policysafe(x)))^{-1}$, which is overly pessimistic. A different analysis is needed to clearly express how variation in this margin with $x$ affects the regret. %

This lacuna also affects the $\ecolts$ method of \S\ref{appx:ecolts}, but to a lesser extent. Sticking with `nice' feature maps, again, if there \emph{exists} an action that is always safe, i.e., if $\max_{a } \min_{x} \Gamma(x_t, a_t) > 0,$ then the guarantees of Theorem~\ref{thm:ecolts_new_main} extend with arbitrary context sequence. Without this guarantee, the main gap is the exploration policy being utilised, which must be adapted to attain a good coverage over $\{\varphi(x,a)\}$ even as $x_t$ varies. Given such a policy, however, the results of Theorem~\ref{thm:ecolts_new_main} again extend to the contextual case with arbitrary $x_t$. %

\section{Simulations}\label{sec:simulations}%

We give a brief summary of our simulations, leaving most details, and well as deeper investigation of our methods to \S\ref{appx:simulations}. In all cases, we utilise the coupled noise design, driven with the (uninflated) noise $\nu = \mathrm{Unif}(0.5 \cdot \mathbb{S}^d),$ in accordance with the discussion in \S\ref{sec:scolts}. The same noise is used for \textsc{safe-lts}. %

\begin{wraptable}[7]{r}{.44\linewidth}
\vspace{-1.5\baselineskip}
\centering
\caption{\footnotesize $\eff_T$ and $\saf_T$ at $T = 5 \cdot 10^4$ for $\rcolts$ with $1,2,3$ samples per round ($100$ trials).}\vspace{0.2\baselineskip}
\begin{tabular}{ccc} \label{table:rcolts_regret}
Samples & $\mathbf{R}_T$ & $\mathbf{S}_T$ \\ \hline\hline
$1$                 & $658 \pm 170$         & $2891 \pm 171$      \\
$2$                 & $397 \pm 116$         & $3126 \pm 137$      \\
$3$                 & $301 \pm 102$         & $3266 \pm 172$     
\end{tabular}
\end{wraptable}
\textbf{Resampling tradeoff in $\rcolts$.} For $d = 9$, we optimise ${\theta_* = \mathbf{1}_d/\sqrt{d}}$ over $\mathcal{A} = [0, 1/\sqrt{d}]^d,$ with a $9 \times 9$ constraint matrix (i.e., $m = 9$). In this case, the action $0$ is feasible, and so $\rcolts$ without any resampling is effective. Since $a = 0$ has a nontrivial safety margin, $\rcolts$, even without resampling, is effective for this problem. This is borne out in Table~\ref{table:rcolts_regret}, which shows regret and risk at the terminal time $T$. We see that resampling slightly worsens risk, but significantly improves regret (although with diminishing returns). Further, both regret and risk are far below the ${\sqrt{d^2 T}}$ scale expected from our bounds. We note that while a single iteration of $\rcolts$ takes $\sim 1\textrm{ms}$, since $(2d)^{m+1} > 10^{12},$ this would take \emph{years} for \textsc{doss}, and so we do not implement it. In any case, note that the computational advantage of $\rcolts$ is extremely strong.%

\textbf{Significant Computational Advantage and Regret Parity/Improvement of $\scolts$.} We compare $\scolts$ with the hard enforcement method \textsc{safe-lts} \cite{moradipari2021safe}, which has been shown to match the performance of alternate such methods, while being faster. Both methods are run on the $d = m = 9$ instance above, with ${\asafe = 0}$. As expected, both never play unsafe actions. Further (Fig.~\ref{fig:scolts_main}, left), $\scolts$ achieves an \emph{improvement} in regret relative to \textsc{safe-lts}, while reducing wall-clock time by a $5.1 \times$. To gain a deeper understanding of $\scolts$'s computational advantage, we investigate the same with growing $m \in \{1, 10, 20, \dots, 100\}$ constraints for a simple $d = 2$ setting (see \S\ref{appx:scolts_varying_m} for the setup). In this problem, the benefit is even starker (Fig.~\ref{fig:scolts_main}, right). For $m \ge 10,$ the regret of \textsc{safe-lts} is $2-4\times$ larger than that of $\scolts, $ i.e., the latter has much better regret ($m =1$ has wide confidence bands for the ratio, but mean $\sim 1.5$) Further, the computational costs of \textsc{safe-lts} relative to $\scolts$ grow roughly linearly, starting from $\approx 1.3\times$ for $m = 1$ to $>18\times$ at $m = 100$.%

\begin{figure}[H]
    \centering
        \includegraphics[width = 0.47\linewidth]{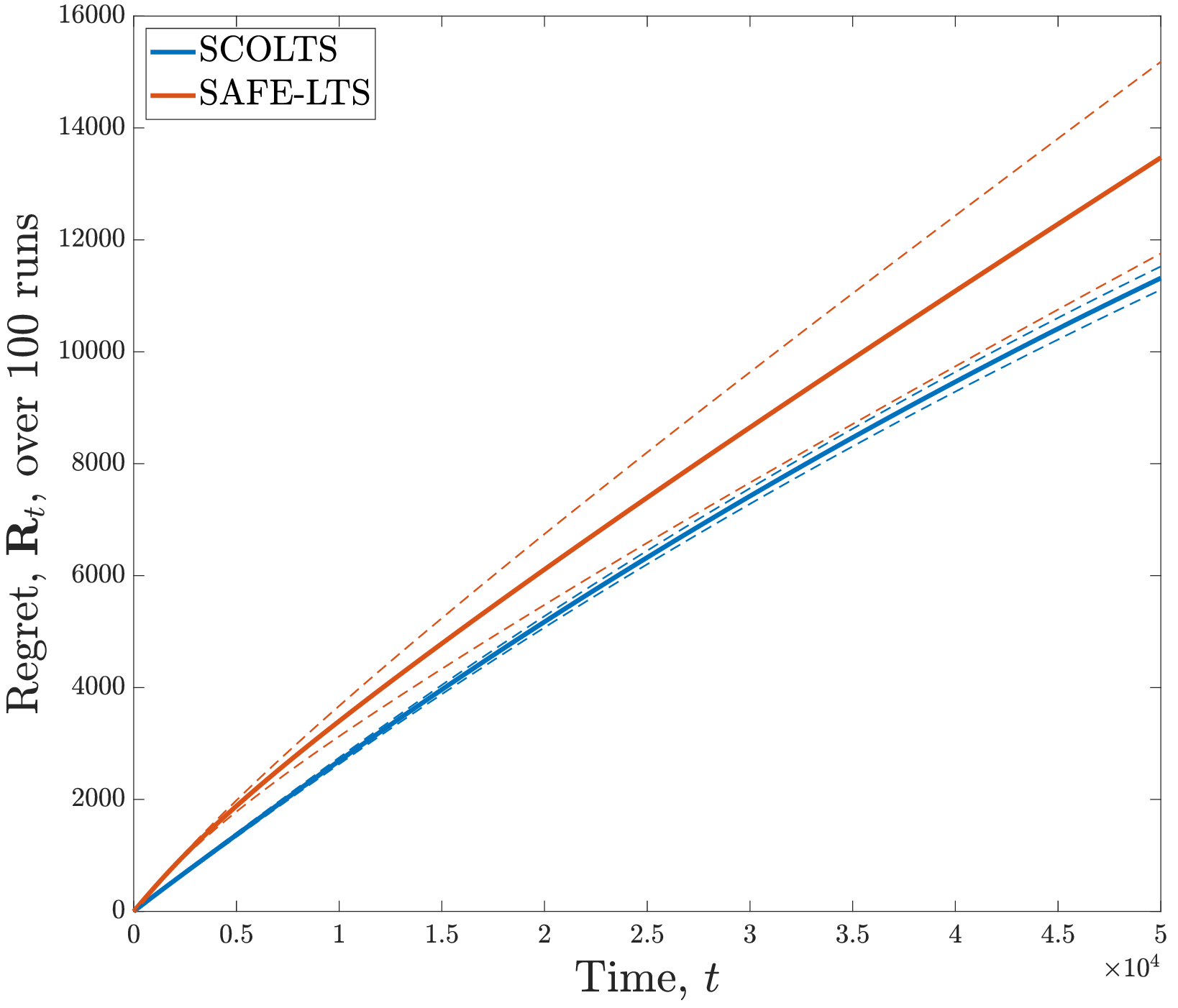}\label{fig:scolts_regret_main}~\hspace{0.05\linewidth}~\includegraphics[width = 0.47\linewidth]{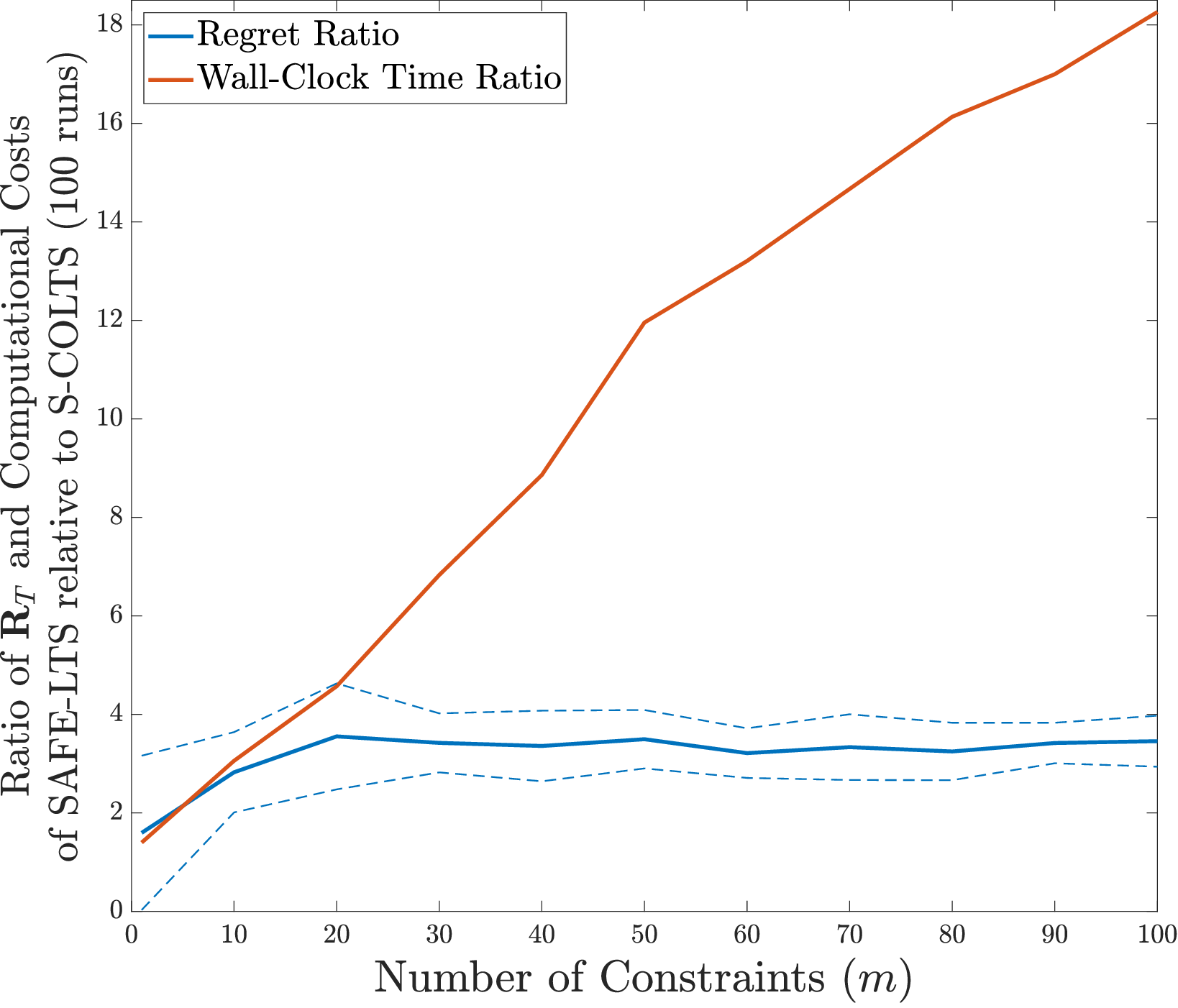}\label{fig:scolts_varying_m_main}\vspace{-.3\baselineskip}
        \caption{\footnotesize \textsc{Computational and Regret Comparisons of $\scolts$ and \textsc{safe-lts}}. \emph{Left}. Regret traces in the $d = 9$ instance (one-sigma error bars); $\scolts$ mildly improves regret, and is $5 \times$ faster. \emph{Right}. Relative performance as $m$ is varied in the $d = 2$ instance. The speedup of $\scolts$ grows linearly with $m$ from $1.3\times$ to $>18\times$. Further, for $m \ge 10,$ the regret of $\scolts$ is $2\textrm{-}3\times$ smaller than that of \textsc{safe-lts}}    \label{fig:scolts_main}
\end{figure}

\section{Discussion}

Our paper has described a novel and computationally efficient sampling-based approach to the SLB problem with the $\colts$ framework, which applies to both the hard and soft constraint enforcement settings. In particular, $\scolts$ uses a scaling approach to enable hard constraint enforcement while obtaining tangible computational gains over prior methods for this problem, and $\rcolts$ (and $\ecolts$) provide the first efficient soft constraint enforcement methods in continuous-action SLBs. The operation and analysis of these algorithms relies crucially on the coupled perturbation noise design to ensure frequent local optimism. 

Of course, a number of open problems remain. The chief amongst these is the resolution of the contextual bandit scenarios we discussed in \S\ref{sec:contextual}. We also mention two further intriguing lines of study specific to the $\colts$ framework. Firstly, we observe that the frequentist soft constraint enforcement method \textsc{doss} obtains much stronger regret bounds of $\tO(d^2 \log^2 T)$ when $\mathcal{A}$ is a polytope, and it is a natural open question to determine if $\rcolts$ can attain such strong regret guarantees. This direction is further supported by the strong regret observations in \S\ref{sec:simulations}. Secondly, as seen in \S\ref{appx:simulations_decoupled}, while we cannot analyse the decoupled noise design which samples perturbations independently, this appears to still admit sublinear regret and risk, proving which may require new insights about the global optimism properties of $\colts.$

\subsection*{Acknowledgements} The authors would like to thank Aldo Pacchiano for helpful discussions. This research was supported by the Army Research Office
Grant W911NF2110246, AFRL Grant FA8650-22-C1039, and the National Science Foundation grants CPS-2317079, CCF-2007350, and CCF-1955981.

\printbibliography

\appendix
\section{Glossary} \label{app:glossary} \raggedbottom
\begin{table}[H]
\centering
\setlength{\tabcolsep}{3pt}\vspace{-\baselineskip}
\resizebox*{!}{\dimexpr\textheight-2\baselineskip\relax}{
\begin{tabular}{clc}\hline
Symbol                           & Explanation                                                                                                               & Expression/Comments                                                                                                                                                                                                                                                                          \\ \hline\hline
$(\theta_*, \Phi_*)$                       & True objective/constraints                                                                                                     & $ \in \mathbb{R}^{d \times 1} \times \mathbb{R}^{m \times d}$                                                                                                                                                                                                                                                            \\
$\alpha$                         & Constraint level                                                                                                  &                                     $\in \mathbb{R}^{m \times 1}$                                                                                                                                                                                                                                                         \\
$\mathcal{A}$                    & Action domain                                                                                                             &                                                                                                                                                                                                                                                                                              \\
$a_*$                            & Optimal action for $(\theta_*, \Phi_*)$                                                                                   & $\argmax \{ \theta_*^\top a : a \in \mathcal{A}, \Phi_* a \le \alpha\}$                                                                         \\
$K(\theta, \Phi)$                & Value function                                                                                                            & \begin{tabular}[c]{@{}c@{}}$\sup \{ \theta^\top a : a \in \mathcal{A}, \Phi a \le \alpha \},$ \\ $-\infty$ if $\{\Phi a \le \alpha\} \cap \mathcal{A} = \varnothing$.\end{tabular}                                                                                                         \\
$\Delta(a)$                      & Reward gap                                                                                                                & $\theta_*^\top(a_* -a)$                                                                                                                                                                                                                                                                      \\
$\Gamma(a)$                      & Safety margin                                                                                                             & $ \min_i ((\alpha - \Phi_* a)^i)_+$                                                                                                                                                                                                                                                              \\
$\mathcal{R}(a)$                 & Gap-margin ratio                                                                                                       & $1 + (\Delta(a)/\Gamma(a))$                                                                                                                                                                                                                                                                  \\ \hline
\multicolumn{3}{c}{Estimation and Signal}                                                                                                                                                                                                                                                                                                                                                                                                                   \\ \hline
$\hist_{t-1}$                    & \begin{tabular}[c]{@{}c@{}}Historical filtration\end{tabular} & See \S\ref{sec:defi}                                                                                                                                                                                                                                                                                   \\
$\hat\theta_t, \hat\Phi_t$       & RLS-estimates of parameters                                                                                               & See \S\ref{sec:defi}                                                                                                                                                                                                                                                                               \\
$V_t$                            & Action second moment                                                                                                      & $I + \sum_{s < t} a_s a_s^\top$                                                                                                                                                                                                                                                              \\
$\omega_t(\delta)$               & Confidence radius                                                                                                         & See \S\ref{sec:defi}                                                                                                                                                                                                                                                                            \\
$\mathcal{C}_t^\theta,\confset_t^\Phi$   & Confidence sets for $\theta_*, \Phi_*$                                                                                               & \\
$\consistency$                 & Consistency event at time $t$                                                                                             & $\{ \theta_* \in \confset_t^\theta(\delta), \Phi_* \in \confset_t^\Phi(\delta)\}$                                                                                                                                                                                                            \\
$\con(\delta)$           & Overall consistency                                                                                                       & $\bigcap_{t \ge 1} \con_t(\delta)$                                                                                                                                                                                                                                                           \\ \hline
\multicolumn{3}{c}{$\colts$ in general}                                                                                                                                                                                                                                                                                                                                                                                                                 \\ \hline
$\mu$                            & Perturbation law                                                                                                          & Distribution on $\mathbb{R}^{1 \times d} \times \mathbb{R}^{m \times d}$                                                                                                                                                                                                                     \\
$(\eta, H)$                      & Perturbation noise                                                                                          & $\sim \mu$, independently of $\hist_{t-1}$                                                                                                                                                                                                                                             \\
$\ttheta(\eta, t)$               & Pertrubed objective                                                                                                       & $\hat\theta_t + \omega_t(\delta) \eta V_t^{-1/2}$                                                                                                                                                                                                                                            \\
$\tPhi(H,t)$                     & Perturbed constraint                                                                                                      & $\hat\Phi_t + \omega_t(\delta) H V_t^{-1/2}$.                                                                                                                                                                                                                                                \\
$B(\xi)$                         & Tail bound on $\|\eta\|, \max_i \|H^i\|$                                                                      &                                                                                                                                                                                                                                                                                              \\
$B_t$                            & Noise radius bound                                                                                                        & $ \max(1, B(\delta_t)),$ where $\delta_t = \nicefrac{\delta}{(t^2+t)}$.                                                                                                                                                                                                                                   \\
$M_t(a)$                         & Perturbation scale at $a$                                                                                                 & $ B_t \omega_t \smash{\|a\|_{V_t^{-1}}}$                                                  \\
$a(\eta, H,t)$                   & Perturbed optimum                                                                                  & See (\ref{eqn:generic_a_t_definition})                                                                                                                                                                                                                                                                                 \\
$\mathsf{U}_t(\delta)$           & Unsaturation event                                                                                                        & $\{ (\eta, H) : \Delta( a(\eta, H,t)) \le M_t(a(\eta, H,t) \}$                                                                                                                                                     \\
$\chi$                           & Unsaturation rate                                                                                                         &                                                                                                                                                                                                                                                                                              \\
$\optimism_t(\delta)$            & Local optimism event                                                                                                      & $\{ (\eta, H) :  \ttheta(\eta, t)^\top a_* \ge \theta_*^\top a_*, \tPhi(H, t) a_* \le \alpha \}$ \\
$\pi$                            & Local optimism rate                                                                                             & \\ \hline
\multicolumn{3}{c}{Coupled Noise Design}                                                                                                                                                                                                                                                                                                                                                                                                                    \\ \hline
$\nu$                            & Baseline perturbation law                                                                                                 & Supported on $\mathbb{R}^{d \times 1}$                                                                                                                                                                                                                                                       \\
$\zeta$                          & Generic draw from $\nu$                                                                                      & $\zeta \sim \nu$, independent of $\hist_{t-1}$                                                                                                                                                                                                                                               \\
$\bar{B}$                        & Tail bound for $\nu$                                                                                                      & $\nu ( \|\zeta\| > \bar{B}(\xi) ) \le \xi$                                                                                                                                                                                                                                                   \\
$p$                              & Anticoncentration parameter for $\nu$                                                                                     & $\inf_{u} \nu( \zeta^\top u > \|u\|) \ge p$                                                                                                                                                                                                                                                  \\
$(\zeta^\top, -\onem \zeta^\top)$ & Coupled noise induced by $\nu$                                                                                 & i.e., draw $\zeta,$ set $\eta = \zeta^\top$ and $H = -\onem \zeta^\top$.                                                                                                                                                                                                                      \\ \hline
\multicolumn{3}{c}{$\scolts$}                                                                                                                                                                                                                                                                                                                                                                                                                           \\ \hline
$\asafe$                         & A priori given safe action                                                                                                      & $\Gamma(\asafe) > 0$.                                                                                                                                                                                                                                                                        \\
$\Gamma_0$                       & \begin{tabular}[c]{@{}c@{}} Reference margin (see \S\ref{appx:scolts_sampling_asafe}) for estimation) \end{tabular}                                                     & $\Gamma_0 \ge \gsafe/2$ and $\Gamma_0 \le \gsafe$                                                                                                                                                                                                                                            \\
$(\eta_t, H_t)$                  & Perturbation noise at $t$                                                                                      &                                                                                                                                                                                                                                                                       \\
$\ttheta_t, \tPhi_t$             & Perturbed parameters at $t$                                                                                          & $\ttheta_t = \ttheta(\eta_t, t), \tPhi_t = \tPhi(H_t, t)$                                                                                                                                                                                                                                    \\
$b_t$                            & Preliminary action at time $t$ (if exists)                                                                                & $b_t = a(\eta_t, H_t, t)$                                                                                                                                                                                                                                                                    \\
$\mathfrak{a}(\rho)$             & $\rho$-mixture of $b_t$ and $\asafe$                                                                                      & $\mathfrak{a}(\rho) = \rho b_t + (1-\rho) \asafe$                                                                                                                                                                                                                                            \\
$\rho_t$                         & Largest $\rho$ with safe $\mathfrak{a}(\rho)$                                                           & See (\ref{eqn:scolts_at_defi}); $a_t = \mathfrak{a}(\rho_t)$. \\
$\tau(t)$                        & Look-back time                                                                                                            & Lemma~\ref{lemma:look_back_bound}   \\ \hline
\multicolumn{3}{c}{$\ecolts$}                                                                                                                                                                                                                                                                                                                                                                                                                           \\ \hline
$(\eta_t, H_t)$                  & Perturbation noise draws at time $t$                                                                                      & $(\eta_t, H_t) \sim t$                                                                                                                                                                                                                                                                       \\
$\kappa$                         & Goodness factor of exploratory policy                                                                                     & See \S \ref{appx:ecolts}                                                       \\
$u_t$                            & Number of exploration steps up to time $t$                                                                                & $u_t \approx B_t \omega_t\sqrt{dt}$ \\ \hline
\multicolumn{3}{c}{$\rcolts$}                                                                                                                                                                                                                                                                                                                                                                                                                           \\ \hline
$r$                              & Resampling parameter                                                                                                      &                                                                                                                                                                                                                                                                                              \\
$I_t$                            & Number of resamplings at time $t$                                                                                         & $I_t = \lceil r \log(1/\delta_t)\rceil + 1$.                                                                                                                                                                                                                                                 \\
$(\eta_{i,t}, H_{i,t})$          & $i$th draw of noise perturbation at time $t$                                                                              & $\sim \mu$ independently %
\\
$K(i,t)$                         & Value under perturbation                                                                                                  & $K(\ttheta(\eta_{i,t},t), \tPhi(H_{i,t}, t) )$                                                                                                                                                                                                                                               \\
$i_{*,t}$                        & Best index at time $t$                                                                                                    & $\argmax_i K(i,t)$                                                                                                                                                                                                                                                                           \\
$a_t$                            & Action picked                                                                                                             & $a_t = a( \eta_{i_{*,t}}, H_{i_{*,t}}, t)$                                                                                                                                                                                                                                                    \\
$\ttheta_t$                      & Objective for $i_{*,t}$                                                                                                   & $\ttheta_t = \ttheta( \eta_{i_{*,t}}, t)$.                                                                                                                                                                                                                                                   \\
                                 &                                                                                                                           &                                                                                                                                                                                                                                                                                             
\end{tabular}
}
\end{table} 
\clearpage

\section{Examples of Real-World Domains where the Safe Linear Bandit Problem Applies}
\label{app:domains}

\begin{table}[h]
\centering\footnotesize
\setlength{\tabcolsep}{6pt}
\caption{\textbf{Mapping real domains to the bandit linear programming.}  In all three cases the reward is linear in an unknown parameter vector $\theta_*,$ and the 
safety/fairness predicate is an \emph{unknown linear inequality}
$\Phi_* a \le \alpha$. Feedback noise in both rewards and constraints arises through environmental or individual fluctuations.}
\resizebox{\textwidth}{!}{\begin{tabular}{llll}%
\toprule
Domain ($\,$ref.$\,$) &
Action $a\!\in\!\mathcal A\!\subset\!\mathbb{R}^{d}$ &
Reward $\theta_*^\top a + \textrm{noise}$ &
Constraints \\ \midrule
\begin{tabular}[c]{@{}l@{}} \textbf{Dose-finding} \\ \cite{aziz2021multi} \end{tabular} 
 & \begin{tabular}[c]{@{}l@{}} One-hot vector for $d$ \\ discrete dose levels \end{tabular}
  & \begin{tabular}[c]{@{}l@{}} $\theta_*^i = $ patient-level efficacy \\ probability at dose $i$ \end{tabular} &
\begin{tabular}[c]{@{}l@{}} $\Phi_*^i$ = toxicity of dose $i;$ constraint\\ so that $P(\textrm{toxic}|\textrm{dose}) \le \alpha$ \end{tabular}
\\[5pt]
\begin{tabular}[c]{@{}l@{}} \textbf{Voltage-constrained} \\ \textbf{micro-grid}\\
\cite{feng2022stability} \end{tabular} &
\begin{tabular}[c]{@{}l@{}}Active/reactive power \\ set-point $[P,Q]^\top$ \\for each bus\end{tabular}
 & \begin{tabular}[c]{@{}l@{}}$\theta_*^i$ = locational marginal\\ price vector \end{tabular}
& \begin{tabular}[c]{@{}l@{}} $\Phi_*$ = linearised network power-flow\\ imposing nodal-voltage constraints\\ under variable demand \end{tabular}
 \\[5pt]
\begin{tabular}[c]{@{}l@{}} \textbf{Fair Reccommendation} \\ \textbf{in A/B testing} \\ \cite{chohlas2024learning} \end{tabular}
& \begin{tabular}[c]{@{}l@{}} Distribution over $d$\\ items or policies \end{tabular}
& \begin{tabular}[c]{@{}l@{}}$\theta_*^i$ = revenue of item $i$\end{tabular}
& \begin{tabular}[c]{@{}l@{}} $\Phi_*^i$ = encoding group attributes\\ and costs; constraints demand fair \\ exposure for each group \end{tabular}
\end{tabular}}
\end{table}

\section{Further Related Work}\label{appx:related_work}

\emph{Distinction of Safe Bandits From BwK.} BwK settings are concerned with aggregate cost metrics of the form $\mathbf{A}_T := \max_i (\sum \alpha - \Phi_* a_t)^i,$ without the $(\cdot)_+$ nonlinearity in $\saf_T$ \cite[e.g.][]{agrawal2014bandits, badanidiyuru2013bandits}. This simple change has a drastic effect, in that BwK algorithms can `bank' violation by playing very safe actions for some rounds, and then `spend' it to gain high reward, without any net penalty in $\mathbf{A}_T$. This is appropriate for modeling aggregate cost constraints (monetary/energy/et c.), but is evidently inappropriate to model safety constraints where feasibility violation in any round cannot be offset by acting safely in another round. Notice that such behaviour is precluded by the ramp nonlinearity in $\eff_T, \saf_T$: playing too-conservatively does not decrease $\saf_T$, while any violation of constraints is accumulated, and similarly, playing suboptimally causes $\eff_T$ to rise, but playing an over-aggressive action with negative $\Delta(a)$ does not reduce $\eff_T$. 

\emph{Pure Exploration in Safe Bandits.} While our paper focuses on controlling regret and risk, naturally the safe bandit problem can be studied in the pure-exploration sense. These are studied in both the `soft enforcement' sense, in which case methods can explore both within and outside the feasible region and return actions that are $\varepsilon$-safe and $\varepsilon$-optimal \cite[e.g.,][]{camilleri2022active, katz2019top}, and the `hard enforcement', wherein exploratory actions must be restricted to the feasible region \cite[e.g.,][]{sui2015safe, bottero2022information}.

\emph{More Details on Computational Costs of Prior Methods.} Most frequentist confidence-set based hard enforcement methods pick actions by solving the program \[ \max_{\theta \in \confset_t^\theta, a \in \mathcal{A}} \theta^\top a \textrm{ s.t. } \forall \Phi \in \confset_t^\Phi, \Phi a \le \alpha. \] Assuming, for simplicity, that $\asafe = 0,$ due to the structure of the confidence sets the above constraint translates to \[ \forall i \in [1:m], \hat\Phi_t^i a + \omega_t(\delta) \|a\|_{V_t^{-1}} \onem \le \alpha. \] Notice that this constitutes $m$ different second-order conic constraints. In fact, as discussed in \S\ref{appx:ecolts}, we expect $V_t^{-1}$ to have condition number scaling as $\Omega(t^{1/4})$, which adds further computational burdens to optimising under such constraints.

Of course, as written, the above program is nonconvex due to the objective $\theta^\top a$. This can be addressed via a standard `$\ell_1$-relaxation \cite{dani2008stochastic}, which reduces the problem to solving $2d$ optimisation problems with linear objectives and the above SOC constraints, while weakening regret to ${\tO(\mathcal{R}(\asafe) \sqrt{d^3 T})}$. This characterises the costs of most of these `optimistic-pessimistic' methods \cite[e.g.][]{pacchiano2021stochastic, pacchiano2024contextual, amani2019linear}. Afsharrad et al.~give a systematic and detailed account of these considerations \cite{afsharrad2024convex}. There are two exceptions. The \textsc{safe-lts} method of Moradipari et al.~\cite{moradipari2021safe} uses sampling to select the objective, but still imposes the same SOC constraints, thus needing only one optimisation each round. The \textsc{roful} method of Hutchinson et al.\cite{hutchinson2024directional} instead first picks an action according to the NP-hard method \textsc{doss}, and then scales it towards $\asafe$ as in  $\scolts$. Of course, note that $\scolts$ samples only one set of \emph{linear} constraints each round, and is efficient. There are also analytical differences between \textsc{roful} and $\scolts$, as discussed in \S\ref{sec:scolts}.

Turning to soft enforcement, as we mentioned in the main text, no efficient method is known. The main method herein for linear bandits is \textsc{doss} \cite{gangrade2024safe}, which instead picks actions by solving \[ \max_{\theta \in \confset_t^\theta, a \in \mathcal{A}} \theta^\top a \textrm{ s.t. } \exists \Phi \in \confset_t^\Phi : \Phi a \le \alpha.\] This $\exists$ operator renders this problem much more challenging, since now the constraint works out to the union of polytopes \[ \bigcup_{A \in \mathcal{C}_t^\Phi} \mathcal{A} \cap \{ \Phi a \le \alpha\}, \] which is highly nonconvex, and hard to condense or relax. Indeed, Gangrade et al. \cite{gangrade2024safe} propose using a similar $\ell_1$-relaxation as discussed above for both the objective and the constraints, but this now leads to $(2d)^{m+1}$-extreme points of the confidence sets (accounting for both $\theta$ and the $m$-rows of $\Phi$), leading to $(2d)^{m+1} \cdot \lpt \cdot \log(t)$ compute needed per round. In contrast, $\rcolts$ uses $\lpt \cdot \log^2(t)$ compute, and $\ecolts$ uses only $\lpt \cdot \log(t)$ compute.

\emph{More Details on the Failure of Prior Thompson Sampling Analyses.} \S\ref{sec:scolts} discusses the point where the prior unsaturation-based analysis of linear $\ts$ due to \cite{agrawal2013thompson} breaks down in the presence of unknown constraints in some detail. For the optimism-based analysis of \cite{abeille_lazaric}, we only briefly touch upon this in \S\ref{sec:rcolts}, and give a more detailed look in \S\ref{appx:conditions}. This section serves as a brief summary of the latter.

The analysis of Abeille and Lazaric relies on the convexity of the value function $J(\theta) := \max_{a \in \mathcal{A}} \theta^\top a$ to both analyse the roundwise regret ($\Delta(a_t)$) and to establish the frequency of a certain `global optimism' event (see \S\ref{appx:conditions}. With unknown constraints, the corresponding object of interest is the value function $K(\theta, \Phi) := \sup \{ \theta^\top a : a \in \mathcal{A}, \Phi a \le \alpha\}.$ This map is \emph{not} convex in $\Phi$, which causes both of these steps to break down. $\rcolts$ avoids this issue by resampling. It is also possible to give an analysis of $\scolts$ (and $\ecolts$) within the optimism framework, although this again utilises a scaling trick to bypass the same issue. Of course, we also establish optimism in a convexity-free way by analysing the local behaviour at $a_*$.

\emph{Finding a Feasible Point.} Notice that since there are plenty of polynomial time methods for hard enforcement in SLBs (even though the prior methods impose SOC constraints), in principle one can develop efficient soft-enforcement methods with regret scaling inversely in $\max_{a} \Gamma(a)$ by first discovering an action that has $\Gamma(a) \ge \textrm{const.} \cdot \max_{a} \Gamma(a),$ and then plugging this into a hard enforcement method. In this case, the exploration time would be random, but a constant, so the net risk would ostensibly be $O(1)$ as $T$ explodes, far below our $\sqrt{T}$ bounds, making the performance close to that of hard enforcement.\footnote{note that there is a cost, though: as stated before, the regret would scale inversely in the Slater gap, and until the safe point is discovered, would grow linearly.} Unfortunately, there is no \emph{efficient} method to discover such an action. Indeed, the closest method one can find in the literature is a feasibility \emph{test} due to Gangrade et al.~\cite{gangrade2024testing}, which can be extended to such an estimator, but this test uses \textsc{doss}-like optimisation to select actions, and needs to solve $(2d)^m$ optimisation problems each round. Our coupled noise design should have implications for this problem, but we do not pursue this direction further.

\section{Local Optimism, Global Optimism, and Unsaturation}\label{appx:conditions}

In \S\ref{sec:rcolts}, we (implicitly) defined a local-optimism condition on the perturbation law $\mu$ in the statement of Lemma~\ref{lemma:rcolts_main_lemma}, which is compared to a `global optimism' condition suggested by the prior work of Abeille \& Lazaric \cite{abeille_lazaric}. To further contextualise these, let us explicitly define them. 

\begin{definition}\label{defi:optimism_definition_appendix}
    Let $K(\theta, \Phi) := \sup \{ \theta^\top a : a \in \mathcal{A}, \Phi a \le \alpha\}$ denote the value function of optimising the objective $\theta$ under constraint matrix $\Phi$ over $\mathcal{A},$ with the convention that $\sup \emptyset = -\infty$. Recall that the \emph{local optimism event} at $a_*$ is \[ \optimism_t(\delta) := \{ (\eta, H) : \ttheta(\eta, t)^\top a_* \ge \theta_*^\top a_*, \tPhi(H,t) a_* \le \alpha\}, \] where $a_*$ is the constrained optimum for the true parameters $(\theta_*, \Phi_*)$. Further, define the \emph{global optimism event} \[ \Goptimism_t(\delta) := \{ (\eta, H) : K(\ttheta(\eta,t), \tPhi(H,t)) \ge \theta_*^\top a_* = K(\theta_*, \Phi_*) \}.\]

    For $\pi \in (0,1],$ we say that a law $\mu$ on $(\eta, H)$ satisfies $\pi$-local optimism if \[ \forall t, \mathbb{E}[ \mu(\optimism_t(\delta) ) |\hist_{t-1}] \indi_{\con_t(\delta)} \ge \pi \indi_{\con_t(\delta)},\] and similarly, that $\mu$ satisfies $\pi$-global optimism if \[ \forall t, \mathbb{E}[ \mu(\Goptimism_t(\delta) ) |\hist_{t-1}] \indi_{\con_t(\delta)} \ge \pi \indi_{\con_t(\delta)}.\] 
\end{definition}

Notice that $\Goptimism$ demands perturbations such that after optimising the perturbed parameters, the value of the resulting program is larger than $\theta_*^\top a_*,$ while $\optimism$ demands the stronger condition that $a_*$ is feasible, and its value increases. Evidently, $\optimism \subset \Goptimism,$ and so $\pi$-local optimism of $\mu$ implies $\pi$-global optimism. Naturally, the entirety of \S\ref{sec:rcolts} follows if we have a globally optimistic $\mu$ instead of locally optimistic $\mu$. We presented this section with $\optimism_t$ instead due to limited space in the main text.

As discussed in \S\ref{sec:coupled_noise_design}, we will also show, in \S\ref{appx:coupled_noise_proof}, $\optimism_t(\delta) \cap \con_t(\delta) \subset \mathsf{U}_t(\delta) \cap \con_t(\delta),$ i.e., when consistency holds, local optimism implies unsaturation. Thus, $\optimism_t$ links the global-optimism based framework of \cite{abeille_lazaric}, and the unsaturation based framework of \cite{agrawal2013thompson}. Nevertheless, technically, these are distinct events. 

Let us briefly note that the prior work \cite{agrawal2013thompson} essentially passes through the same strategy as us when establishing a good unsaturation rate, in that they argue that local-optimism holds frequently (although they do not consider unknown constraints, so their argument does not extend to our setting). On the other hand, \cite{abeille_lazaric} presents a convexity-based proof of frequent global optimism for linear $\ts$ without unknown constraints, while immediately breaks in our setting because $K(\theta, \Phi)$ is nonconvex in $\Phi$. We also reiterate that our coupled noise design of \S\ref{sec:coupled_noise_design} essentially takes the same conditions on perturbations used in these prior works, and extends them to produce the \emph{same} bounds on unsaturation or global-optimism rates by arguing that local-optimism holds. This means that these prior results do not capture the prevalence of these events beyond local optimism. Our simulations in \S\ref{appx:simulations} suggest that this leaves a significant amount of performance on the table, capturing which theoretically would require deeper understanding of $\mathsf{U}_t\setminus \optimism_t$ and $\Goptimism_t\setminus \optimism_t$.

\textbf{Role of These Conditions in Our Work.} To analyse $\scolts$ and $\ecolts,$ we used a look-back approach enabled by the unsaturation condition, while to analyse $\rcolts,$ we relied on a direct use of the optimism condition. It turns out that the unsaturation condition is not effective at capturing at least our strategy for analysing the resampling-based strategy $\rcolts$. The reason is that while the resampling will ensure that at least one of the optima of attaining the various $K(i,t)$ values will be unsaturated, we have no guarantee that the procedure we take of picking the $i_{*,t}$ that maximises $K(i,t)$ will choose an unsaturated action. On the other hand, the optimism condition \emph{can} be used to analyse $\scolts$ and $\ecolts$ directly (see \S\ref{appx:optimism_for_scolts}), but a direct execution of the previous optimism based approach \cite{abeille_lazaric} fails due to the lack of convexity of the map $K(\theta, \Phi)$. Instead, we have to directly analyse expressions of the form $\mathbb{E}[|K(\ttheta, \tPhi) - K(\ttheta', \tPhi')|\mid \hist_{t-1}],$ where $(\ttheta, \tPhi)$ and $(\ttheta', \tPhi')$ are iid draws of the perturbation at tie $t$. Under the assumption that there is an action with positive safety margin with small $M_t$, this can be executed via a similar scaling-based analysis, albeit at a loss of some factors in the regret bound (\S\ref{appx:optimism_for_scolts}). In our opinion the unsaturation based look-back analysis of $\Delta(a_t)$ is conceptually clearer, and we chose to present it in the main instead. 

Nevertheless, in terms of their explanatory power, neither condition dominates the other. Indeed, in simulations, we find both cases where unsaturation is frequent but global optimism is not, and cases where global optimism is frequent but unsaturation is not.\footnote{This is most pertinent for the setting where we drive the perturbations with independent noise, where in \S\ref{appx:simulations_decoupled} we observed that the unsaturation rate decayed with $m$, but the global optimism rate did not. Indeed, this is what prompted us to write the optimism-based analysis of \S\ref{appx:optimism_for_scolts}.} Of course, in our analysis, both of these are connected by local optimism as detailed above, which is rendered frequent through our coupled design. Nevertheless, the local optimism rate can be significantly smaller than the unsaturation and global optimism rates, particularly when the noise is shrunk far below the theoretically analysed setting of $\Theta(\sqrt{d})$-scale noise (see \S\ref{appx:simulations}). These observations again hint that developing a tight theory of linear $\ts$ (both with and without unknown constraints) requires a deeper understanding of the portion of these events that do not intersect with local optimism.

\section{Some Basic Tools For the Analysis}\label{appx:analysis}

We begin with some standard tools that are repeatedly utilised in the analysis. The first of these, termed the \emph{elliptical potential lemma} offers generic control on the accumulation of $\|a_t\|_{V_t^{-1}}$.

\begin{lemma}\label{lemma:elliptical_potential_lemma}
    \cite{abbasi2011improved, carpentier2020elliptical} For any sequence of actions $\{a_t\} \subset \{\|a\| \le 1\},$ and any $t$, \[ \sum_{s \le t} \acnorms^2 \le 2d \log(1 + t/d), \textit{ and } \sum_{s \le t} \acnorms \le \sqrt{2dt \log(1 + t/d)}.\] Further, for all $t, \delta, \omega_t(\delta) \le 1 + \sqrt{\nicefrac12 \log((m+1)/\delta) + \nicefrac d2 \log(1 + t/d)}.$
\end{lemma}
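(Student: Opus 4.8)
The plan is to run the classical determinant–telescoping argument. First I would record the rank-one update identity $\det V_{s+1} = \det V_s \cdot (1 + \|a_s\|_{V_s^{-1}}^2)$, which follows from the matrix determinant lemma applied to $V_{s+1} = V_s + a_s a_s^\top$. Since $\mathcal{A} \subset \{\|a\|\le 1\}$ and $V_s \succeq I$ (so $V_s^{-1}\preceq I$), we have $\|a_s\|_{V_s^{-1}}^2 \le \|a_s\|^2 \le 1$, which lets us invoke the elementary inequality $x \le 2\log(1+x)$ on $[0,1]$. This gives $\|a_s\|_{V_s^{-1}}^2 \le 2\log(1+\|a_s\|_{V_s^{-1}}^2) = 2(\log\det V_{s+1} - \log\det V_s)$, and summing over $s\le t$ telescopes (using $\det V_1 = \det I = 1$) to $\sum_{s\le t}\|a_s\|_{V_s^{-1}}^2 \le 2\log\det V_{t+1}$.

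Next I would bound the log-determinant by AM--GM on the eigenvalues: $\det V_{t+1} \le (\mathrm{tr}(V_{t+1})/d)^d$, and $\mathrm{tr}(V_{t+1}) = d + \sum_{s\le t}\|a_s\|^2 \le d + t$, so $\log\det V_{t+1} \le d\log(1 + t/d)$. Combining with the previous display yields the first claim $\sum_{s\le t}\|a_s\|_{V_s^{-1}}^2 \le 2d\log(1+t/d)$. The second claim is then immediate from Cauchy--Schwarz: $\sum_{s\le t}\|a_s\|_{V_s^{-1}} \le \sqrt{t}\,\bigl(\sum_{s\le t}\|a_s\|_{V_s^{-1}}^2\bigr)^{1/2} \le \sqrt{2dt\log(1+t/d)}$.

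For the bound on $\omega_t(\delta) = 1 + \sqrt{\tfrac12\log((m+1)/\delta) + \tfrac14\log\det V_t}$, I would reuse the same determinant estimate, now applied to $V_t = I + \sum_{s<t}a_s a_s^\top$: here $\mathrm{tr}(V_t) \le d + (t-1)$, so $\log\det V_t \le d\log(1+(t-1)/d) \le d\log(1+t/d)$, hence $\tfrac14\log\det V_t \le \tfrac d4\log(1+t/d) \le \tfrac d2\log(1+t/d)$, and the stated inequality follows by monotonicity of $\sqrt{\cdot}$.

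There is no real obstacle here — this is a textbook result (as the citations to \cite{abbasi2011improved, carpentier2020elliptical} indicate), so the write-up is essentially bookkeeping. The only points warranting a moment of care are verifying $\|a_s\|_{V_s^{-1}}^2 \le 1$ so that $x\le 2\log(1+x)$ applies on the relevant range, and keeping the index shift between $V_t$ and $V_{t+1}$ (equivalently, between $\sum_{s<t}$ and $\sum_{s\le t}$) consistent across the three parts.
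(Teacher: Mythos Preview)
Your proof is correct and is exactly the standard determinant-telescoping argument. The paper does not actually prove this lemma---it is stated with citations to \cite{abbasi2011improved, carpentier2020elliptical} as a known tool and used without proof---so there is nothing to compare against; your write-up simply supplies the textbook argument that those references contain.
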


We further explicitly write the following instantiation of the Cauchy-Schwarz inequality pertinent to our setting. 
\begin{lemma}\label{lemma:cauchy_schwarz}
    For any positive definite matrix $V$. For pair of tuples $(\theta, \Phi)$ and $(\ttheta, \tPhi)$ lying in $\mathbb{R}^d \times \mathbb{R}^{m\times d}$ and any $a \in \mathbb{R}^d,$ it holds that \[ \max\left( |(\theta- \ttheta)^\top a|, \max_i |(\Phi^i - \tPhi^i) a| \right) \le \max( \|\ttheta-\theta\|_V, \max_i \|\tPhi^i - \Phi^i\|_V) \cdot \|a\|_{V^{-1}}. \]
    \begin{proof}
        Notice that $(\ttheta - \theta)^\top a = (\ttheta - \theta)^\top V^{1/2} V^{-1/2} a \le \| (V^{1/2} (\ttheta - \theta)\| \cdot \|V^{-1/2} a\|$. The claim follows by first repeating the same observation for each $(\Phi^i - \tPhi^i)$ (adjusting for the fact that these are row-vectors), and then recalling that (for column vectors) $\|a\|_M = \|M^{1/2} a\|$ by definition.
    \end{proof}
\end{lemma}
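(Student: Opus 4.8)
The plan is to reduce both quantities on the left-hand side to ordinary Euclidean inner products via a change of variables by $V^{1/2}$, and then apply the standard Cauchy--Schwarz inequality in $\mathbb{R}^d$. The positive definiteness of $V$ guarantees that the symmetric square roots $V^{1/2}$ and $V^{-1/2}$ exist, are themselves symmetric and positive definite, and satisfy $V^{1/2}V^{-1/2} = I$; these are the only structural facts I will need, together with the definitional identities $\|v\|_M = \|M^{1/2}v\|$ for the weighted norms.

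First I would treat the objective term. Inserting the identity $V^{1/2}V^{-1/2}$ between $(\theta-\ttheta)^\top$ and $a$, and using the symmetry of $V^{1/2}$, I get $(\theta-\ttheta)^\top a = (V^{1/2}(\theta-\ttheta))^\top (V^{-1/2}a)$. By Cauchy--Schwarz, the absolute value of this is at most $\|V^{1/2}(\theta-\ttheta)\|\cdot\|V^{-1/2}a\|$, which by definition equals $\|\theta-\ttheta\|_V \cdot \|a\|_{V^{-1}}$.

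Next I would repeat the identical computation for each constraint row. The only bookkeeping subtlety is that $\Phi^i-\tPhi^i$ is a row vector, so I treat its transpose $(\Phi^i-\tPhi^i)^\top$ as a column vector; the same insertion of $V^{1/2}V^{-1/2}$ followed by Cauchy--Schwarz then yields $|(\Phi^i-\tPhi^i)a| \le \|\tPhi^i-\Phi^i\|_V \cdot \|a\|_{V^{-1}}$ for every $i$. Bounding the $V$-norm factor in each of these $m+1$ inequalities by the common maximum $\max(\|\ttheta-\theta\|_V, \max_i\|\tPhi^i-\Phi^i\|_V)$, and then taking the maximum over the left-hand sides, produces exactly the claimed inequality. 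No genuine obstacle arises here: the statement is a direct instantiation of Cauchy--Schwarz in the $V$-geometry, and the only care required is the row/column convention for the constraint rows and the elementary norm identities.
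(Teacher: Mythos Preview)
Your proposal is correct and follows essentially the same approach as the paper's proof: insert $V^{1/2}V^{-1/2}$, apply Cauchy--Schwarz, identify the weighted norms, and repeat for each constraint row with the row/column adjustment. The paper's version is terser but the argument is identical.
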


This immediately yields a proof of the concentration statement of Lemma~\ref{lemma:basic_noise_concentration_and_cauchy-schwarz}, which motivated the definition of $M_t(a)$. 

\begin{proof}[Proof of Lemma~\ref{lemma:basic_noise_concentration_and_cauchy-schwarz}]

    Notice that by a union bound \[ \mathbb{P}(\exists t : \max(\|\eta_t\|, \max_i \|H_t^i\|) > B(\delta_t)) \le \sum_{t} \delta_t = \delta. \] Now assume that $\max(\|\eta_t\|, \max_i \|H_t^i\|) \le B(\delta_t)$, and that the consistency event $\consistency$ holds. Then, via the triangle inequality, \[ \| \ttheta(\eta_t, t) - \theta_*\|_{V_t} \le \|\ttheta(\eta_t,t) - \hat\theta_t\|_{V_t} + \|\hat\theta_t - \theta_*\|_{V_t}. \] Of course, given $\consistency,$ the second term is smaller than $\omega_t(\delta)$. For the first, expanding the definition of $\ttheta(\cdot, \cdot)$, we find that \[ \|\ttheta(\eta_t,t) - \hat\theta_t \|_{V_t} =  \omega_t(\delta) \eta_t V_t^{-1/2} \|_{V_t} = \|\omega_t(\delta) \eta_t V_t^{-1/2} \cdot V_t^{1/2}\| \le \omega_t(\delta) \|\eta_t\|,\] and of course, $\|\eta_t\| \le B(\delta_t)$ by our assumption above. Thus, given the concentration assumption on $\|\eta_t\|$s and $\consistency,$ for any $t$, it holds that \[ \|\ttheta(\eta_t, t) - \theta_*\|_{V_t} \le (1+B(\delta_t)) \omega_t(\delta) \le B_t \omega_t(\delta). \] Of course, entirely the same applies to $\|\tPhi(H_t, t)^i - \Phi_*^i\|_{V_t}$, with $\eta$ replaced by $H_t^i$. The claim now follows by Lemma~\ref{lemma:cauchy_schwarz} and the fact that $\con(\delta) := \bigcap \consistency$ has chance at least $1-\delta$. \qedhere
    
\end{proof}

\section{Analysis of the Coupled Noise Design}\label{appx:coupled_noise_proof}

We will first execute the strategy described in \S\ref{sec:coupled_noise_design} to show that under the conditions of Lemma~\ref{lemma:coupled_noise_design}, local optimism is frequent. We will then use this to show the frequency of unsaturation.

\begin{lemma}\label{lemma:local_optimism_is_frequent}
    Let $p \in (0,1],$ and let $\nu$ be a law on $\mathbb{R}^{d \times 1}$ such that \[ \forall u \in \mathbb{R}^d, \nu(\{ \zeta : \zeta^\top u \ge \|u\|) \ge p. \] Let $\mu$ be the the pushforward of $\nu$ under the map $\zeta \mapsto (\zeta^\top , - \mathbf{1}_m \zeta^\top)$. Then, for all $t$, $\indi_{\consistency} \mathbb{E}[ \mu(\optimism_t(\delta))|\hist_{t-1}] \ge p \indi_{\consistency},$ where $\optimism_t(\delta)$ is the local optimism event (\ref{eqn:local_optimism}).
    \begin{proof}
        Observe that under a draw from $\mu,$ for all $t$, we have \begin{align*}  \ttheta^\top &:= (\ttheta(\eta,t))^\top = \hat\theta_t^\top + \omega_t(\delta) \zeta^\top V_t^{-1/2} \\ \tPhi &:= \tPhi(H, t) = \hat\Phi_t -   \mathbf{1}_m (\omega_t(\delta) \zeta^\top V_t^{-1/2}).  \end{align*} Further, recall that if the event $\consistency$ occurs, then, for all $a$,  \[ \hat\theta_t^\top a \ge \theta_*^\top a + \omega_t(\delta) \|V_t^{-1/2} a\|, \textrm{ and } \hat\Phi_t a \le \Phi_* a + \mathbf{1}_m (\omega_t(\delta) \|V_t^{-1/2} a\|, \] where we have the Cauchy-Schwarz inequality, and the fact that $\|a\|_{V_t^{-1}} = \|V_t^{-1/2} a\|$. Thus, assuming $\consistency,$ for any action $a$, we find that\begin{align*} \ttheta^\top a &\ge \theta_*^\top a + \omega_t(\delta) \left( \zeta^\top V_t^{-1/2} a - \|V_t^{-1/2} a\| \right),\\  \tPhi a &\ge \Phi_* a + \mathbf{1}_m \omega_t(\delta)  \left( \zeta^\top V_t^{-1/2} a - \|V_t^{-1/2} a\| \right).  \end{align*} Now, set $a = a_*,$ and suppose that $\zeta^\top V_t^{-1/2} a_* \ge \|V_t^{-1/2} a_*\|$. Then we can conclude that \[ \ttheta^\top a_* \ge \theta_*^\top a_* \textrm{ and } \tPhi a_* \le \Phi_* a_* \le \alpha,\] the final inequality holding since $a_*$ is of course feasible for the program it optimises. Of course, by definition, this means that the ensuing noise $\eta, H$ lie in the event $\optimism_t(\delta)$

        Now, it only remains to argue that $\zeta^\top V_t^{-1/2} a_* \ge \|\zeta^\top V_t^{-1/2} a_*\|$ happens with large chance given $\hist_{t-1}$. But notice that both $V_t^{-1/2}$ and (the constant) $a_*$ are $\hist_{t-1}$-measurable, and so are constant given it. It follows thus that \[ \mathbb{E}[ \nu(\{\zeta: \zeta^\top V_t^{-1/2} a_* > \|V_t^{-1/2} a_*\|\}) \mid \hist_{t-1}] \ge \inf_{u \in \mathbb{R}^d} \nu(\{ \zeta^\top u > \|u\|\}) \ge p. \qedhere\] 
    \end{proof}
\end{lemma}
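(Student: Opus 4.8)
The plan is to collapse the entire local-optimism event down to a single scalar inequality on the draw $\zeta \sim \nu$, and then to lower-bound the probability of that inequality by $p$ using the anticoncentration hypothesis. Concretely, I will show that whenever $\zeta^\top V_t^{-1/2} a_* \ge \|V_t^{-1/2} a_*\|$, both defining conditions of $\optimism_t(\delta)$ hold simultaneously, so this one favourable event for $\zeta$ already forces $(\eta, H) \in \optimism_t(\delta)$.

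First I would expand the perturbed parameters under the coupled map: writing $\eta = \zeta^\top$ and $H = -\onem \zeta^\top$ gives $\ttheta(\eta,t)^\top = \hat\theta_t^\top + \omega_t(\delta)\zeta^\top V_t^{-1/2}$ and $\tPhi(H,t) = \hat\Phi_t - \onem\,\omega_t(\delta)\zeta^\top V_t^{-1/2}$. Next, working on the consistency event $\consistency$, I would convert the confidence-set memberships $\|\hat\theta_t - \theta_*\|_{V_t} \le \omega_t(\delta)$ and $\|\hat\Phi_t^i - \Phi_*^i\|_{V_t} \le \omega_t(\delta)$ into the scalar estimates $|(\hat\theta_t - \theta_*)^\top a_*| \le \omega_t(\delta)\|V_t^{-1/2} a_*\|$ and $|(\hat\Phi_t^i - \Phi_*^i) a_*| \le \omega_t(\delta)\|V_t^{-1/2} a_*\|$ via Lemma~\ref{lemma:cauchy_schwarz} (using $\|a\|_{V_t^{-1}} = \|V_t^{-1/2}a\|$).

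Substituting these into the expansions at $a = a_*$ gives
\[ \ttheta^\top a_* \ge \theta_*^\top a_* + \omega_t(\delta)\bigl(\zeta^\top V_t^{-1/2} a_* - \|V_t^{-1/2} a_*\|\bigr), \]
and, row by row,
\[ \tPhi a_* \le \Phi_* a_* - \onem\,\omega_t(\delta)\bigl(\zeta^\top V_t^{-1/2} a_* - \|V_t^{-1/2} a_*\|\bigr). \]
The decisive structural feature — the whole reason for coupling — is that the same scalar bracket $\zeta^\top V_t^{-1/2} a_* - \|V_t^{-1/2} a_*\|$ governs all $m+1$ inequalities at once, entering with a $+$ sign in the objective and a $-$ sign in every constraint row. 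Hence when $\zeta^\top V_t^{-1/2} a_* \ge \|V_t^{-1/2} a_*\|$ the bracket is nonnegative, simultaneously yielding $\ttheta^\top a_* \ge \theta_*^\top a_*$ and $\tPhi a_* \le \Phi_* a_* \le \alpha$ (the last since $a_*$ is feasible for the true program), i.e. $(\eta, H) \in \optimism_t(\delta)$.

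Finally I would bound the probability of the favourable $\zeta$-event. Since $V_t^{-1/2}$ and $a_*$ are $\hist_{t-1}$-measurable, conditioning on $\hist_{t-1}$ fixes $u := V_t^{-1/2} a_*$ as a constant, so the uniform anticoncentration hypothesis applies with this particular $u$ to give $\nu(\{\zeta : \zeta^\top u \ge \|u\|\}) \ge p$; pushing forward through $\mu$ then yields $\indi_{\consistency}\mathbb{E}[\mu(\optimism_t(\delta))\mid\hist_{t-1}] \ge p\,\indi_{\consistency}$. The two points to handle with care are (i) sign-tracking, so that one scalar inequality pushes the objective up while pushing each of the $m$ constraint rows down — this is exactly what coupling buys, and what an independent-noise design would forfeit, since there each constraint would demand its own favourable event; and (ii) the measurability bookkeeping, namely that the anticoncentration bound is \emph{uniform} over $u$, which is precisely what licenses its use conditionally on $\hist_{t-1}$ for the random-but-$\hist_{t-1}$-measurable direction $u = V_t^{-1/2} a_*$. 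Neither is a deep obstacle, but (i) is the conceptual crux, so I would state the sign convention explicitly before combining the bounds.
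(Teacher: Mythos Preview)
Your proposal is correct and follows essentially the same approach as the paper's own proof: expand the coupled perturbation, use consistency plus Cauchy--Schwarz to reduce both the objective and all $m$ constraint inequalities at $a_*$ to the single scalar bracket $\zeta^\top V_t^{-1/2}a_* - \|V_t^{-1/2}a_*\|$, and then invoke the uniform anticoncentration of $\nu$ with the $\hist_{t-1}$-measurable direction $u = V_t^{-1/2}a_*$. If anything, your sign-tracking is cleaner than the paper's presentation, which contains a couple of typographical sign slips in the displayed inequalities.
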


To finish the proof of frequent unsaturation, we only need to determine that this local optimism induces unsaturation in the actions. 

\begin{proof}[Proof of Lemma~\ref{lemma:coupled_noise_design}]

    Fix a $t$, and assume consistency. Suppose that $\max(\|\eta_t\|, \max_i \|H_t^i\|) \le B(\delta_t)$. Note that given $\consistency$, this with chance at least $1-\delta_t$.  As a consequence, for any action $a \in \mathfrak{S}_t:= \{a : \Delta(a) > M_t(a)\},$ by following the proof of Lemma~\ref{lemma:basic_noise_concentration_and_cauchy-schwarz} we can conclude that \[\ttheta(\eta, t)^\top a \le \theta_*^\top a + M_t(a) = \theta_*^\top a_* - \Delta(a) + M_t(a) < \theta_*^\top a_*. \]
    Now, suppose that the drawn $\zeta$ induces local optimism. We claim that then all saturated actions are suboptimal. Indeed, by the above, each unsaturated action satisfies $\ttheta(\eta,t)^\top a < \theta_*^\top a_*$. But $\ttheta(\eta, t)^\top a_* \ge \theta_*^\top a_*,$ and further $\tPhi(H, t) a_* \le \alpha,$ means that there is an action that is feasible for the perturbed program with value strictly larger than that attained by any saturated action, i.e., any member of $\mathfrak{S}_t$. It thus follows that the optimum $a(\eta, H,t) \in \mathfrak{S}_t^c = \{ a : \Delta(a) \le M_t(a)\}.$ 

    Now, we know from Lemma~\ref{lemma:local_optimism_is_frequent} that given $\hist_{t-1}$, our assumptions of $\con_t(\delta)$ and the norm-control on $\|\eta_t\|, \max_i \|H_t^i\|$ imply that local optimism occurs with chance at least $p.$ Since these events occur with chance at least $1-\delta_t,$ this means that unsaturation occurs with chance at least $p - \delta_t$. Since definition~\ref{def:unsaturation} restricts attention to $t:\delta_t \le p/2,$ the statement follows.
\end{proof}

\subsection{Bounds for Simple Reference Laws}\label{appx:simple_reference_laws}

We argue that both the standard Gaussian, and the uniform law of the sphere of radius $\sqrt{3d}$ yield effective noise distributions for our coupled design. 

For the Gaussian, recall that if $Z \sim \mathscr{N}(0,I_d),$ then $\|Z\|^2$ is distributed as a $\chi^2$-random variable. A classical subexponential concentration argument \cite[e.g.][Lemma 1]{laurent2000adaptive} yields that for any $x$, \[ \mathbb{P}( \|Z\|^2 \ge d + 2\sqrt{dx} + 2x) \le e^{-x}. \] Note that $(d + 2\sqrt{dx} + 2x) \le (\sqrt{d} + \sqrt{2x})^2,$ and hence taking $x = \log(\nicefrac1\xi)$ in the above yields that $B(\xi) \le \sqrt{d} + \sqrt{2\log(1/\xi)}$. Further, due to the isotropicity of $Z, Z^\top u/\|u\| \overset{\mathrm{law}}{=} Z_1 \sim \mathscr{N}(0,1),$ and thus $\pi \ge 1- \Phi(1) \ge 0.158\dots$.

Further, notice that if $Z \sim \mathscr{N}(0,I_d),$ then $Y := \sqrt{3d} Z/\|Z\| \sim \mathrm{Unif}(\sqrt{3d} \cdot \mathbb{S}^d)$, and by isotropicity, for any $u, Y^\top u/\|u\| \overset{\mathrm{law}}{=} Y_1$. As a result, \begin{align*} \mathbb{P}( Y^\top u/\|u\| \ge 1) &= \mathbb{P}(Y_1 \ge 1) = \frac{1}{2} \mathbb{P}(Y_1^2 \ge 1) \\
&= \frac12 \mathbb{P}( (3d - 1) Z_1^2\ge \sum_{ i = 2}^d Z_i^2) \ge \frac12 \mathbb{P}(Z_1^2 \ge 1) \cdot \mathbb{P}( \sum_{i = 2}^d Z_i^2 \ge 3d - 1).\end{align*} But notice that $d-1 + 2\sqrt{ (d-1) \cdot d/3} + 2d/3 \le 3d -1,$ and thus, $\mathbb{P}(\sum_{i = 2}^d Z_i^2 \ge 3d - 1) \le \exp(-d/3)$. Invoking the bound on $\mathbb{P}(Z_1 \ge 1) = \frac12 \mathbb{P}(|Z_1| \ge 1)$ above, we conclude that $\pi \ge 0.15 \cdot (1-e^{-d/3}).$ Of course, $\|Y\| = \sqrt{3d}$ surely, giving the $B$ expression.

We note that while the above only shows a $0.15 (1-e^{-d/3})$ bound on the anticoncentration of the uniform law on $\sqrt{3d}\mathbb{S}^d,$ it is a simple matter of simulation to find that this is actually larger than $0.28$ for all $d$ - for small dimensions, the bound turns out to be very loose, while as $d$ diverges, this converges from above towards the chance that a standard Gaussian exceeds $1/\sqrt{3},$ which is $0.2818\dots$.  

\section{The Analysis of \textsc{s-colts}}\label{appx:scolts_analysis}

We move on to the analysis of $\scolts$. Before proceeding, we recall that in our presentation of $\scolts$ in Algorithm~\ref{alg:scolts}, we assumed access to a quantity $\Gamma_0 \in [\gsafe/2, \gsafe]$. We will first address how to obtain such a quantity by repeatedly playing $a_t = \asafe,$ and characterise how long this takes. For completenesss, the cost of this will be incorporated into our regret bound.

Beyond this, we need to characterise the subsequent time spent playing $\asafe$ due to $M_t(\asafe)$ being large, and to prove the look-back bound of Lemma~\ref{lemma:look_back_bound}, along with the characterisation of $\sum M_{\tau(t)}(a_{\tau(t)})$ offered in Lemma~\ref{lemma:width_control}. We will analyse these results in order, and finally show Theorem~\ref{thm:scolts_regret} using these results. 

\subsection{Identifying \texorpdfstring{$\Gamma_0$}{Estimated Margin} and Sampling Rate of \texorpdfstring{$\asafe$}{the Known Safe Action}}\label{appx:scolts_sampling_asafe}

We first discuss the determination of $\Gamma_0$. There are two main points to make: how to ensure a correct value of $\Gamma_0,$ and how many rounds of exploration this costs. To this end, we first recall the following nonasymptotic law of iterated logarithms \cite[e.g.][]{howard2021time}.

\begin{lemma}\label{lemma:lil}
    Let $\{ \mathfrak{F}_t\}$ be a filtration, and let $\{\xi_t\}$ be a process such that each $\xi_t$ is $\mathfrak{F}_t$-measurable, and is further conditionally centred and $1$-subGaussian given $\mathfrak{F}_{t-1}$. Then \[ \forall \delta \in (0,1], \mathbb{P}(\exists t : |Z_t| > \lil(t,\delta)) \le \delta,\] where $Z_t := \sum_{s \le t} \xi_t$, and \[\lil(t,\delta) := \sqrt{4t \log \frac{\max(1, \log(t))}{\delta}}.\] 
\end{lemma}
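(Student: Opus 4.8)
The plan is to obtain Lemma~\ref{lemma:lil} by combining a nonnegative exponential supermartingale with Ville's maximal inequality, applied on a geometric grid of time-epochs (the standard ``peeling''/``stitching'' device behind time-uniform iterated-logarithm bounds; the displayed inequality is essentially a specialization of results in \cite{howard2021time}). First I would fix $\lambda \ge 0$ and set $L_t(\lambda) := \exp(\lambda Z_t - \lambda^2 t/2)$. Since $\xi_t$ is conditionally centred and $1$-subGaussian given $\mathfrak{F}_{t-1}$, we have $\mathbb{E}[\exp(\lambda\xi_t)\mid\mathfrak{F}_{t-1}] \le \exp(\lambda^2/2)$, whence $\mathbb{E}[L_t(\lambda)\mid\mathfrak{F}_{t-1}] \le L_{t-1}(\lambda)$; as $Z_0 = 0$ we get $L_0(\lambda) = 1$, so $(L_t(\lambda))_{t\ge 0}$ is a nonnegative supermartingale of initial value $1$. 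Ville's inequality then gives $\mathbb{P}(\exists t : \lambda Z_t - \lambda^2 t/2 \ge x) \le e^{-x}$ for every $x > 0$, and applying the same bound to the process $-\xi_t$ (also $1$-subGaussian) yields the two-sided statement $\mathbb{P}(\exists t : |Z_t| \ge x/\lambda + \lambda t/2) \le 2e^{-x}$.

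The second step is to peel. Put $n_k := 2^k$ for $k \ge 0$; on the epoch $t \in [n_k, n_{k+1})$ I would instantiate the two-sided bound with $\lambda = \lambda_k := \sqrt{2 x_k / n_{k+1}}$, so that uniformly over $t$ in that epoch, $x_k/\lambda_k + \lambda_k t/2 \le \sqrt{2 x_k n_{k+1}} = 2\sqrt{x_k n_k} \le 2\sqrt{x_k t}$, giving $|Z_t| \le 2\sqrt{x_k t}$ on epoch $k$ with probability at least $1 - 2e^{-x_k}$. Since on epoch $k$ one has $\log t \ge k \log 2$, choosing the confidence budget roughly as $x_k = \log(1/\delta) + \log\max(1, k\log 2)$ makes $2\sqrt{x_k t} \le \sqrt{4t\log(\max(1,\log t)/\delta)} = \lil(t,\delta)$ throughout the epoch; a union bound over $k$ (together with handling the initial epoch $t = 1$ directly) then delivers the claim, \emph{provided} $\sum_k 2 e^{-x_k} \le \delta$. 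The method of mixtures — integrating $L_t(\lambda)$ against a Gaussian density in $\lambda$ and inverting via Ville — is an alternative that avoids peeling entirely, but it produces a $\log t$ rather than a $\log\log t$ term inside the root, so I would only fall back to it if the weaker rate sufficed; here the iterated-logarithm form is what keeps the $\tO(\gsafe^{-2})$ exploration count for $\asafe$ in \S\ref{appx:scolts_sampling_asafe} tight up to logarithmic factors.

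The main obstacle is reconciling the two competing demands on $x_k$: the naive choice $x_k \approx \log(1/\delta) + \log k$ needed to keep $x_k \le \log(\max(1,\log t)/\delta)$ near the \emph{start} of each epoch makes $\sum_k e^{-x_k}$ a divergent harmonic-type series, whereas inflating $x_k$ to $\log(1/\delta) + 2\log k$ (which is summable) risks violating that per-epoch inequality. Threading this needle with the clean constant $4$ is exactly the content of the ``stitched'' refinement of \cite{howard2021time} — one uses a slowly-growing epoch ratio, or a polynomially-weighted union bound over epochs combined with a sharper per-epoch optimization of $\lambda$ — and I would cite that work for the final constants, noting that any such route yields a time-uniform bound of the form $\sqrt{C t(\log\log t + \log(1/\delta))}$, of which the stated $\lil(t,\delta)$ is the version we adopt.
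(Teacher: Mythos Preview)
The paper does not actually prove Lemma~\ref{lemma:lil}: it is stated as a recalled result, introduced with ``We first recall the following nonasymptotic law of iterated logarithms \cite[e.g.][]{howard2021time}'', and no argument is given. Your proposal goes further than the paper does, sketching the standard Ville-plus-peeling derivation that underlies the cited result and, appropriately, deferring the sharp constant $4$ to \cite{howard2021time}. This is entirely consistent with how the lemma is used in the paper, and your honest flag that the naive epoch weights $x_k \approx \log(1/\delta) + \log k$ fail to sum while the summable choice overshoots the stated envelope is exactly the right diagnosis; the stitched construction in \cite{howard2021time} is indeed what resolves it, so citing that work (as both you and the paper do) is the correct move.
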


With this in hand, the determination of $\Gamma_0$ proceeds thus: we repeatedly play $\asafe,$ and maintain the running average $\mathrm{Av}_t = \sum_{s \le t} (\alpha - S_s)/t$. Further, we maintain the upper and lower bounds \[ u_t^i := \mathrm{Av}_t + \lil(t,\delta/m)/t, \ell_t^i := \mathrm{Av}_t - \lil(t,\delta)/t. \]  We stop at the first time when $\forall i, \ell_t^i \ge u_t^i/2$, and set $\Gamma_0 = \min_i \ell_t^i$. This stopping time is denoted $T_0$.

Let us first show that this procedure is correct, and bound the size of $T_0$.
\begin{lemma}
    Under the procedure specified above, it holds with probability at least $1-\delta$ that \[  \Gamma_0 \in  [\gsafe/2, \gsafe] \] and that \[ T_0 \le \frac{8}{\gsafe^2} \log(8/(\delta\gsafe^2))  \]
    \begin{proof}\label{lemma:initial_search_scolts}
        Notice that we can write \[ \mathrm{Av}_t = \alpha - \Phi_* \asafe + \sum_{s \le t} w_s^S/t.\] For succinctness, let us write $\Gamma = \alpha - \Phi_* \asafe$. Now, by our assumption on the noise $w_t^S,$ we observe that each coordinate of $w_t^{S}$ constitutes an adapted, centred, and $1$-subGaussian process.  Applying Lemma~\ref{lemma:lil} along with a union bound over the coordinates then tells us that with probability at least $1-\delta,$ \[ \forall t, |\mathrm{Av}_t -\Gamma| \le \lil(t,\delta/m)/t \cdot \onem. \] As a consequence, at all $t$, we have \[ u_t \ge \Gamma \ge \ell_t,\] where $u_t$ is the vector with $i$th coordinate $u_t^i$, and similarly for $\ell_t$. It follows thus that at the stopping time $T_0$, \[ \forall i, \ell_{T_0}^i \ge u_{T_0}^i/2 \implies \ell_{T_0} \ge \Gamma/2.   \] Of course, a fortiori, it follows that $\Gamma_0 = \min_i \ell_t^i \ge \min_i \Gamma^i/2 = \gsafe/2$. Further, of course, $\Gamma_0 \le \gsafe$ follows as well, since $\forall t \min_i \ell_t^i \le \min_i (\Gamma^i) = \gsafe$. 

        It only remains to control $T_0$. To this end, notice that for all $t$ \[ \ell_t = \mathrm{Av}_t - \lil(t,m/\delta)/t \cdot  \onem \ge \Gamma - 2\lil(t,m/\delta)/t  \cdot \onem,\] and similarly, \[ u_t \le \Gamma + 2\lil(t,m/\delta)/t \cdot \onem.\] Of course, then $\ell_t^i > u_t^i/2$ for all $t$ such that \[ \forall i,  \Gamma^i - \lil(t,m/\delta)/t \ge \Gamma^i/2 + \lil(t,m/\delta)/2t \iff \Gamma^i > 3\lil(t,m/\delta)/t. \] It follows thus that \[ T_0 \le \inf\{ t :  t\gsafe \ge 3\lil(t,m/\delta) \}.\] By a simple inversion, this can be bounded as \[ T_0 \le \inf\{t : t > 8/\gsafe^2 \log(1/\delta) \textrm{ and } t > 8/\gsafe^2 \log( 1 + \log(t))\}, \] which is bounded as \[ T_0 \le \frac{8}{\gsafe^2} \log(8/(\delta\gsafe^2)).\qedhere\]
    \end{proof}
\end{lemma}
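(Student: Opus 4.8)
The plan is to treat the warm-up phase as a pure estimation problem for the vector of safety margins, reduce it to a coordinatewise anytime law of iterated logarithms, and then invert a transcendental inequality to control the stopping time $T_0$.

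First I would put the running statistic in transparent form. Since $a_s = \asafe$ throughout this phase, $S_s = \Phi_*\asafe + w_s^S$, so $\alpha - S_s = \Gamma - w_s^S$ where $\Gamma := \alpha - \Phi_*\asafe \in \mathbb{R}^m$ satisfies $\Gamma^i \ge \gsafe > 0$ for every $i$ (as $\asafe$ is safe with positive margin), and hence $\mathrm{Av}_t = \Gamma - \tfrac1t\sum_{s\le t} w_s^S$. By the standard subGaussian assumption each coordinate process $\{(w_s^S)^i\}_s$ is adapted, conditionally centred and $1$-subGaussian, so applying the anytime bound of Lemma~\ref{lemma:lil} to each coordinate (the bound is already two-sided) at confidence $\delta/m$ and union-bounding over $i \in [1:m]$ produces an event $\mathcal{E}$ with $\mathbb{P}(\mathcal{E}) \ge 1-\delta$ on which $|\mathrm{Av}_t^i - \Gamma^i| \le \lil(t,\delta/m)/t$ for all $t$ and $i$ simultaneously; in particular the running intervals are valid, $\ell_t^i \le \Gamma^i \le u_t^i$, for all $t,i$.

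On $\mathcal{E}$, correctness of $\Gamma_0$ is then immediate: at the stopping time $T_0$ the rule gives $\ell_{T_0}^i \ge u_{T_0}^i/2 \ge \Gamma^i/2$ for every $i$, while validity gives $\ell_{T_0}^i \le \Gamma^i$, so $\Gamma_0 = \min_i \ell_{T_0}^i$ lies in $[\min_i\Gamma^i/2,\ \min_i\Gamma^i] = [\gsafe/2,\ \gsafe]$. For the bound on $T_0$, I would show the stopping rule is forced once the confidence radius has shrunk below a constant fraction of $\gsafe$: on $\mathcal{E}$, $\ell_t^i \ge \Gamma^i - 2\lil(t,\delta/m)/t$ and $u_t^i \le \Gamma^i + 2\lil(t,\delta/m)/t$, so $\ell_t^i \ge u_t^i/2$ holds for every $i$ as soon as $\Gamma^i/2 \ge 3\lil(t,\delta/m)/t$ for every $i$, i.e. as soon as $t\gsafe^2 \ge 144\log(\max(1,\log t)\,m/\delta)$ (using $\lil(t,\cdot) = \sqrt{4t\log(\cdot)}$). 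Since the left side is linear in $t$ while the right side grows only like $\log\log t$, such a $t$ exists, so $T_0 < \infty$ on $\mathcal{E}$, and $T_0$ is at most the least solution of this inequality; the routine inversion of the latter (bound $\log\log t$ by a small multiple of $\log t$, then by the right-hand side) yields $T_0 = O(\gsafe^{-2}\log(m/(\delta\gsafe^2)))$, which under the paper's standing convention $m/\delta = O(\mathrm{poly}(d))$ and after absorbing absolute constants is the stated $\tfrac{8}{\gsafe^2}\log(8/(\delta\gsafe^2))$.

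The argument is conceptually routine; the only points needing care are (i) the confidence-level bookkeeping, so that the $1-\delta$ probability and the $[\gsafe/2,\gsafe]$ localization come out exactly (the algorithm writes slightly asymmetric radii for $u_t^i$ and $\ell_t^i$, which one should reconcile with the single good event $\mathcal{E}$, equivalently covering both endpoints of all $m$ intervals within a total failure budget of $\delta$), and (ii) carrying out the $\log\log$ inversion cleanly enough to land on an explicit closed-form bound. I expect (ii) to be the mildly more tedious of the two, but neither is a genuine obstacle.
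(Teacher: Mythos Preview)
Your proposal is correct and follows essentially the same route as the paper: rewrite $\mathrm{Av}_t$ in terms of $\Gamma = \alpha - \Phi_*\asafe$, apply the anytime LIL coordinatewise with a union bound to get valid intervals, read off $\Gamma_0 \in [\gsafe/2,\gsafe]$ from the stopping rule, and then force termination once $\Gamma^i/2 \ge 3\lil(t,\cdot)/t$ and invert. You are in fact slightly more careful than the paper in flagging the asymmetric $\delta$ versus $\delta/m$ radii in the definition of $u_t,\ell_t$ and the residual $m$-dependence in the final constant; the paper simply elides these.
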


\textbf{\emph{Number of Times $\asafe$ is sampled after $T_0$.}} Given the behaviour of $\Gamma_0$ above, we can further bound the number of times $\asafe$ is played after determining $\Gamma_0$. 

\begin{lemma}\label{lemma:explore_bounds_scolts}
    For any $\Gamma_0 > 0,$ and $T$, the number of times $\scolts$ plays $\asafe$ because $M_t(\asafe) > \Gamma_0/3$ is bounded as \( \frac{9\omega_T^2 B_T^2}{\Gamma_0^2} + 1.\)
    \begin{proof}
        Let $n_t$ denote the total number of times $\asafe$ has been played up to time $t$. Then, of course, $V_t \succeq  I + n_t \asafe\asafe^\top$. Now, recall that for symmetric positive definite matrices $A,B,$ it holds that $A\succeq B \iff B^{-1} \succeq A^{-1}$.\footnote{In more technical terms, inversion is monotone decreasing in the Loewner sense. A simple way to see this is to define $C = B^{-1/2} AB^{-1/2}$. Then $A\succeq B \implies C \succeq I$ (really iff), since for any $x$, $(B^{-1/2} x)^\top A (B^{-1/2}x) \ge (B^{-1/2} x)^\top B (B^{-1/2} x) \iff x^\top C x \ge x^\top x$. Using this for $y = C^{-1/2}x$ then gives $x^\top x = (C^{-1/2}x)^\top C(C^{-1/2}x) \ge (C^{-1/2}x)^\top (C^{-1/2}x) =  x^\top C^{-1}x$. Since $C^{-1} = B^{1/2} A^{-1}B^{1/2}$ (direct multiplication), the same trick yields $x^\top B^{-1} x = (B^{-1/2} x )^\top(B^{-1/2 }x) \ge x^\top B^{-1/2} (B^{1/2} A^{-1} B^{1/2}) B^{-1/2} x$, or in other words, $B^{-1} \succeq A^{-1}$. } Thus, we have \[ M_t(\asafe) \le \omega_t B_t  \sqrt{\asafe^\top (I + n_t \asafe\asafe^\top)^{-1}\asafe}. \]

        Now, by the Sherman-Morrisson formula, \[ \asafe (I+n_t \asafe\asafe^\top)^{-1} \asafe = \|\asafe\|^2 - \frac{\asafe^\top (n_t \asafe \asafe^\top) \asafe}{1 + n_t \|\asafe\|^2} = \frac{\|\asafe\|^2}{1 + n_t\|\asafe\|^2} \le \frac{1}{n_t}.\]

        It follows thus that \[ M_t(\asafe) \le \frac{\omega_t B_t}{\sqrt{n_t}}. \] Thus $M_t(\asafe) > \Gamma_0/3$ if and only if \[ n_t \le \frac{9\omega_t^2 B_t^2}{\Gamma_0^2}.\] Of course, each time this occurs, $n_t$ is increased by one. Consequently, the number of times $\asafe$ is played by time $t$ is at most \[ \frac{9 \omega_T^2 B_T^2}{\Gamma_0^2} + 1. \qedhere\]
        
        \end{proof}
\end{lemma}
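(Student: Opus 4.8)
The plan is to exploit the fact that every round in which $\scolts$ plays $\asafe$ — for whatever reason — shrinks the norm $\|\asafe\|_{V_t^{-1}}$, so the trigger condition $M_t(\asafe) > \Gamma_0/3$ can only fire boundedly often. Write $n_t$ for the number of rounds $s < t$ with $a_s = \asafe$. Since each such round contributes $\asafe\asafe^\top$ to the Gram matrix and all other rounds contribute PSD terms, we have $V_t \succeq I + n_t\,\asafe\asafe^\top$ in the Loewner order. This is the analogue of the elliptical potential argument (Lemma~\ref{lemma:elliptical_potential_lemma}) specialized to the single direction $\asafe$.

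First I would record the monotonicity of matrix inversion: for positive definite $A \succeq B$ one has $B^{-1} \succeq A^{-1}$ (conjugate by $B^{-1/2}$ to reduce to $C := B^{-1/2}AB^{-1/2} \succeq I$, then conjugate by $C^{-1/2}$ to get $C^{-1} \preceq I$, and undo the change of variables). Applying this with $A = V_t$, $B = I + n_t\asafe\asafe^\top$ gives $\|\asafe\|_{V_t^{-1}}^2 = \asafe^\top V_t^{-1}\asafe \le \asafe^\top(I + n_t\asafe\asafe^\top)^{-1}\asafe$. A direct Sherman–Morrison computation then yields $\asafe^\top(I + n_t\asafe\asafe^\top)^{-1}\asafe = \|\asafe\|^2/(1 + n_t\|\asafe\|^2) \le 1/n_t$, so that $M_t(\asafe) = B_t\omega_t(\delta)\|\asafe\|_{V_t^{-1}} \le B_t\omega_t(\delta)/\sqrt{n_t}$.

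Next I would use that $t \mapsto B_t$ and $t \mapsto \omega_t(\delta)$ are nondecreasing (the former because $\delta_t = \delta/(t(t+1))$ decreases and $B$ is nondecreasing; the latter because $\det V_t$ is nondecreasing), hence $M_t(\asafe) \le B_T\omega_T(\delta)/\sqrt{n_t}$ for all $t \le T$. Therefore the event $M_t(\asafe) > \Gamma_0/3$ forces $n_t < 9 B_T^2\omega_T^2(\delta)/\Gamma_0^2$. Since $n_t$ strictly increases by one each time $\asafe$ is played for this reason, the number of such rounds up to $T$ is at most $9\omega_T^2 B_T^2/\Gamma_0^2 + 1$, the additive $1$ absorbing the off-by-one between the strict bound on $n_t$ and the number of increments.

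The only mildly delicate point is checking that the Loewner bound $V_t \succeq I + n_t\asafe\asafe^\top$ remains valid even though $n_t$ counts \emph{all} plays of $\asafe$ rather than only those triggered by the margin condition (or the nonexistence of $a(\eta_t,H_t,t)$): this is harmless, since any additional rank-one update only enlarges $V_t$, hence only shrinks $M_t(\asafe)$, so the counting bound is if anything conservative. One also notes $V_t \succeq I$ throughout, so $V_t^{-1}$ is well defined. Beyond this bookkeeping there is no real obstacle; the argument is a single-shot potential computation.
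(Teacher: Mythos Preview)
Your proof is correct and follows essentially the same approach as the paper: bound $V_t \succeq I + n_t\asafe\asafe^\top$, apply Loewner monotonicity of inversion together with Sherman--Morrison to get $M_t(\asafe) \le B_t\omega_t/\sqrt{n_t}$, and then count. The only cosmetic difference is that you explicitly invoke the monotonicity of $B_t$ and $\omega_t(\delta)$ in $t$ to pass to $B_T\omega_T$ before the counting step, whereas the paper does this implicitly at the end; your added remarks about $n_t$ counting all plays of $\asafe$ are correct and make the bookkeeping slightly cleaner.
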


Note that since $(\omega_T B_T)^2 = \Theta(d^2 + d\log(m/\delta))$ with our choice of the coupled noise driven by $\mathrm{Unif}(\sqrt{3d} \mathbb{S}^d),$ the bound above due to playing $\asafe$ due to too large an $M_t(\asafe)$ outstrips the bound on $T_0$ above as long as $\log(1/\gsafe) = o(d^2)$, as is to be expected.

\subsection{Proof of the Look-Back Bound}\label{appx:look_back_proof}

The main text provides a brief sketch of the approach. We will flesh out these details, as well as fill in the omitted aspects of the bound. To this end, we first state a result lower bounding $\rho_t$.

\begin{lemma}\label{lemma:rho_bounds}
    Assume that $\Gamma_0 \in [\gsafe/2, \gsafe]$, and that both $\con(\delta) = \bigcap \consistency$ and the event of Lemma~\ref{lemma:basic_noise_concentration_and_cauchy-schwarz} hold true. Then for all $t$ such that $M_t(\asafe) \le \Gamma_0/3,$ it holds that \[\rho_t \ge \frac{\gsafe}{\gsafe + 3M_t(b_t)} \] and \[ \rho_t \ge  \frac{2M_t(\asafe)}{2M_t(\asafe) + M_t(b_t)}.\] A fortiori, each of the following bounds is true: \begin{align*} (1-\rho_t)M_t(\asafe) &\le M_t(a_t), \\ \rho_tM_t(b_t) &\le 2M_t(a_t),  \textrm{ and } \\ (1-\rho_t)\gsafe &\le 6M_t(a_t). \end{align*}

    \begin{proof}
        Recall that $\rho_t$ is the largest $\rho$ in $[0,1]$ such that \[ \hat\Phi_t(\rho b_t + (1-\rho)\asafe) + \omega_t(\delta) \| \rho b_t + (1-\rho) \asafe\|_{V_t^{-1}} \onem \le \alpha. \] So, if we demonstrate a $\rho_0 \le 1$ that satisfies this inequality, then $\rho_t \ge \rho_0$.

    First note that under the assumption $M_t(\asafe) \le\Gamma_0/3,$ we know that \[  \tPhi_t \asafe \le \alpha - \gsafe \onem + \Gamma_0/3 \cdot \onem \le \alpha - 2\gsafe/3 \cdot \onem,\] and thus $b_t$ exists since the program defining it is feasible. Now, \begin{align*} \hat\Phi_t \asafe + \omega_t(\delta) \|\asafe\|_{V_t^{-1}}\onem &= \hat\Phi_t \asafe + \frac{M_t(\asafe)}{B_t} \onem \\ &\le \alpha - \gsafe \onem + \frac{2 M_t(\asafe)}{B_t} \onem \le \alpha - \frac{2\gsafe}{3}  \onem,\end{align*} using the consistency of the confidence sets (and the Cauchy-Schwarz inequality), along with the fact that $B_t = 1 + \max(1, B(\delta_t)) \ge 2$. Further, \[\hat\Phi_t b_t +\omega_(\delta)\|b_t\|_{V_t^{-1}} \le \tPhi_t b_t + \frac{B_t - 1}{B_t} M_t(b_t) \onem + \frac{1}{B_t} M_t(b_t) \onem  \le \alpha + M_t(b_t)\onem. \] Therefore,  \begin{align*}   &\phantom{\le} \hat\Phi_t( \rho b_t + (1-\rho)\asafe) + \omega_t(\delta)\|\rho b_t + (1-\rho)\asafe\|_{V_t^{-1}} \\
    &\le \rho\left(\hat\Phi_t b_t + \frac{M_t(b_t)}{B_t} \onem \right) + (1-\rho)\left( \hat\Phi_t \asafe + \frac{M_t(\asafe)}{B_t} \onem \right) \\
    &\le \alpha + \left( \rho M_t(b_t) - (1-\rho) \gsafe/3 \right)\onem. \end{align*} It is straightforward to find that the additive term above is nonpositive for $\rho_0 = \frac{\gsafe}{\gsafe + 3M_t(b_t)},$ and thus $\rho_t \ge \frac{\gsafe}{\gsafe + 3M_t(b_t)}$.   

    Further, since $M_t(\asafe) \le \gsafe/3,$ we also have \[ \alpha - 2\gsafe/3 \le \alpha - 2M_t(\asafe). \] Thus, we can also write \[ \hat\Phi_t \asafe + M_t(\asafe)/B_t \mathbf{1}_m \le \alpha - 2 M_t(\asafe)\onem, \] and carrying out the same procedure then shows that \[ \rho_t \ge \frac{2M_t(\asafe)}{2M_t(\asafe) + M_t(b_t)}.\]

    To draw the final conclusions, first observe that \[1 -\rho_t \le \frac{M_t(b_t)}{2M_t(\asafe) + M_t(b_t)} \implies 2(1-\rho_t)M_t(\asafe) \le \rho_t M_t(b_t) \le M_t(a_t) + (1-\rho_t)M_t(\asafe), \] where we used the fact that $\rho_t b_t = a_t - (1-\rho_t)\asafe,$ and that $M_t$ is a scaling of a norm. It follows that $(1-\rho_t)M_t(\asafe) \le M_t(a_t)$, and of course, that $\rho_t M_t(b_t) \le 2M_t(a_t)$. Further, by a similar calculation, \[ (1-\rho_t) \le \frac{3M_t(b_t)}{\gsafe + 3M_t(b_t)} \implies (1-\rho_t) \gsafe \le 3 \rho_t M_t(b_t) \le 6M_t(a_t).  \qedhere\]
    \end{proof}
\end{lemma}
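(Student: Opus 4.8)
The plan is to prove the two stated lower bounds on $\rho_t$ by exhibiting explicit feasible values of $\rho$, exploiting the convexity of the constraint that defines $\rho_t$, and then to extract the three ``a fortiori'' inequalities by elementary algebra that uses only that $M_t$ is a scaling of a norm together with the identity $\rho_t b_t = a_t - (1-\rho_t)\asafe$.

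\textbf{Reduction to exhibiting a feasible $\rho$.} First I would recall that, since $B_t^{-1}M_t(a) = \omega_t(\delta)\|a\|_{V_t^{-1}}$, the scalar $\rho_t$ is the largest $\rho \in [0,1]$ with $g(\rho) := \hat\Phi_t\mathfrak{a}_t(\rho) + \omega_t(\delta)\|\mathfrak{a}_t(\rho)\|_{V_t^{-1}}\onem \le \alpha$. Because $\rho \mapsto \mathfrak{a}_t(\rho)$ is affine, each coordinate of $g$ is convex in $\rho$ (an affine function plus a norm), so the feasible set $\{\rho : g(\rho) \le \alpha\}$ is an interval; and $\rho = 0$ is feasible since $\mathfrak{a}_t(0) = \asafe$ and, under $\consistency$ with $M_t(\asafe)\le \Gamma_0/3 \le \gsafe/3$, the margin $\gsafe$ of $\asafe$ dominates its width $2\omega_t(\delta)\|\asafe\|_{V_t^{-1}} = 2M_t(\asafe)/B_t \le M_t(\asafe)$. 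Hence the feasible set is exactly $[0,\rho_t]$, and to prove $\rho_t \ge \rho_0$ it suffices to check that a given $\rho_0 \le 1$ satisfies $g(\rho_0)\le\alpha$.

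\textbf{Two single-point constraint bounds, then combine.} Next I would bound $g$ at the vertices $\asafe$ and $b_t$, which by linearity of $\hat\Phi_t$ and the triangle inequality for $\|\cdot\|_{V_t^{-1}}$ control $g(\rho)$ for every $\rho$ via $g(\rho) \le \rho\bigl(\hat\Phi_t b_t + \omega_t\|b_t\|_{V_t^{-1}}\onem\bigr) + (1-\rho)\bigl(\hat\Phi_t\asafe + \omega_t\|\asafe\|_{V_t^{-1}}\onem\bigr)$. For $\asafe$, consistency gives $\hat\Phi_t\asafe \le \Phi_*\asafe + \omega_t\|\asafe\|_{V_t^{-1}}\onem$ and $\Phi_*\asafe \le \alpha - \gsafe\onem$, yielding $\hat\Phi_t\asafe + \omega_t\|\asafe\|_{V_t^{-1}}\onem \le \alpha - \gsafe\onem + \tfrac{2M_t(\asafe)}{B_t}\onem \le \alpha - \tfrac{2\gsafe}{3}\onem$, using $B_t \ge 2$ and $M_t(\asafe)\le\gsafe/3$. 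For $b_t$, I would use that $b_t$ is feasible for the perturbed program, $\tPhi_t b_t \le \alpha$; writing $\hat\Phi_t = \tPhi_t - \omega_t H_t V_t^{-1/2}$ and using $\|H_t^i\| \le B(\delta_t) \le B_t - 1$ from the concentration event gives $\hat\Phi_t b_t + \omega_t\|b_t\|_{V_t^{-1}}\onem \le \alpha + M_t(b_t)\onem$. Combining yields $g(\rho) \le \alpha + \bigl(\rho M_t(b_t) - (1-\rho)\tfrac{\gsafe}{3}\bigr)\onem$, which is $\le \alpha$ precisely when $\rho \le \tfrac{\gsafe}{\gsafe + 3M_t(b_t)}$, establishing the first lower bound. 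For the second, I would instead invoke $M_t(\asafe)\le\gsafe/3$ in the form $\alpha - \tfrac{2\gsafe}{3} \le \alpha - 2M_t(\asafe)$, so that the combination reads $g(\rho) \le \alpha + \bigl(\rho M_t(b_t) - 2(1-\rho)M_t(\asafe)\bigr)\onem$, nonpositive exactly when $\rho \le \tfrac{2M_t(\asafe)}{2M_t(\asafe) + M_t(b_t)}$.

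\textbf{The a fortiori inequalities.} Finally I would rearrange. The second lower bound clears to $2(1-\rho_t)M_t(\asafe) \le \rho_t M_t(b_t)$; combining this with the triangle-inequality estimate $\rho_t M_t(b_t) = M_t(\rho_t b_t) = M_t(a_t - (1-\rho_t)\asafe) \le M_t(a_t) + (1-\rho_t)M_t(\asafe)$ gives $(1-\rho_t)M_t(\asafe)\le M_t(a_t)$, and feeding this back yields $\rho_t M_t(b_t) \le 2M_t(a_t)$. The first lower bound clears to $(1-\rho_t)\gsafe \le 3\rho_t M_t(b_t) \le 6M_t(a_t)$, which is the third inequality.

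\textbf{Main obstacle.} The proof is conceptually routine, and the genuine difficulty is the $B_t$ bookkeeping: the width appearing in the definition of $\rho_t$ is $\omega_t\|\cdot\|_{V_t^{-1}} = B_t^{-1}M_t(\cdot)$, whereas feasibility of $b_t$ is only with respect to $\tPhi_t$ (carrying a full $M_t$ of slack) and safety of $\asafe$ is with respect to the true $\Phi_*$. Reconciling these three ``views'' of the constraint through consistency and the bound $B(\delta_t) \le B_t - 1$, while tracking the factors of $B_t$ so that the two vertex estimates combine cleanly, is where care is needed; the rest is algebra.
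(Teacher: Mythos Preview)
Your proposal is correct and follows essentially the same approach as the paper: bound the constraint function at the two vertices $\asafe$ and $b_t$ using consistency and the perturbed feasibility $\tPhi_t b_t \le \alpha$, combine by convexity of the norm to get a scalar inequality in $\rho$, read off the two lower bounds on $\rho_t$, and then rearrange each with the triangle inequality $\rho_t M_t(b_t) \le M_t(a_t) + (1-\rho_t)M_t(\asafe)$ to obtain the three a fortiori estimates. Your explicit remark that $\rho \mapsto g(\rho)$ is convex (so the feasible set is exactly $[0,\rho_t]$) is a small piece of added rigor the paper leaves implicit, but otherwise the arguments coincide step for step.
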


\textbf{\emph{Proving the Look-Back Bound.}} The above control on $(1-\rho_t)$ is natural in light of terms of the form $(1-\rho_t) \Delta(\asafe)$ appearing in the bound as sketched in the main text. Let us now complete this argument.

\newcommand{\bbs}{\bar{b}_{s \to t}}
\newcommand{\sst}{\sigma_{s\to t}}
\newcommand{\sstau}{\sigma_{\tau \to t}}
\begin{proof}[Proof of Lemma \ref{lemma:look_back_bound}]
    We assume $\Gamma_0 \in [\gsafe/2, \gsafe],$ and that the event of Lemma~\ref{lemma:basic_noise_concentration_and_cauchy-schwarz} holds, as well as $\con(\delta)$. Together these occur with chance at least $1-3\delta$.      

    Now, we begin as in the main text, by observing that  \[ \Delta(a_t) = \Delta(\rho_t b_t + (1-\rho_t) \asafe) = \rho_t \Delta(b_t) + (1-\rho_t) \Delta(\asafe). \] Let $s < t$ be such that $M_s(\asafe) \le \Gamma_0/3$ as well. Then we further know that \[ \tPhi_s b_s \le \alpha \implies \tPhi_t b_s \le \alpha + (M_t(b_s) + M_s(b_s))\onem.\] As a consequence, for \[\sigma_{s \to t} := \frac{\gsafe}{\gsafe + 3(M_t(b_s) + M_s(b_s))}, \] we have  \[ \tPhi_t (\sigma_{s\to t} b_s + (1-\sigma_{s\to t})\asafe) \le \alpha + \left( \sigma_{s \to t} (M_t(b_s) + M_s(b_s)) - \frac{2(1-\sigma_{s\to t})\gsafe}{3}\right)\onem \le \alpha. \] Define $\bar{b}_{s\to t} = \sigma_{s \to t} b_s + (1-\sigma_{s \to t})\asafe$. By the above observation, $\bar{b}_{s\to t}$ is feasible for $\tPhi_t$, and therefore $\ttheta_t^\top \bar{b}_{s\to t} \le \ttheta_t^\top b_t$. To use this, we note that \begin{align*} \Delta(b_t) &= \Delta(\bbs) + \theta_*^\top(\bbs - b_t) = \Delta(\bbs) + \ttheta_t^\top(\bbs - b_t) + (\ttheta_t - \theta_*)^\top(\bbs - b_t) \\ &\le \Delta(\bbs) + \ttheta_t^\top(\bbs - b_t) + M_t(\bbs) + M_t(b_t),\end{align*} where we first use Lemma~\ref{lemma:basic_noise_concentration_and_cauchy-schwarz}, and then bound $M_t(\bbs-b_t)$ by using the fact that $M_t$ is a norm. The second term above is of course nonpositive, and so can be dropped while retaining the upper bound. Further, \[ \Delta(\bbs) = \sigma_{s \to t}\Delta(b_s) + (1-\sst) \Delta(\asafe). \] This leaves us with the bound \begin{align*} \Delta(a_t) &\le (1-\rho_t + \rho_t(1-\sst))\Delta(\asafe) \\&\qquad \qquad + \rho_t \left( \sst \Delta(b_s) + M_t(b_t) + \sst M_t(b_s) + (1-\sst)M_t(\asafe)\right),\end{align*} where we used the triangle inequality and the fact that $M_t$ is a scaling of a norm to write the final two terms.
    We will, of course, evaluate this at $s = \tau(t)$. In the subsequent, we will just write $\tau$ instead of $\tau(t)$ for the sake of reducing the density of notation. Using the fact that $\Delta(b_\tau) \le M_\tau(b_\tau),$ we set up the basic bound
    \begin{align*} \Delta(a_t) &\le (1-\rho_t + \rho_t(1-\sstau))\Delta(\asafe) \\ &\qquad\qquad + \rho_t M_t(b_t) +  \rho_t(\sstau (M_\tau(b_\tau) + M_t(b_\tau)) + (1-\sstau)M_t(\asafe)). \end{align*}

    Now, first observe that by Lemma~\ref{lemma:rho_bounds}, \[ (1-\rho_t) \Delta(\asafe) \le 6 \frac{\Delta(\asafe)}{\gsafe} M_t(a_t), \] and further \[ \rho_t M_t(b_t) \le 2M_t(a_t). \] We are left with terms scaling with $\sstau$ or $(1-\sstau)$. For this, we first observe that \[ M_t(b_\tau) = B_t\omega_t \|b_\tau\|_{V_t^{-1}} \le \frac{B_t \omega_t}{B_\tau \omega_\tau} \cdot B_\tau \omega_\tau \|b_\tau\|_{V_\tau^{-1}} = \frac{B_t \omega_t}{B_\tau \omega_\tau} \cdot M_\tau(b_\tau),  \] where we use the fact that $V_t$ is nondecreasing (in the positive definite ordering). Let us abbreviate $J_{\tau \to t} := 1 + \nicefrac{B_t \omega_t}{(B_\tau \omega_\tau)}$. Upon observing that $\rho_t \le 1,$ to finish the argument, we only need to control \[ (1-\sigma_{\tau \to t}) (\Delta(\asafe) + M_t(\asafe)) +  J_{\tau \to t} \sigma_{\tau \to t} M_\tau(b_\tau). \] Now, notice that since $M_\tau(\asafe) \le \Gamma_0/3,$ \[ \sstau = \frac{\gsafe}{\gsafe + 3(M_t(b_\tau) + M_\tau(b_\tau))}\le \frac{\gsafe}{\gsafe + 3M_\tau(b_\tau)} \le \rho_\tau \le \frac{2M_\tau(a_\tau)}{M_\tau(b_\tau)},\] where we invoke Lemma~\ref{lemma:rho_bounds} for the final two inequalities. Thus, we find that \[ \sstau J_{\tau \to t} M_\tau(b_\tau) \le J_{\tau \to t} \cdot \rho_\tau M_\tau(b_\tau) \le 2 J_{\tau \to t}M_\tau(a_\tau). \] This leaves us with the term \( (1-\sstau) (\Delta(\asafe) + M_t(\asafe)). \) To bound this, observe that  \begin{align*} (1-\sstau) &= \frac{3(M_t(b_\tau) + M_\tau(b_\tau))}{\gsafe + 3 (M_t(b_\tau) + M_\tau(b_\tau))} \\
    \implies (1-\sstau)\gsafe &= 3 \sstau (M_t(b_\tau) + M_\tau(b_\tau)) \le  3 \sstau J_{\tau \to t}M_\tau(b_\tau).\end{align*} Recall from the discussion above that $\sstau  M_\tau(b_\tau) \le \rho_\tau M_\tau(b_\tau) \le 2 M_\tau(a_\tau)$. Using this, and the fact that $M_t(\asafe) \le \gsafe/3$ then yields \[ (1-\sstau) (\Delta(\asafe) + M_t(\asafe)) \le 6 J_{\tau \to t} \frac{\Delta(\asafe)}{\gsafe} M_\tau(a_\tau) + 2 J_{\tau \to t} M_\tau(a_\tau). \]
    Putting everything together, then, we conclude that \[\Delta(a_t) \le 6\frac{\Delta(\asafe)}{\gsafe} \left( M_t(a_t) + J_{\tau \to t} M_\tau(a_\tau) \right) + 2 M_t(a_t) + 4J_{\tau \to t} M_\tau(a_\tau), \] which of course implies the bound we set out to show.
\end{proof}

\subsection{Controlling Accumulation in the Look-Back Bound}\label{appx:better_width_control}

We proceed to control the accumulation of the look-back terms.

\begin{proof}[Proof of Lemma~\ref{lemma:width_control}]
      Since $B_t$ and $\omega_t$ are nondecreasing, for any $s \le t \le T,$ we have \[ (1 + (B_t\omega_t(\delta)/B_s\omega_s(\delta)) ) M_s(a_s) = (B_s\omega_s(\delta) + B_t\omega_t(\delta))\acnorms \le 2B_T\omega_T(\delta) \acnorms.\] Let $\mathcal{T}_T = \{ t \le T : M_t(\asafe) \le \Gamma_0/3\}$, and $\mathcal{U}_T = \{s \in \mathcal{T}_T : \Delta(b_s) \le M_t(b_s)\}$. Then notice that  \[ \sum_{t \in \mathcal{T}_T}  \|a_{\tau(t)}\|_{V_{\tau(t)}^{-1}} = \sum_{s \in \mathcal{U}_T} L_s \acnorms,\] where $L_s = |\{t \in \mathcal{T}_T : \tau(t) = s\}|$ is the number of times $s$ serves as $\tau(t)$ for some $t$. But this is the same as the time (restricted to $\mathcal{T}_T$) between $s$ and the \emph{next} member of $\mathcal{U}_T$, i.e., the length of the `run' of the method playing saturated actions (plus one). 
      
      At this point, a weaker bound of the form $\frac{2}{\chi} \log(T^2/\delta) \sum_{s \in \mathcal{U}_T} \acnorms$ is straightforward: each round has at least a chance $\chi/2$ of picking a saturated $b_t,$ and so the chance that the $k$th such run has length greater than $\frac{2}{\chi} \log(k(k+1)/\delta)$ is at most $\delta/k(k+1).$ Since there are at most $T$ runs up to time $T$, union bounding over this gives $\max_{\mathcal{U}_T} L_s \le 1+ 2\log(T(T+1)/\delta)/\chi$.

      The rest of this proof is devoted to give a more refined martingale analysis that saves upon the multiplicative $\log(T)$ term above. We encapsulate this as an auxiliary Lemma below.

      \begin{lemma}\label{lemma:auxiliary_lookback_width_lemma}
          In the setting of Lemma~\ref{lemma:width_control}, it holds that with probability at least $1-\delta,$ \[ \sum_{s \in \mathcal{U}_T}  L_s \acnorms \le \frac{5}{\chi}(\sum_{s \in \mathcal{U}_T} \acnorms + \log(1/\delta))  \]
      \end{lemma}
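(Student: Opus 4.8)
The plan is to rewrite the left-hand side as a run-length–weighted sum over rounds and control it with a single exponential supermartingale whose quadratic-variation proxy is itself self-bounded. Write $\mathcal{T}_T := \{t \le T : M_t(\asafe) \le \Gamma_0/3\}$; this is a \emph{predictable} set of rounds, since $M_t(\asafe) = B_t\omega_t(\delta)\|\asafe\|_{V_t^{-1}}$ is $\hist_{t-1}$-measurable. One has $\sum_{s \in \mathcal{U}_T} L_s\|a_s\|_{V_s^{-1}} \le \sum_{t \in \mathcal{T}_T}\|a_{\tau(t)}\|_{V_{\tau(t)}^{-1}} =: A$, using the convention $\|a_0\|_{V_0^{-1}} := 1$ for the terms with $\tau(t)=0$ (which only adds a nonnegative quantity). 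Enumerate $\mathcal{T}_T = \{\tilde t_1 < \tilde t_2 < \cdots\}$ as stopping times, put $\mathcal{F}_k := \hist_{\tilde t_k - 1}$, and let $c_k := \|a_{\tau(\tilde t_k)}\|_{V_{\tau(\tilde t_k)}^{-1}}$ be the ``active credit'' at round $\tilde t_k$; since $\tau(\tilde t_k) < \tilde t_k$ and $V \succeq I$, $c_k$ is $\mathcal{F}_k$-measurable and $c_k \in [0,1]$. Let $Y_k := \indi\{b_{\tilde t_k}\text{ is unsaturated}\}$, which is well-defined because $b_{\tilde t_k}$ exists on $\mathcal{T}_T$. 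By Definition~\ref{def:unsaturation}, on the consistency event $\con(\delta)$ and for all $\tilde t_k$ past the burn-in $\tilde t_k \ge \sqrt{2/\chi}$ (there being at most $O(1/\chi)$ earlier rounds, contributing at most $O(1/\chi)$ to $A$), $p_k := \mathbb{E}[Y_k \mid \mathcal{F}_k] \ge \chi/2$; from here I restrict to these post-burn-in rounds.

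\textbf{The charging scheme.} Between consecutive successes $Y_k = 1$ the credit $c_k$ is constant (equal to $\|a_s\|_{V_s^{-1}}$ for the last unsaturated round $s$, or $1$ before the first success) and is renewed at each success; so $A$ is the sum over ``runs'' of (run length) $\times$ (run credit), and each \emph{completed} run contains exactly one success -- its last round. Crucially, the credit \emph{expiring} at a success equals $c_k$, which lets us charge each run to its own credit rather than to the (unrelated) weight $\|a_{\tilde t_k}\|_{V_{\tilde t_k}^{-1}}$ of the round that terminates it; in particular $\sum_k c_k Y_k = \sum_{\text{completed runs}}(\text{run credit}) \le 1 + \sum_{s\in\mathcal{U}_T}\|a_s\|_{V_s^{-1}} =: W$. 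Set $\tilde\Delta_k := \tfrac{2}{\chi}c_k(p_k - Y_k)$. Using $e^{-\beta} \le 1 - \beta + \tfrac12\beta^2$ for $\beta \ge 0$ in the conditional MGF of the Bernoulli $Y_k$ gives, for every $\lambda \ge 0$,
\[
\mathbb{E}\!\left[\, e^{\lambda \tilde\Delta_k} \,\middle|\, \mathcal{F}_k \,\right] \;\le\; \exp\!\left(\tfrac{2\lambda^2}{\chi^2}\, c_k^2\, p_k\right),
\]
so that $M_k := \exp\!\bigl(\lambda\sum_{j\le k}\tilde\Delta_j - \tfrac{2\lambda^2}{\chi^2}\sum_{j\le k}c_j^2 p_j\bigr)$ is a nonnegative supermartingale with $M_0 = 1$. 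The point is that the exponent carries $p_k$, not a worst-case $1$.

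\textbf{Closing the loop.} By Ville's maximal inequality, with probability at least $1-\delta$, $\lambda\sum_j \tilde\Delta_j \le \log(1/\delta) + \tfrac{2\lambda^2}{\chi^2}\sum_j c_j^2 p_j$. The variation proxy is self-bounded: since $c_j\le 1$ and each completed run carries exactly one success, $\sum_j c_j^2 p_j \le \sum_j c_j p_j = \sum_j c_j Y_j + \tfrac{\chi}{2}\sum_j \tilde\Delta_j \le W + \tfrac{\chi}{2}\sum_j\tilde\Delta_j$. Substituting this and taking $\lambda = \chi/2$ makes the $\sum_j\tilde\Delta_j$ contributions cancel down, leaving $\sum_j\tilde\Delta_j \le \tfrac{4}{\chi}\log(1/\delta) + \tfrac{2}{\chi}W$. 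Finally, because each $c_k(1-\tfrac{2}{\chi}p_k)\le 0$,
\[
A = \sum_k c_k = \sum_k c_k\!\left(1-\tfrac{2}{\chi}p_k\right) + \tfrac{2}{\chi}\sum_k c_k Y_k + \sum_k \tilde\Delta_k \;\le\; \tfrac{2}{\chi}W + \sum_k\tilde\Delta_k \;\le\; \tfrac{4}{\chi}\bigl(W + \log(1/\delta)\bigr),
\]
and unfolding $W = 1 + \sum_{s\in\mathcal{U}_T}\|a_s\|_{V_s^{-1}}$ and re-adding the $O(1/\chi)$ burn-in contribution yields the claimed $\tfrac{5}{\chi}\bigl(\sum_{s\in\mathcal{U}_T}\|a_s\|_{V_s^{-1}} + \log(1/\delta)\bigr)$, the stray additive constants being absorbed (using $T \ge \sqrt{2/\chi}$, or at the price of a marginally larger absolute constant).

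\textbf{Main obstacle.} I expect the crux to be the \emph{self-bounded variance}: the natural quadratic-variation term $\sum_j c_j^2 p_j$ admits no deterministic bound -- a run contains one success only \emph{in expectation} -- so a black-box Freedman/Bernstein bound pits a worst-case variance of order $A/\chi^2$ against the deviation and loses a factor $1/\chi$, delivering only $\tO(\chi^{-2})$ on the $\log(1/\delta)$ term rather than the $\chi^{-1}$ claimed. Recovering $\chi^{-1}$ needs exactly the exponential-supermartingale route with a $p_k$-dependent MGF together with the calibrated choice $\lambda = \chi/2$ to reabsorb the residual martingale term $\tfrac{\chi}{2}\sum_j\tilde\Delta_j$. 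The second, structural, subtlety -- which dictates the whole scheme -- is that the weight ``collected'' at the end of a run is uncorrelated with the cost accumulated during it, forcing a charge to the run's \emph{starting} credit (equivalently, the credit that expires at the terminating success). Everything else is routine: predictability of $\mathcal{T}_T$, stopping-time indexing of the supermartingale, the $O(\sqrt{1/\chi})$ burn-in on which Definition~\ref{def:unsaturation} is vacuous, and working throughout on $\con(\delta)$, whose failure probability is charged to the callers of this lemma.
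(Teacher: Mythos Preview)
Your argument is correct, and it takes a genuinely different route from the paper's. Both proofs build an exponential supermartingale and invoke Ville's inequality, but the martingale constructions differ. The paper indexes over the \emph{unsaturation times} $\zeta_i$ and works directly with the run lengths $L_i = \zeta_{i+1} - \zeta_i$: it uses that $L_i - 1$ is stochastically dominated by a Geometric$(\chi/2)$ law, computes the exact (weighted) geometric MGF $\mathbb{E}[e^{\lambda(L_i-1)X_i}\mid\mathfrak{G}_i] \le (\chi/2)/(1-(1-\chi/2)e^{\lambda X_i})$, and then bounds the resulting cumulant $F_i(\lambda)$ by a dedicated convexity lemma, finally choosing $\lambda = -\tfrac12\log(1-\chi/2)$. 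You instead index over \emph{every round} of $\mathcal{T}_T$, work with the Bernoulli increments $Y_k$, and use only the quadratic MGF bound for a Bernoulli; the resulting variance proxy $\sum_k c_k^2 p_k$ is then self-bounded via the identity $\sum_k c_k p_k = \sum_k c_k Y_k + \tfrac{\chi}{2}\sum_k\tilde\Delta_k$ and the charging observation $\sum_k c_k Y_k \le W$, after which $\lambda = \chi/2$ closes the loop.

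In terms of what each approach buys: the paper's route is conceptually more direct (the object of interest is a run-length sum, so the geometric MGF is the natural tool) but needs the auxiliary estimate $f(x) \le \tfrac{\sqrt{u}}{1-\sqrt{u}}x$ on the geometric cumulant. Your route is closer in spirit to a Freedman/self-normalised argument and avoids any bespoke cumulant analysis, trading it for the ``charge the expiring credit'' bookkeeping; this is arguably more modular and would transfer more easily to settings where the inter-success law is not clean. Both land on the same $O(\chi^{-1})$ scaling on both the sum and the $\log(1/\delta)$ term, and both need the final constant to be massaged (the paper uses $\chi \le 1$ to pass from $1 + 4/\chi$ to $5/\chi$; your residual $4/\chi + \sqrt{2/\chi}$ additive slack is of the same order and can be absorbed similarly, as you note). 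One cosmetic point: your burn-in count is $O(\sqrt{1/\chi})$, not $O(1/\chi)$, but this only helps.
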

      This result is shown below. Assuming this result, the original claim follows immediately, since due to the nonnegativity of $\|\cdot\|_{\cdot}, \sum_{s \in \mathcal{U}_T} \|a_s\|_{V_s^{-1}} \le \sum_{t \le T} \|a_t\|_{V_t^{-1}}.$
\end{proof}

To finish the argument, we move on to showing the auxiliary lemma described above. 

\begin{proof}[Proof of Lemma~\ref{lemma:auxiliary_lookback_width_lemma}]
We work with the reduction to $\sum_{s \in \mathcal{U}_T}\acnorms$ established above. Let us denote $\zeta_i = \inf\{ t > \zeta_{i-1} : M_t(\asafe) \le \Gamma_0/3, \Delta(b_t) \le M_t(b_t) \}$ as the times that an unsaturated action is picked, with $\zeta_0 := 0$---for $i: \zeta_i \le T,$ these are precisely the elements of $\mathcal{U}_T$. Notice that this $\{\zeta_i\}$ is a sequence of stopping times adapted to the history $\{\hist_{t}\}.$ Let us further denote $L_i = (\zeta_{i+1} - \zeta_{i}),$ for $i \ge 0$ (this corresponds to $L_s$, where $s = \zeta_i$). The object we need to control is \[ \sum_{i : \zeta_i \le T} L_i X_i,\] where $X_i  = \| a_{\zeta_i}\|_{V_{\zeta_i}^{-1}} \in [0,1]$, the lower bound being since $X_i$ is a norm, and the upper bound since $V_{\zeta_i} \succeq I$. For notational convenience, we always set $X_0 = 1$. Now, to control this, let us first pass to the associated sigma algebrae of the $\zeta_i$ past, denoted as \[ \mathfrak{G}_{i} := \zeta( \hist_{\zeta_{i}}).\] Notice that since $\zeta_i$ is nondecreasing, we know that $\{\mathfrak{G}_i\}$ forms a filtration. Of course, by definition, $X_i$ are adapted to $\mathfrak{G}_i$, while $L_i$ are adapted to $\mathfrak{G}_{i+1}$. We further know that $L_i$ is the time (including $\zeta_i$) between $\zeta_i$ and $\zeta_{i+1}$. But then for each $t > \zeta_i,$ $P(\zeta_{i+1} = t| \zeta_{i+1} > t-1, \hist_{t-1}) \ge \chi/2$. As a result, these $L_i$s are conditionally stochatically domainted by a geometric random variable, i.e., \[ \mathbb{P}( L_i > 1+k|\mathfrak{G}_i) \le (1-\chi/2)^k. \] 

This in turn implies that for any $\lambda$ small enough,  \[ \mathbb{E}[ e^{\lambda (L_i - 1) X_i} |\mathfrak{G}_{i}] \le \frac{\chi/2}{1 - (1-\chi/2) e^{\lambda X_i}}. \] In the subsequent, we will need to select a $\lambda$ that is independent of all of these $L_i, X_i$. To ensure that the calculation makes sense, we ensure that $(1-\chi/2) e^{\lambda} \le 1 $ (which suffices since $0 \le X_i \le 1$). Let us define $F_i(\lambda) := - \log( (1- (1-\chi/2) e^{\lambda X_i})/(\chi/2))$. Then by the above calculation, we find that the process $\{M_i\}$ with $M_0 := 1$ and  \[ M_i := \exp\left( \lambda \sum (L_i - 1)X_i - \sum F_i(\lambda) \right)  \] is a nonnegative supermartingale with respect to the filtration $\{\mathfrak{F}_i\}$ with $\mathfrak{F}_i = \mathfrak{G}_{i+1}\}$ and $\mathfrak{F}_0$ defined to be the trivial sigma algebra. Thus, by Ville's inequality, \( \mathbb{P}(\exists i : M_i > 1/\delta) \le \delta. \) Taking logarithms, we find that with probability at least $1-\delta,$ it holds that \[ \forall n, \sum_{i \le n} L_i X_i \le \sum_{i \le n }X_i  + \frac{\log(1/\delta)}{\lambda} +  \sum_{i \le n} \frac{F_i(\lambda)}{\lambda}, \] as long as $0  < \lambda < - \log (1-\chi/2)$. All we need now is a convenient bound on $F_i(\lambda)$ and a judicious choice of $\lambda$. To this end, we observe the following simple result.

\begin{lemma}
    For any constant $u \in (0,1),$ consider the map \( f(x) := -\log\frac{1- u e^x}{1-u} \) over the domain $[0, - \log(u)).$  Then for all $x \in [0, - \frac12 \log(u)],$ we have \[ f(x) \le \frac{\sqrt{u}}{1-\sqrt{u}}x.\]
    \begin{proof}
        Observe that \begin{align*}
            f'(x) &= \frac{ue^x}{1-ue^x} = \frac{e^{f}}{1-u} (1- e^{-f}(1-u)) = \frac{e^f}{1-u} - 1 \ge 0
        \end{align*}
         The inequalities above arise since $e^{f} = \frac{1-u}{1-ue^x} > 1-u$ using the fact that $e^x \ge 1$. By taking another derivative, we may see that $f'$ itself is an increasing function. Now, suppose $g(x)$ satisfies \[ g(0) = f(0) = 0, \textit{ and } \forall x, g'(x) = f'(-\nicefrac12 \log(u)) = \frac{\sqrt{u}}{1-\sqrt{u}}.\] Then since $g'(x) \ge f'(x)$ for all $x \in [0,-\frac12 \log(u)],$ by the fundamental theorem of calculus it follows that for all $x \le -\frac12 \log u, f(x) = \int_0^x f' \le \int_0^x g' = g(x).$ \qedhere
    \end{proof}
\end{lemma}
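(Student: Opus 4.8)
The plan is to prove this as a short monotonicity-and-integration argument. First I would record the basic facts on the stated domain $[0,-\log u)$: there $ue^x\in[u,1)$, so $f$ is well-defined, and $f(0)=-\log\frac{1-u}{1-u}=0$. Differentiating $f(x)=-\log(1-ue^x)+\log(1-u)$ gives $f'(x)=\frac{ue^x}{1-ue^x}$, and I would observe that $f'$ is \emph{increasing} on the domain: $x\mapsto ue^x$ is increasing with values in $[0,1)$, and $t\mapsto t/(1-t)$ is increasing on $[0,1)$, so the composition is increasing (equivalently $f''>0$, so $f$ is convex).

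The key step is then to evaluate $f'$ at the right endpoint $x_0:=-\tfrac12\log u$ of the interval of interest. Here the useful simplification is $ue^{x_0}=u\cdot u^{-1/2}=\sqrt u$, whence $f'(x_0)=\frac{\sqrt u}{1-\sqrt u}$. Since $f'$ is increasing, $f'(x)\le f'(x_0)=\frac{\sqrt u}{1-\sqrt u}$ for every $x\in[0,x_0]$, and so by the fundamental theorem of calculus, for all $x\in[0,x_0]$, $f(x)=f(0)+\int_0^x f'(y)\,dy\le\int_0^x\frac{\sqrt u}{1-\sqrt u}\,dy=\frac{\sqrt u}{1-\sqrt u}\,x$, which is exactly the claim.

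There is essentially no obstacle here, since the lemma is a routine one-variable fact; the only points to be careful about are (i) checking $ue^x<1$ throughout the relevant range, so the logarithm and the map $t\mapsto t/(1-t)$ stay in their monotone regimes, and (ii) spotting the clean identity $ue^{x_0}=\sqrt u$ that makes the endpoint slope $f'(x_0)$ match the target constant exactly. As an alternative I could argue purely by convexity: $f$ convex with $f(0)=0$ implies $x\mapsto f(x)/x$ is nondecreasing, so $f(x)/x\le f(x_0)/x_0=\log(1+\sqrt u)/(-\tfrac12\log u)$ for $x\le x_0$; but this route then needs the extra elementary inequality $\frac{2\log(1+\sqrt u)}{-\log u}\le\frac{\sqrt u}{1-\sqrt u}$, whereas the integration argument avoids it and delivers the stated constant directly.
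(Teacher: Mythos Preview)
Your proposal is correct and follows essentially the same route as the paper: compute $f'(x)=\frac{ue^x}{1-ue^x}$, observe it is increasing, evaluate it at the right endpoint $x_0=-\tfrac12\log u$ to get $\frac{\sqrt u}{1-\sqrt u}$, and integrate. Your version is in fact a bit more explicit about the domain checks and the monotonicity justification than the paper's own writeup.
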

Now, of course, $F_i(\lambda) = f(\lambda X_i), $ with $u = 1-\chi/2$. Then setting $\lambda = -\frac12 \log(1-\chi/2),$ we have \[ \forall n, \sum_{i \le n} L_i X_i \le \sum X_i + \frac{\log(1/\delta)}{-\log(1-\chi/2)/2} + \sum_{i \le n} \frac{\sqrt{1-\chi/2}}{1-\sqrt{1-\chi/2}} X_i.\]

To get the form needed, we observe that \[\frac{\sqrt{1-v}}{1-\sqrt{1-v}} \le \frac{2}{v} \iff (2+v)^2(1-v) \le 4 \iff -v^3 - 3v^2 \le 0,\] and of course $-\log(1-v)/2 \ge v/2$. Plugging in $v = \chi/2 > 0,$ we end up at \[\forall n, \sum_{i \le n} L_i X_i \le \left( 1 + \frac{4}{\chi}\right)\sum_{i \le n} X_i + \frac{4}{\chi} \log(1/\delta). \] 
Note that no explicit $n$-dependent term appears in the above. This makes sense: we essentially have the $X_i$s acting as `time steps', and so $\sum L_i X_i$ should behave as $(1 + 2/\chi)\sum X_i + O(\sqrt{\sum X_i \log(1/\delta)/\chi} + \log(1/\delta) ),$ via a Bernstein-type computation. In our case, the square root terms do not meaningfully help the solution,\footnote{since there will always be an additive $\log(1/\delta)$ and $\frac{1}{\chi} \sum X_i$ term} and so we just pick a convenient $\lambda$ instead.\footnote{and in the process, avoid the subtleties of the dependence of $\lambda$ on the $X_i$s if we optimised it} Now, going back to our original object of study, we have  $L_i = L_{\zeta_i}, X_i = {\|a_{\zeta_i}\|_{V_{\zeta_i}^{-1}}},$ and these $\zeta_i$s are precisely the members of $\mathcal{U}_T,$ so we conclude that \[ \forall T, \sum_{s \in \mathcal{U}_T} L_s X_s \le \frac{5}{\chi} \left(\sum_{s \in \mathcal{U}_t} X_s + \log(1/\delta)\right). \qedhere\]
\end{proof}

\subsection{Regret and Risk Bounds for \scolts}

With the above pieces in place, we move on to showing the final bounds on the behaviour of $\scolts$.

\begin{proof}[Proof of Theorem~\ref{thm:scolts_regret}]

    We first argue the safety properties. Firstly, in the exploration phase, as well as to explore, we repeatedly play $\asafe$. But this is, by definition, safe, and so accrues no safety cost. When not playing $\asafe$, the selected action $a_t$ at time $t$ satisfies \[ \hat\Phi_t a_t + \omega_t(\delta) \|a_t\|_{V_t^{-1}} \onem \le \alpha.\] But, given the consistency event $\consistency,$ \[\forall a, \Phi_* a \le \hat\Phi_t a + \omega_t(\delta) \|a\|_{V_t^{-1]}}, \] and so $\Phi_* a_t \le \alpha$. Since $\con(\delta) := \bigcap \consistency$ holds with chance at least $1-\delta$, it follows that $a_t$ is safe at every $t$, and a fortiori, $\saf_T = 0$ for every $T$. 

    Let us turn to the regret analysis. Fix any $T$. We break the regret analysis into four pieces: the regret accrued over the initial exploration, that accrued after this phase, but when $M_t(\asafe) > \Gamma_0/3,$ and over the time $\mathcal{T}_T := \{ t \ge T_0: M_t(\asafe) \le \Gamma_0/3\},$ and finally the regret incurred up to the time $\inf\{t : \delta_t > \chi/2\}.$ 
    
    The last of these is the most trivial to handle: the number of such rounds is bounded as $\sqrt{2 \delta/\chi},$ and the regret in any round is at most $2$.
    
    For the first case, Lemma~\ref{lemma:initial_search_scolts} ensures that with probability at least $1-\delta,$ this phase has length at most \[ \frac{8}{\gsafe^2} \log(8/(\delta\gsafe^2)), \] and further, the output $\Gamma_0$ is at least $\gsafe/2$ at the end. Using this to instantiate Lemma~\ref{lemma:explore_bounds_scolts}, we further find that the number of times $\asafe$ is selected beyond this initial exploration is in total bounded as \[ 1+ \frac{36\omega_T^2 B_T^2}{\gsafe^2}.\] Together these contribute at most \[ \Delta(\asafe) \cdot \frac{ 44\omega_T^2 B_T^2}{\gsafe^2}\log(8/\delta \gsafe^2) \] to the regret. 

    This leaves us with the times at which $M_t(\asafe) \le \Gamma_0/3,$ for which we apply Lemma~\ref{lemma:look_back_bound}, along with the control of Lemma~\ref{lemma:width_control} to find that the net regret accrued thus is bounded as \[ O\left( \left( 1  + \frac{\Delta(\asafe)}{\Gamma(\asafe)} \right) B_T \omega_T(\delta) \cdot \frac{5}{\chi} \left( \sum_{t \le T} \acnorm 
 + \log(1/\delta) \right) \right).\] To complete the book-keeping, the probabilistic events required for this are the consistency of the confidence sets, that for all $t , \max( \|\eta_t\|, \max_i \|H_t^i\|)$ is bounded by $B(\delta_t)$, and of course the bound on the times between unsaturated $b_t$ being constructed from Lemma~\ref{lemma:width_control}. Together, these occur with chance at least $1-3\delta,$ and putting the same together with the stopping time bound, we conclude that with chance at least $1-4\delta,$ $\scolts(\mu,\delta)$ satisfies the regret bound \[ \eff_T \le \left( 1 + \frac{\Delta(\asafe)}{\gsafe}\right) \tO\left( \frac{\omega_t(\delta) B_T}{\chi} \sum_{t \le T} \acnorm \right) +  \frac{\Delta(\asafe)}{\gsafe} \cdot \tO\left(\frac{\omega_T^2 B_T^2}{\gsafe} \right) + \sqrt{\frac{8\delta}{\chi}}. \] Now, invoking Lemma~\ref{lemma:elliptical_potential_lemma}, we can bound $\omega_T(\delta) = \tO(\sqrt{d} + \log(m/\delta)),$ and $\sum \acnorm = \tO(\sqrt{d T})$. Finally, for the law $\mu$ induced via the coupled noise design by $\mathrm{Unif}(\sqrt{3d} \mathbb{S}^d),$ we further know that $B_T = O(\sqrt{d})$ and $\chi \ge 0.28$. Of course, for this noise, $B_t = \sqrt{3d}$ with certainty, which boosts the probability above to $1-3\delta$. The claim thus follows for $\scolts(\mu,\delta/3)$. \qedhere   
\end{proof}

\subsection{An Optimism-Based Analysis of \scolts}\label{appx:optimism_for_scolts}

We analyse $\scolts$ under the assumption that $\mu$ satisfies $B$-concentration and $\pi$-global optimism (Definition~\ref{defi:optimism_definition_appendix}). We shall be somewhat informal in executing this. 

\textbf{Setting Up.} We first note that regret accrued over rounds in which \( M_t(b_t) > \gsafe/3\) and $M_t(\asafe) \le \Gamma_0/3$ is small.  Indeed, \begin{align*} \sum_{t \in \mathcal{T}_T} \indi\{  M_t(b_t) > \gsafe/3\} &\le \frac{9}{\gsafe^2} \sum_{t \in \mathcal{T}_T} M_t(b_t)^2 \\  &\le \frac{16}{\gsafe^2} \sum_{t \in \mathcal{T}_T} \frac{M_t(a_t)^2}{\rho_t^2} = \tO\left( \frac{d^5}{\gsafe^4} \right),\end{align*} where $\mathcal{T}_T = \{t : M_t(\asafe) \le \Gamma_0/3\},$ and we used the bound on $M_t(b_t)$ from Lemma~\ref{lemma:rho_bounds}, along with the fact that since $b_t \in \mathcal{A}, M_t(b_t) \le B_t\omega_t = \tO(d),$ which in turn implies that $\rho_t \ge \gsafe/\tilde{\Omega}(d)$. Naturally, this additive term is much weaker than that seen in Theorem~\ref{thm:scolts_regret}. Nevertheless, the optimism-based framework does recover a similar main term. In particular we will show a regret bound of $\tO( \gsafe^{-1}\sqrt{d^3 T})$ 

The point of the above condition is that (using Lemma~\ref{lemma:rho_bounds}), if $M_t(b_t) \le \gsafe/3,$ then $\rho_t \ge \frac12$. We will repeatedly use this fact in the subsequent.

Now, we begin similarly to the previous analysis by using \[ \Delta(a_t) = (1-\rho_t) \Delta(\asafe) + \rho_t \Delta(b_t) \le (1-\rho_t)\Delta(\asafe) + \Delta(b_t). \] The first term is well-controlled, as detailed in the proof of Lemma~\ref{lemma:look_back_bound}. So, we only need to worry about $\sum \Delta(b_t)$. Notice that for this it suffices to control $\sum \mathbb{E}[  \Delta(b_t)|\hist_{t-1}]$. Indeed, $\Delta(b_t) \le 1$ (and if it is $\le 0,$ we can just drop it from the sum, i.e., we could study $(\Delta(b_t))_+$ instead with no change in the argument), so the difference $\sum_{t \le T} \Delta(b_t) - \mathbb{E}[\Delta(b_t)|\hist_{t-1}]$ is a martingale with increments lying in $[-1,1],$ and the LIL (Lemma~\ref{lemma:lil}) ensures that for all $T$ simultaneously, the difference between these is $O(\sqrt{T \log (\log(T)/\delta)})$ with chance at least $1-\delta$.

From the above, then, we can restrict attention to $t$ such that $M_t(\asafe) \le \Gamma_0/3, \rho_t \ge \frac12.$ Finally, recalling the notation $K(\theta, \Phi) = \max\{\theta^\top a : a \in \mathcal{A}, \Phi a \le \alpha\}$ from Definition~\ref{defi:optimism_definition_appendix}, we observe that \begin{align*} \Delta(b_t) &= \theta_*^\top a_* - \ttheta_t^\top b_t + (\ttheta_t -\theta_*)^\top b_t \\ &\le K(\theta_*, \Phi_*) - K(\ttheta_t, \tPhi_t) + M_t(b_t) \\
&\le K(\theta_*, \Phi_*) - K(\ttheta_t, \tPhi_t) + 4M_t(a_t),\end{align*} where we used Lemma~\ref{lemma:basic_noise_concentration_and_cauchy-schwarz}, and Lemma~\ref{lemma:rho_bounds} along with the fact that $\rho_t \ge 1/2.$ Now note that the final term above is summable to $\tO(\sqrt{d^3 T})$. Thus, it equivalently suffices to analyse the behaviour of $\mathbb{E}_{t-1}[ K(\theta_*,\Phi_*) - K(\ttheta_t, \tPhi_t)|\hist_{t-1}]$. In order to do so, we begin with a `symmetrisation' lemma.  

\newcommand{\btheta}{\bar{\theta}}
\newcommand{\bPhi}{\bar{\Phi}}
\begin{lemma}
    Let $(\ttheta_t, \tPhi_t)$ and $(\btheta_t, \bPhi_t)$ denote two independent copies of parameter perturbations at time $t$. Let $\mathbb{E}_{t-1}[ \cdot] := \mathbb{E}[\cdot \mid \hist_{t-1}]$.  If $\mu$ satisfies $\pi$-global optimism, then \[ \indi_{\consistency}\mathbb{E}_{t-1}[ ( K(\theta_*, \Phi_*) - K(\ttheta_t, \tPhi_t)] \le \indi_{\consistency} \cdot \frac{1}{\pi} \mathbb{E}_{t-1}[ |K(\ttheta_t, \tPhi_t) - K(\btheta_t, \bPhi_t)|].\]
    \begin{proof}
        Let $\bar{\Goptimism} := \{ K(\btheta_t, \bPhi_t) \ge K(\theta_*, \Phi_*)\}$. Since $K(\theta_*,\Phi_*)$ is a constant, and since $(\ttheta_t, \tPhi_t)$ are independent of $(\btheta_t, \bPhi_t)$ given $\hist_{t-1},$ we conclude that \[ \mathbb{E}_{t-1}[ K(\theta_*, \Phi_*) - K(\ttheta_t, \tPhi_t)] = \mathbb{E}_{t-1}[ K(\theta_*, \Phi_*) - K(\ttheta_t, \tPhi_t) \mid \bar{\Goptimism} ]. \] But given $\bar{\Goptimism},$ $K(\theta_*, \Phi_*) \le K(\btheta_t, \bPhi_t),$ and so  \begin{align*} \mathbb{E}_{t-1}[ K(\theta_*, \Phi_*) - K(\ttheta_t, \tPhi_t)] &\le  \mathbb{E}_{t-1}[ K(\btheta_t, \bPhi_t) - K(\ttheta_t, \tPhi_t) \mid \bar{\Goptimism} ] \\ &\le \mathbb{E}_{t-1}[ |K(\btheta_t, \bPhi_t) - K(\ttheta_t, \tPhi_t)| \mid \bar{\Goptimism} ]. \end{align*} Finally, for any nonnegative random variable $X,$ and any event $\mathsf{E},$ it holds that \[ \mathbb{E}_{t-1}[X|\mathsf{E}] \mathbb{E}_{t-1}[ \indi_{\mathsf{E}}] = \mathbb{E}_{t-1}[X \indi_{\mathsf{E}} ] \le \mathbb{E}_{t-1}[X].\] The claim follows upon taking $X = |K(\btheta_t, \bPhi_t) - K(\ttheta_t, \tPhi_t)|, \mathsf{E} = \bar{\Goptimism},$ and recognising that due to $\pi$-optimism, $\bar{\Goptimism}$ satisfies $\mathbb{E}_{t-1}[\indi_{\bar{\Goptimism}}]\indi_{\con_t} \ge \pi \indi_{\con_t}$.  
    \end{proof}
\end{lemma}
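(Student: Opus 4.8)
The plan is a symmetrisation (``ghost sample'') argument that trades the unavailable lower bound on $K(\ttheta_t,\tPhi_t)$ for a fluctuation bound between two independent perturbation copies. Write $\bar{\Goptimism} := \{K(\btheta_t,\bPhi_t) \ge K(\theta_*,\Phi_*)\}$ for the global optimism event associated with the \emph{second} copy; the $\pi$-global optimism hypothesis says exactly that $\indi_{\consistency}\,\mathbb{E}_{t-1}[\indi_{\bar{\Goptimism}}] \ge \pi\,\indi_{\consistency}$, i.e.\ on the consistency event the ghost copy is optimistic with conditional probability at least $\pi$. All inequalities below are derived on $\consistency$ and multiplied through by $\indi_{\consistency}$ at the end.

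First I would use the conditional independence structure: $K(\theta_*,\Phi_*) = \theta_*^\top a_*$ is a deterministic constant, $(\ttheta_t,\tPhi_t)$ is drawn independently of $(\btheta_t,\bPhi_t)$ given $\hist_{t-1}$, and $\bar{\Goptimism}$ depends (given $\hist_{t-1}$) only on the ghost copy; hence $K(\theta_*,\Phi_*) - K(\ttheta_t,\tPhi_t)$ is conditionally independent of $\bar{\Goptimism}$ given $\hist_{t-1}$, so that
\[ \mathbb{E}_{t-1}\bigl[K(\theta_*,\Phi_*) - K(\ttheta_t,\tPhi_t)\bigr] = \mathbb{E}_{t-1}\bigl[K(\theta_*,\Phi_*) - K(\ttheta_t,\tPhi_t) \,\bigm|\, \bar{\Goptimism}\bigr]. \]
On $\bar{\Goptimism}$ one has $K(\theta_*,\Phi_*) \le K(\btheta_t,\bPhi_t)$, so this is at most $\mathbb{E}_{t-1}[K(\btheta_t,\bPhi_t) - K(\ttheta_t,\tPhi_t) \mid \bar{\Goptimism}] \le \mathbb{E}_{t-1}[\,|K(\ttheta_t,\tPhi_t) - K(\btheta_t,\bPhi_t)|\, \mid \bar{\Goptimism}]$. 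Finally I would strip the conditioning using the elementary fact that for any nonnegative $X$ and event $\mathsf{E}$, $\mathbb{E}_{t-1}[X \mid \mathsf{E}]\,\mathbb{E}_{t-1}[\indi_{\mathsf{E}}] = \mathbb{E}_{t-1}[X\indi_{\mathsf{E}}] \le \mathbb{E}_{t-1}[X]$; taking $X = |K(\ttheta_t,\tPhi_t) - K(\btheta_t,\bPhi_t)|$, $\mathsf{E} = \bar{\Goptimism}$, multiplying by $\indi_{\consistency}$, and invoking $\indi_{\consistency}\mathbb{E}_{t-1}[\indi_{\bar{\Goptimism}}] \ge \pi\,\indi_{\consistency}$ gives the claimed inequality.

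The only genuine subtlety — the reason the analysis in this subsection is flagged as ``somewhat informal'' — is that $K$ can take the value $-\infty$ when a perturbed program is infeasible, so a priori the conditional expectations above need not be well-defined. In the actual application to $\scolts$ this is benign: one only invokes the lemma on rounds with $M_t(\asafe) \le \Gamma_0/3$ while conditioning on the events of Lemmas~\ref{lemma:online_linear_regression} and~\ref{lemma:basic_noise_concentration_and_cauchy-schwarz}, on which $\tPhi_t\asafe \le \alpha$ and the same holds for the ghost copy, so that $K(\ttheta_t,\tPhi_t) \ge \ttheta_t^\top\asafe$ and $K(\btheta_t,\bPhi_t) \ge \btheta_t^\top\asafe$ are finite and all the manipulations are legitimate. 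Beyond making this restriction explicit, the argument is pure bookkeeping, and I expect no real obstacle.
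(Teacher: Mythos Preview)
Your proposal is correct and follows essentially the same approach as the paper's proof: define the ghost optimism event $\bar{\Goptimism}$, use conditional independence to condition on it for free, bound $K(\theta_*,\Phi_*)$ by $K(\btheta_t,\bPhi_t)$ there, and strip the conditioning via $\mathbb{E}_{t-1}[X\mid\mathsf{E}]\le \mathbb{E}_{t-1}[X]/\mathbb{E}_{t-1}[\indi_{\mathsf{E}}]$ combined with the $\pi$-optimism lower bound. Your added remark on the $-\infty$ issue is a welcome clarification that the paper leaves implicit under its ``somewhat informal'' disclaimer.
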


\newcommand{\tb}{\tilde{b}}
\newcommand{\tlambda}{\tilde{\lambda}}
\newcommand{\bb}{\bar{b}}
\newcommand{\blambda}{\bar{\lambda}}
\newcommand{\bst}{\bar{\sigma}_t}

The main question now becomes controlling how far the deviations in $K$ can go. We control this using a similar scaling trick as in the proof of Lemma~\ref{lemma:look_back_bound}. 

For the sake of clarity, we will denote the optimiser of $K(\ttheta_t, \tPhi_t)$ as $\tb_t$ (instead of just $b_t$ as in the rest of the text), and similarly that of $K(\btheta_t, \bPhi_t)$ as $\bb_t$. Our goal is to control (the conditional mean of) \[ |\btheta_t^\top \bb_t^\top - \ttheta_t^\top \tb_t|. \] Naturally, the core issue remains that $\bb_t$ and $\tb_t$ are optima in distinct feasible sets, and so it is hard to, e.g., compare $\ttheta_t^\top \tb_t$ and $\ttheta_t^\top \bb_t$. To this end, we observe that \[ \bPhi_t \bb_t \le \alpha \implies \Phi_* b_t \le \alpha + M_t(\bb_t) \onem \implies \tPhi_t \bb_t \le \alpha + 2M_t(b_t)\onem, \] as long as consistency and the boundedness of the noise norms holds (which occurs with high probability). Using this and the fact that $\tPhi_t \asafe \le \alpha - 2\gsafe/3 \onem,$ we find that \[ \tPhi_t (\bst \bb_t + (1-\bst)\asafe) \le \alpha, \textrm{ where } \bst = \frac{\gsafe}{\gsafe + 3M_t(\bb_t)}.\] Thus, we may write \[ \btheta_t^\top \bb_t - \ttheta_t^\top \tb_t = (1-\bst) \btheta^\top \bb_t + \bst (\btheta_t - \ttheta_t^\top)\bb_t + \ttheta_t^\top (\bst \bb_t - \tb_t). \] Above, the third term is nonpositive, while the second term may be bounded by $2 \bst M_t(\bb_t)$, which can further be bounded by $8 M_t(\bar{a}_t)$ upon recalling that $\rho_t(\bb_t) \ge \frac12$ and the bound on $\rho_t M_t(b_t)$ in Lemma~\ref{lemma:rho_bounds}. This leaves the first term. It is tempting to bound this directly via $\btheta_t^\top \bb_t \le \|\btheta_t\| \|\bb_t\|$, but notice that the former can be as large as $B_t \sim \sqrt{d}$. Instead, we can use the related bound \[  (1-\bst)(\btheta^\top \bb_t) \le  (1-\bst) M_t(\bb_t) + (1-\bst)\theta_*^\top \bb_t.\] 
Now notice that $(1-\bst) \le 1,$  and $M_t(\bb_t) \le 4M_t(\bar{a}_t)$ controls the first term. Similarly, $\theta_*^\top \bb_t \le 1$ (both have norm bounded by $1$), so the second term is bounded by $1-\bst \le \frac{3M_t(\bb_t)}{\gsafe} \le 12 \frac{M_t(\bar{a}_t)}{\gsafe}$. Putting these together, we conclude that \[ (1-\bst)(\btheta_t^\top \bb_t) \le 4 M_t(\bar{a}_t) + \frac{12 M_t(\bar{a}_t)}{\gsafe},\] which in turn yields the bound \[ K(\btheta_t, \bPhi_t) - K(\ttheta_t, \tPhi_t) \le 12M_t(\bar{a}_t) + \frac{12M_t(\bar{a}_t)}{\gsafe} \le \frac{24 M_t(\bar{a}_t)}{\gsafe}. \]

Of course, switching the roles of $(\btheta_t, \bPhi_t)$ and $(\ttheta_t, \tPhi_t),$ we have an analogous bound on $K(\ttheta_t, \tPhi_t) - K(\btheta_t, \bPhi_t)$. Putting these together, we conclude that \[  |K(\btheta_t,\bPhi_t) - K(\ttheta_t, \tPhi_t)| \le  \frac{24(M_t(\bar{a}_t) + M_t(\tilde{a}_t))}{\gsafe}.\] Finally, notice that $\bar{a}_t$, $\tilde{a}_t$, and the actually selected action $a_t$ all have the same distribution given $\hist_{t-1}$. We can thus conclude that \[\mathbb{E}_{t-1}[ |K(\btheta_t, \bPhi_t) - K(\ttheta_t, \tPhi_t)| ] \le 48\mathbb{E}_{t-1} \left[ \frac{M_t(a_t)}{\gsafe} \right].\]

With this in hand, the issue returns to one of concentration. We know that $\sum M_t(a_t)$ is $\tO(\sqrt{d^3 T}),$ and each $M_t(a_t)$ is bounded as $O(d)$ and so $\sum M_t(a_t) - \mathbb{E}_{t-1}[M_t(a_t)]$ enjoys concentration at the scale $d \lil (T,\delta) = \tO(\sqrt{d^2 T}) = o(\sqrt{d^3 T})$. Thus, passing back to the the unconstrained sums, we end up with a bound of the form \[ \eff_T = \tO\left( \gsafe^{-1} \sqrt{d^3 T}\right) + \tO(d^5\gsafe^{-4}). \] The main loss in the main term above is that instead of a $\Delta(\asafe)/\gsafe$, we just have a $\gsafe^{-1}$ term in the bound. This can be lossy, e.g., when $\asafe$ is very close to $a_*$, but in the regime $\Delta(\asafe) = \Omega(1),$ it recovers essentially the same guarantees as Theorem~\ref{thm:scolts_regret}, albeit with a weaker additive term.  

\section{The Analysis of Soft Constraint Enforcement Methods.}

\subsection{The Analysis of \rcolts}\label{appx:rcolts_analysis}

Let us first show the optimism result for $\rcolts$

\begin{proof}[Proof of Lemma~\ref{lemma:rcolts_main_lemma}]
    Fix any $t$, and assume $\consistency$. For each $i \in [1:I_t],$ we know that $K(i,t) := K(\ttheta(i,t), \tPhi(i,t)) \ge \ttheta(i,t)^\top a_* \ge \theta_*^\top a_*$ whenever the event $\optimism$ occurs, and thus this inequality holds with chance at least $\pi$ in every round. Since the draws are all independent given $\hist_{t-1}$, the chance that $\max K(i,t) < \theta_*^\top a_*$ is at most $(1-\pi)^{I_t} \le \exp( - \log(1/\delta_t) r \cdot \pi ) \le \delta_t = \nicefrac{\delta}{t(t+1)}.$ Thus, if we assume that $\con(\delta) := \bigcap \consistency$ holds true, the chance that at any $t$, $K(i_t,t) < \theta_*^\top a_*$ is at most $\sum \delta_t = \delta$. By Lemma~\ref{lemma:online_linear_regression}, $\con(\delta)$ holds with chance at least $1-\delta,$ and we are done.
\end{proof}
Of course, the above proof, and thus the statement of this Lemma, holds verbatim if we replace $\optimism_t$ by $\Goptimism_t$ (Definition~\ref{defi:optimism_definition_appendix}). 

With the optimism result of Lemma~\ref{lemma:rcolts_main_lemma}, the argument underlying Theorem~\ref{thm:rcolts_bounds} is extremely standard.

\begin{proof}[Proof of Theorem~\ref{thm:rcolts_bounds}]
    Assume consistency, and that at every $t$, $\ttheta_t^\top a_t \ge \theta_*^\top a_*$. Since we sample at most $2 + r \log(1/\delta_t)$ programs in round $t$, we further know that with probability at least $1-\delta,$ \[ \forall t, \max_{i} \left( \max \|\eta(i,t)\|, \max_j \|H_t^j(i,t)\| )\right)\| \le \beta_t := B(\delta_t/(2+r\log(1/\delta_t)).\] Assume that this too occurs, and define $\tilde{M}_t(a) = \omega_t(\delta) (1 + \beta_t) \|a\|_{V_t^{-1}}$. Then, using consistency, \[\theta_*^\top a_t \ge \ttheta_t^\top a_t - \tilde{M}_t(a_t), \Phi_* a_t \le \tPhi_t a_t + \tilde{M}_t(a_t) \onem \le \alpha + \tilde{M}_t(a_t) \onem.\] So, the safety risk is bounded as \[ \saf_T \le \sum_t \tilde{M}_t(a_t) \le \omega_T(\delta) (1 + \beta_T) \sum_{t \le T} \acnorm.\] Further, \[ \theta_*^\top a_* - \theta_*^\top a_t \le \theta_*^\top a_* - \ttheta_t^\top a_t + \tilde{M}_t(a_t),\] which implies that \[ \eff_T \le \omega_T(\delta) (1 + \beta_T \sum_{t \le T} \acnorm \] as well. Now Lemma~\ref{lemma:elliptical_potential_lemma} controls $\omega_T \sum_{t \le T} \acnorm$ to $\tO(\sqrt{d^2 T})$, and for our selected noise, the coupled design driven by $\mathrm{Unif}(\sqrt{3d} \mathbb{S}^d),$ we have $B(\cdot) = \sqrt{3d}$ independently of $t,r,\delta,$ and thus $\beta_T = \sqrt{3d}$.     
    The events needed to show the above were the consistency, the concentration of the sampled noise to $\beta_t$ at each time $t$, and the optimism event of Lemma~\ref{lemma:rcolts_main_lemma}. Again, the second happens with certainty for us, and so the above bounds hold at all $T$ with chance at least $1-2\delta$. Consequently, the result was stated for $\rcolts(\mu,r,\delta/2).$
\end{proof}

\subsection{The Exploratory-COLTS Method}\label{appx:ecolts}

As discussed in \S\ref{sec:rcolts}, the Exploratory COLTS, or $\ecolts$ method, augments $\colts$ with a low-rate of flat exploration, and exploits the resulting (eventual) perturbed feasibility of actions with nontrivial safety margin to bootstrap the scaling-based analysis of $\scolts$ to a soft-enforcement result without resampling.

The main distinction lies, of course, in the fact that in the soft enforcement setting, we do not have access to a given safe action $\asafe.$ To motivate the method, let us consider how $\scolts$ uses the knowledge of $\asafe$. This occurs in three ways: to ensure the existence of $a(\eta_t, H_t, t),$ to compute the action $a_t$ from this, and to enable the look-back analysis of Lemma~\ref{lemma:look_back_bound}. The second use is easy to address: we will simply play $a_t = a(\eta_t,H_t,t)$ if it exists. The key observation is that rather than explicit knowledge of any one particular safe action, as long as \emph{some action} $a$ exists such that $M_t(a) \le \Gamma(a)/3,$ the entirely of the first and third uses can be recovered, and so the machinery of \S\ref{sec:scolts} can be enabled. 

\begin{wrapfigure}[14]{r}{.5\linewidth}
\vspace{-2\baselineskip}
\begin{minipage}{\linewidth}
\begin{algorithm}[H]
   \caption{Exploratory-$\colts$ ($\ecolts(\mu, \delta)$)}
   \label{alg:ecolts}
\begin{algorithmic}[1]
   \State \textbf{Input}: $\mu, \delta,$ exploration policy.
   \State \textbf{Initialise}: $u_0 \gets 0, B_t \gets 1+B(\delta_t)$
   \For{$t = 1, 2, \dots$}
   \State Draw $(\eta_t, H_t) \sim \mu$. 
   \If{$u_{t-1} \le B_t \omega_t(\delta)\sqrt{dt}$ OR $a(\eta_t,H_t,t)$ does not exist}
   \State Pick $a_t$ via exploration policy.
   \State $u_t \gets u_{t-1} + 1$.
   \Else
   \State $a_t \gets a(\eta_t, H_t, t), u_t \gets u_{t-1}.$
   \EndIf
   \State Play $a_t,$ observe $R_t,S_t,$ update $\hist_t$. 
   \EndFor
\end{algorithmic} 
\end{algorithm}
\end{minipage}
\end{wrapfigure}
\textbf{Forced Exploration.} We enable the \emph{eventual} existence of such actions by introducing a small rate of \emph{forced exploration} in our method $\ecolts$. Concretely, we demand a `$\kappa$-good' exploration policy over $\mathcal{A}$, i.e., one such that after $N$ exploratory actions $e_1, \cdots, e_N,$ we are assured that $\sum e_i e_i^\top \succeq \kappa \lfloor \nicefrac Nd\rfloor I_d,$ where $\kappa > 0$ is a constant. This can, e.g., be done by playing the elements of a barycentric spanner of $\mathcal{A}$ in round-robin \cite{awerbuch2008online, dani2008stochastic}. The resulting $\kappa$ is a geometric property of $\mathcal{A},$ and we note that $\kappa$ only enters the analysis, not the algorithm. 

Let us call a time step $t$ where the exploratory policy is executed an `E-step'. In $\ecolts$, we ensure tha tat any $t$, at least $B_t \omega_t {\sqrt{dt}}$ such E-steps have been performed, and if not, we force an E-step. Note that we expect that the majority of the learning process occurs at steps other than E-steps, since this is where the informative action $a(\eta_t, H_t, t)$ is played. Consequently, we will call such steps `L-steps'.

By our requirement of enough E-steps, at any L-step $t$, the sample second moment matrix $V_t$ satsifies $V_t \succeq \kappa B_t \omega_t {\sqrt{t/d}} I_d,$ and so, \[ \forall a, M_t(a) \le \psi(t) := \left(\frac{d B_t^2 \omega_t^2}{\kappa^2 t}\right)^{1/4} \cdot \|a\|.\] This means that at such $t$, any $a$ with $\Gamma(a) > 2\psi(t)/3$ satisfies $M_t(a) \le \Gamma(a)/3,$ and so $a(\eta_t, H_t, t)$ exists, and we may use the analysis of \S\ref{sec:scolts} for such $a$.

 \textbf{Regret Bound.} The above insight is the main driver of the result of Theorem~\ref{thm:ecolts_new_main}, which we show in \S\ref{appx:ecolts_analysis} to follow. Recall that this states that under the $\ecolts$ strategy, executed with a $\mu$ constructed through the coupled noise design with base measure $\mathrm{Unif}(\sqrt{3d} \mathbb{S}^d)$, the risk and regret satisfy, w.h.p., the bounds \begin{align*} \saf_T &= \tO(\sqrt{d^3 T}) + \min_{a}\tO\Bigl(  \frac{d^3\|a\|^4}{\kappa^2 \Gamma(a)^4}\Bigr),\textit{ and } \\ \eff_T &=  \min_{a : \Gamma(a) > 0} \left\{ \mathcal{R}(a) \tO(\sqrt{d^3 T}) + \tO\Bigl(\frac{d^3 \|a\|^4}{\kappa^2 \Gamma(a)^4}\Bigr) \right\}, \end{align*} where $\kappa$ is precisely the `goodness-factor' of the exploratory policy. Let us briefly discuss this result.

\textbf{Risk bound.} Unlike $\scolts$, $\ecolts$ suffers nontrivial risk, which is unavoidable due to the lack of knowledge of $\asafe$ \cite{pacchiano2021stochastic}. The ${\tO(\sqrt{d^3 T})}$ risk above above is comparable to the ${\tO{\sqrt{d^2 T})}}$ risk of the prior soft enforcement method \textsc{doss} \cite{gangrade2024safe}, with a ${\sqrt{d}}$ loss again attributable to efficiency. Note that compared to $\rcolts,$ the risk bound is essentially the same, but now incurs an extra additive term scaling, essentially, with $(\max_a \Gamma(a))^{-4}$. Thus, a nontrivial risk bound is only shown if this maximum is strictly positive, i.e., under Slater's condition. Nevertheless, the term is additive, and scales with $T$ only logarithmically (through a dependence on $\omega_t(\delta)$, and so in typical scenarios is not expected to dominate as $T$ diverges, although the fourth-power dependence on this quantity would increase the `burn-in' time of this result.

\textbf{Regret bound.} As discussed in \S\ref{sec:rcolts}, the main term of the regret bound above improves over that of $\scolts$, since it \emph{minimises} over $\mathcal{R}(a)$, rather than working with the arbitrary $\mathcal{R}(\asafe)$. Note that finding the minimiser of $\mathcal{R}$ may be challenging, but $\ecolts$ nevertheless adapts to this. However, the additive lower-order term is larger than in $\scolts$ due to the `flat' exploration of $\ecolts$, and its practical effect is unclear. In simple simulations, we do observe a significant regret improvement (\S\ref{appx:simulations}). We note that the $\kappa$-good exploration condition only affects the lower order term in $\eff_T$, although again the fourth order dependence on $\Gamma(a)$ is nontrivial. Of course, relative to $\ecolts$, the result suffers from an instance-dependence, and again, unless Slater's condition is satisfied, it is ineffective. 

\textbf{Practical Role of Forced Exploration.} $\ecolts$ uses forced exploration to ensure that $V_t$ is large, which leads to both feasibility of the perturbed program, and the scaling-based analysis. In practice, however, one expects that low-regret algorithms satisfy ${\max_{a} \|a\|_{V_t^{-1}} \lesssim t^{-1/4}\|a\|}$ directly, the idea being that actions with larger ${V_t^{-1}}$-norm represent underexplored directions that would naturally be selected (recent work has made strides towards actually proving such a result, although it does not quite get there \cite{banerjee2023exploration}). Thus we believe that this forced exploration can practically be omitted except when the perturbed program is infeasible. Indeed, in simulations, we find that this strategy already has good regret (\S\ref{appx:simulations}).

\subsubsection{The Analysis of \ecolts}\label{appx:ecolts_analysis}

We will essentially reuse our analysis of $\scolts$, with slight variations. 
\begin{proof}[Proof of Theorem~\ref{thm:ecolts_new_main}]

    We will first discuss the bound on the regret. Throughout, we assume consistency, and the noise concentration event of Lemma~\ref{lemma:basic_noise_concentration_and_cauchy-schwarz}. We will further just write $\omega_t$ instead of $\omega_t(\delta)$. Recall the terminology that every $t$ in which we pick an action according to the exploratory policy is called an `E-step', and every other step an `L-step'. Here E and L stand for exploration and learning respectively, the idea being that the former constitute the basic exploration required to enable feasibility under perturbations, and so the main learning process occurs in L-steps.
    
    Note that the number of E-steps up to time $t$ is explicitly delineated to be at most $\lceil B_t \omega_t \sqrt{dt}\rceil$. Using the $\kappa$-good assumption, then, we find that at every L-step, \[ V_t \succeq \kappa B_t\omega_t \sqrt{t/d} I \iff  (\kappa B_t \omega_t \sqrt{t/d})^{-1} I \succeq V_t^{-1}.\]  

    Now, fix any action $a_0$ with $\Gamma(a_0) > 0.$ Then notice that at any L-step, \[ \|a_0\|_{V_t^{-1}}^2 \le \frac{\sqrt{d}\|a_0\|^2}{\kappa B_t \omega_t \sqrt{t}} \implies M_t(a_0)^2 \le \frac{B_t\omega_t \|a_0\|^2}{\kappa} \cdot \sqrt{d/t}. \] Thus, for all  \[ t \ge t_0(a_0) := \inf \left\{ t :  \frac{3^4 d \|a_0\|^4 B_t^2 \omega_t^2}{\kappa^2 \Gamma(a_0)^4} \le t\right\}\] that are L-steps, we know that as long as the noises $\eta_t, H_t$ satisfy the bound of Lemma~\ref{lemma:basic_noise_concentration_and_cauchy-schwarz}, $\tPhi_t a_0 \le \alpha - 2\Gamma(a_0)/3 \onem.$ Note that since $\omega_t^2 \le d \log(t) + \log(m/\delta),$ and since under our choice of coupled noise, $B_t = \sqrt{3d}$ for all $t$, we can conclude that \begin{align*} t_0(a_0) &\le  \frac{Cd^3 \|a_0\|^4}{\kappa^2 \Gamma(a_0)^4} \log\frac{Cd^3 \|a_0\|^4}{\kappa^2 \Gamma(a_0)^4} + \frac{C d^2 \|a_0\|^4 \log(m/\delta)}{\kappa^2 \Gamma(a_0)^4} \log\frac{Cd^2 \|a_0\|^4\log(m/\delta)}{\kappa^2 \Gamma(a_0)^4} \\ &= \tO\left(\frac{d^3 \|a_0\|^4}{\kappa^2 \Gamma(a_0)^4} \right),  \end{align*} where $C$ is some large enough constant ($C = 4 \cdot 81$ suffices). This implies that at all $t > t_0(a_0)$ at which the number of E-steps, $u_t,$ is large enough, the perturbed program is feasible, and $a_t$ exists. Thus, after this time, no extraneous E-steps are accrued due to infeasibility of the perturbed program.

    \newcommand{\bat}{\bar{a}_{\tau \to t}}
    At this point we apply the proof of Lemma~\ref{lemma:look_back_bound}, with $\rho_t = 1$. Let \[\tau = \tau(t) = \sup\{ s \le t : \Delta(a_s) \le M_t(a_s), M_t(a_0) \le \Gamma(a_0)/3\}. \] Now, $a_\tau$ need not be feasible for $\tPhi_t$, but we know that $\tPhi_\tau a_\tau \le \alpha \implies \tPhi_t a_\tau \le \alpha + M_t(a_\tau) + M_\tau(a_\tau)$. So for \[ \sstau := \frac{\Gamma(a_0)}{\Gamma(a_0) + 3(M_t(a_\tau) + M_\tau(a_\tau))},\] we know that \[ \tPhi_t( \sstau a_{\tau} + (1-\sstau)a_0 ) \le \alpha.\] Let $\bat := \sstau a_{\tau} + (1-\sstau)a_0$. Then we can write \begin{align*}
        \Delta(a_t) &= \Delta(\bat) + \theta_*^\top (\bat - a_t  ) \\
        &\le \Delta(\bat) + \ttheta_t^\top(\bat - a_t) + M_t(a_t) + M_t(\bat)\\
        &\le \sstau \Delta(a_\tau) + (1-\sstau)\Delta(a_0) + M_t(a_t) + \sstau M_t(a_\tau) + (1-\sstau) M_t(a_0)\\
        &\le (1-\sstau)\Delta(a_0) + M_t(a_t) + \sstau (M_t(a_\tau) + M_\tau(a_\tau)) + (1-\sstau) M_t(a_0),
    \end{align*}
    where in the end we used the fact that $\Delta(a_\tau) \le M_\tau(a_\tau)$. Now, \[ 1-\sstau \le \frac{3 (M_t(a_\tau) + M_\tau(a_\tau)) }{\Gamma_0},\] and of course $M_t(a_0) \le \Gamma_0$. We end up with a bound of the form \[ \Delta(a_t) \le C\left( 1 + \frac{\Delta(a_0)}{\Gamma(a_0)} \right) ( M_t(a_t) + M_\tau(a_\tau) + M_t(a_{\tau})),  \] which is essentially the same as that of Lemma~\ref{lemma:look_back_bound}. Given this, we can immediately invoke Lemma~\ref{lemma:width_control} (appropritaely modifying by $\asafe \to a_0$ and $\Gamma_0 \to \Gamma(a_0)$). We end up with the control that \[ \sum_{t \le T, M_t(a_0) \le \Gamma_0/3} \Delta(a_t) = \tO\left( \left( 1 + \frac{\Delta(a_0)}{\Gamma(a_0)} \right) \cdot \frac{B_T \omega_T}{\chi} \cdot \sum_{t \le T} \acnorm\right).\] For our choice of noise (being the coupled design executed with $\nu = \mathrm{Unif}(\sqrt{3d} \mathbb{S}^d),$ we have $\chi = \Omega(1), B = O(\sqrt{d}),$ and so this can be bounded as \[ \sum_{t \le T, M_t(a_0) \le \Gamma_0/3} \Delta(a_t) = \tO\left( \mathcal{R}(a_0) d^3 T\right).\]

    The above holds true for all $t > t_0(a_0)$ that were not E-steps. Before $t_0(a_0),$ we may bound the per-round regret by $2$. Finally, we are left with the E-steps after the time $t_0(a_0)$. Since, as argued above, no extraneous E-steps due to the infeasiblity of perturbed programs occur, we can then, for $T \ge t_0(a_0)$, simply bound the total number of E-steps by $1 + B_T\omega_T \sqrt{dT},$ and accrue roudwise regret of at most $2$ in these steps. With our chosen noise, $B_t = O(\sqrt{d})$, this cost is $\tO(\sqrt{d^3 T}).$ Summing these three contributions, and invoking the bound on $t_0(a_0)$ finishes the argument upon recognizing that $a_0$ is arbitrary, and so we may minimise over it.

    Turning now to the risk, first observe that for any $t > T_0 := \min_a t_0(a),$ there exists at least one action such that $M_t(a) \le \Gamma(a)/3,$ and so the perturbed program is always feasible, i.e., $a(\eta_t, H_t, t)$ exists. Now, consider subsequent times. Observe that in L-steps, since $\tPhi_t a_t \le \alpha,$ we know by Lemma~\ref{lemma:basic_noise_concentration_and_cauchy-schwarz} that \( \Phi_* a_t \le \alpha + M_t(a_t)\onem, \) assuming consistency and the concentration of $\max(\|\eta_t\|, \max_i \|H_t^i\|)$. Thus, in L-steps, the risk accrued at any time is at most $M_t(a_t)$. On the other hand, in E-steps, the risk accumulated can be bounded by just $1$ (using the boundedness of $\Phi_*$ and $\mathcal{A}$, and so we only need to work out the total number of these. But after time $T_0$ such an E-step only occurs to make sure that the net number of E-steps is at least $B_t\omega_t\sqrt{dt},$ and so the total number of such steps is at most $B_T \omega_T \sqrt{dT}$. 

    Putting these together, we conclude that the net risk accrued is bounded as   \begin{align*} \saf_T &\le T_0 +  \sum_{\substack{ T_0 \le  t \le T\\ t \textrm{ is an E-step}}} 1 + \sum_{\substack{t \le T,\\ t \textrm{ is an L-step}}} M_t(a_t) = B_T \cdot\tO(\sqrt{d^2 T} + \min_a \tO\left( \frac{dB_t^2 \omega_t^2 \|a\|^4}{\kappa^2\Gamma(a)^4}\right). \end{align*}
    Invoking Lemma~\ref{lemma:elliptical_potential_lemma}, as well as the fact that $B_T = B_t = \sqrt{3d}$ for our noise design, the claim follows.

    Finally, let us account for the probabilistic conditions needed: we need the concentration event of Lemma~\ref{lemma:width_control} to hold for the regret bound, and the consistency and noise-boundedness events for both. Of course, the second is not actually needed, since our noise is bounded always. Together, then,  these occur with chance at least $1-2\delta$ under our noise design. Of course, then, passing to $\ecolts(\mu,\delta/2)$ yields the claimed result.
\end{proof}

\section{Simulation Study}\label{appx:simulations}

We conduct simulation studies to investigate the behaviour of $\ecolts/\rcolts,$ and of $\scolts.$ We first study the soft and hard constraint enforcement problems with our coupled noise design. After this, we investigate the behaviour of $\colts$ methods using independent (or decoupled) noise in \S\ref{appx:simulations_decoupled}. All experiments were executed on a consumer-grade laptop computer running a Ryzen-5 chip, in the MATLAB environment, and the total time of all experiments ran to about 8 hours.

\subsection{Soft Constraint Enforcement}

We begin with studying the behaviour of the soft constraint enforcement strategies $\ecolts$ and $\rcolts$. Throughout, we treat $\ecolts$ as $\rcolts(\mu,0,\delta)$, with no exploration.

\textbf{Setting.} We set $\Phi_*$ to be a certain $9\times 9$ directed adjacency matrix, $A$, obtained from \url{https://sparse.tamu.edu/vanHeukelum/cage4}, which is a $\approx 60\%$ populated matrix with $d = m = 9.$ The rows of $\Phi_*$ were normalised to have norm $1$. We set $\theta_* = \mathbf{1}_d/\sqrt{d}, \mathcal{A} = [0,1/\sqrt{d}]^d$, and enforce the unknown constraints $\Phi_* a \le 0.8 \cdot \nicefrac{\mathbf{1}}{\sqrt{d}}$. We note that the action $0$ is always safe, no matter the $\tPhi_t$. This choice is intentional, and lets us avoid the inconvenient fixed exploration in $\ecolts$ and $\scolts$. Throughout, we set $\delta = 0.1$. 

As stated above, for the bulk of this section, we will implement $\ecolts$ without forced exploration. Indeed, this is not required since $0$ is always feasible, as discussed above. This can equivalently be interpreted as $\rcolts$ with the resampling parameter $r = 0$. 

\textbf{Effect of Noise Rate.} As previously noted, in linear $\ts$, small perturbation noise---of the scale $1$ rather than $\Theta(\sqrt{d})$---retains sufficient rates of global optimism and unsaturation to enable good regret behaviour. Note that such a small noise directly reduces $B_T,$ and thus we would expect it to improve our regret behaviour by a factor of about $\sqrt{d}$. In order to exploit this, we begin by conducting pilot experiments with our coupled noise design to determine a reasonable noise scale for us to use.

Concretely, we drive our coupled noise design with the laws $\nu_\gamma = \mathrm{Unif}(\gamma \cdot \mathbb{S}^d)$, and run $\ecolts$ without exploration for $10^3$ steps $100$ times. In each run, we simply record whether (i) global optimism; (ii) local optimism; and (iii) unsaturation held, and estimate their rates simply as the fraction of time over the run that this property was true. We construct these rate estimates for $\gamma \in {[\sqrt{3d}^{-3} ,\sqrt{3d}]},$ specifically evaluating the same for $41$ values of $\gamma$ chosen over an exponential grid (i.e., so that $\log(\gamma)$ has a constant step). Figure~\ref{fig:rcolts_pilot_rates_with_gamma} shows the resulting estimates.

\begin{wrapfigure}[18]{r}{.45\linewidth}
\vspace{-2.2\baselineskip}
    \centering
    \includegraphics[width=.9\linewidth]{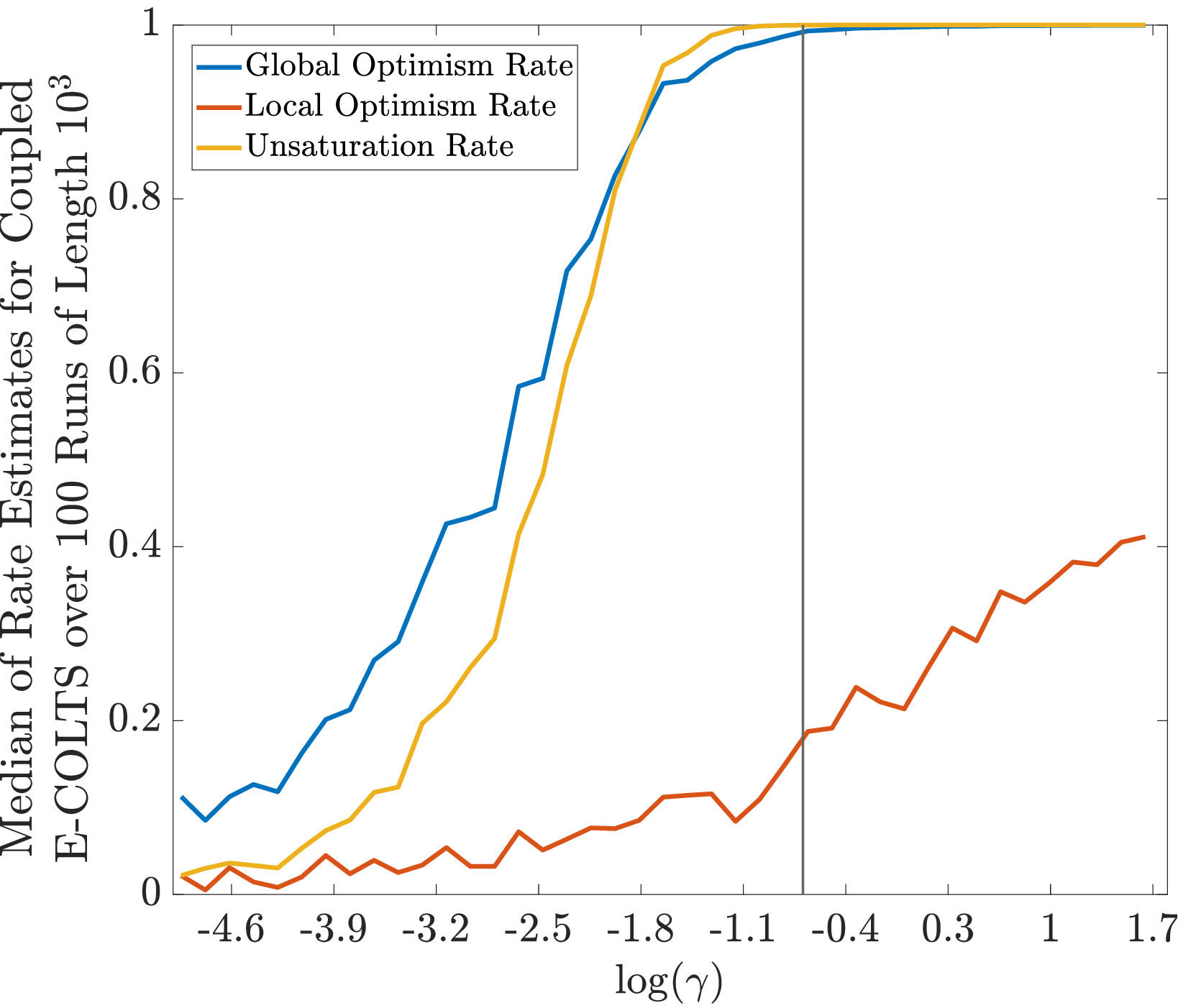}\vspace{-\baselineskip}
    \caption{\footnotesize Behaviour of Global and Local Optimism Rates, and Unsaturation Rate. The black vertical line lies at $\gamma = 0.5,$ the value selected for subsequent experimentation. The largest studied value is at $\sqrt{3d},$ which has logarithm about $1.65$ Observe that the global optimism and unsaturation rates are significant, and in particular $\approx 1$ for $\gamma = 0.5,$ far below $\sqrt{3d} \approx 5.2$.}%
    \label{fig:rcolts_pilot_rates_with_gamma}
\end{wrapfigure}
The main observation is that global optimism and unsaturation rates are already $\approx 1$ for $\log(\gamma) \approx -1$, suggesting good performance with this noise. Note that while such performance with small noise has been previously observed for linear $\ts$ without unknown constraints, we are unaware of an explicit observation of these rates as above. Of course, proving these properties at such small $\gamma$ is an open question, and we also note that our estimates above are not quite correct, since they integrate the events across time, while their rates could vary with $t$. In any case, the main upshot for this is that in our subsequent experiments, \emph{we work with $\gamma = 0.5$ instead of $\sqrt{3d} \approx 5.2$}. 

\textbf{The Behaviour of $\ecolts$ and $\rcolts$.} We now study $\rcolts$ and $\ecolts$ over the long horizon $T = 5\cdot 10^4$. We execute $\rcolts$ with zero resamplings (i.e., $\ecolts$ with no exploration), and then one and finally two resamplings in each round, all driven by the coupled perturbation noise with $\nu_{0.5}$.

\emph{On \textsc{doss}.} We note that \textsc{doss} is not implemented. $\ecolts$ runs in $\sim 10^{-3}$s per round on our machine. (Relaxed)-\textsc{doss} is totally impractical: $(2d)^{m+1} > 10^{12},$ and so it needs $>10^9$s, i.e., years, per round!

\emph{Observations.} Figure~\ref{fig:rcolts_regret_longrun} shows the observed regret and risk traces over $100$ runs. The observed regret behaviour is very strong: even without resampling, the terminal median regret of $\sim 600$ is closer to $\sqrt{T\log T} \approx 750$ than to $\sqrt{d^2 T\log(T)} \approx 6600.$ The risk behaviour is more significant, but still half this scale. The observation of $\eff_T$ suggests that a stronger regret bound may hold for $\ecolts$ and $\rcolts$, which is in line with the stronger instance-specific regret behaviour of the optimism-based method \textsc{doss} \cite{gangrade2024safe}. Proving this is an interesting open problem.

These simulations thus bear out the strong performance of $\ecolts/\rcolts$ with $r = 0$. Further, as we add resampling, risk degrades mildly, but the regret improves significantly, although the returns diminish with more resampling. This suggests that practically, a few resamplings in $\rcolts$ are enough to extract most of the advantage. Interestingly, resampling has a palpable effect even though the optimism rate is nearly one!

\begin{figure}[t]
    \centering
    \includegraphics[width=0.4\linewidth]{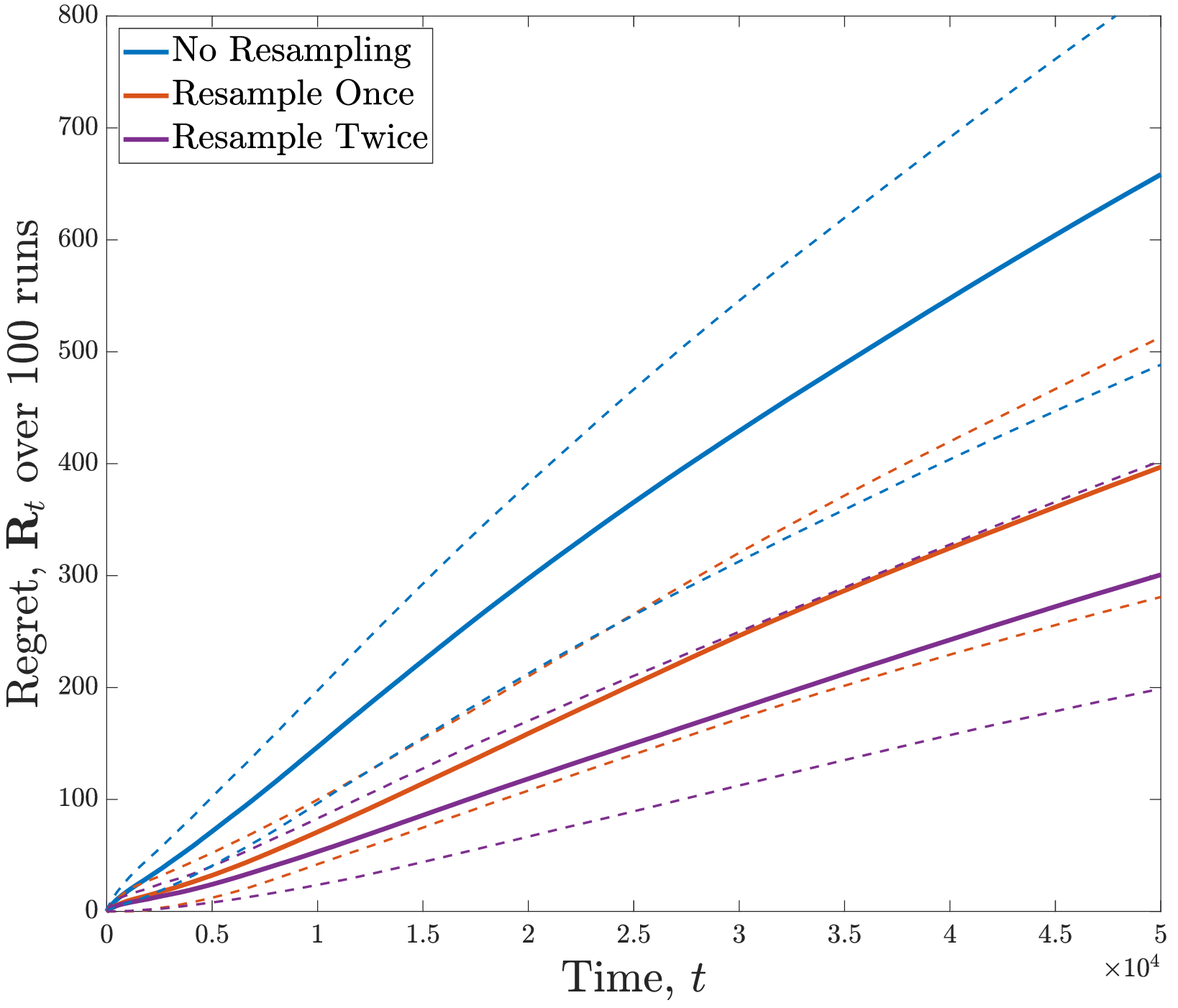}~\hspace{0.05\linewidth}~\includegraphics[width = 0.4\linewidth]{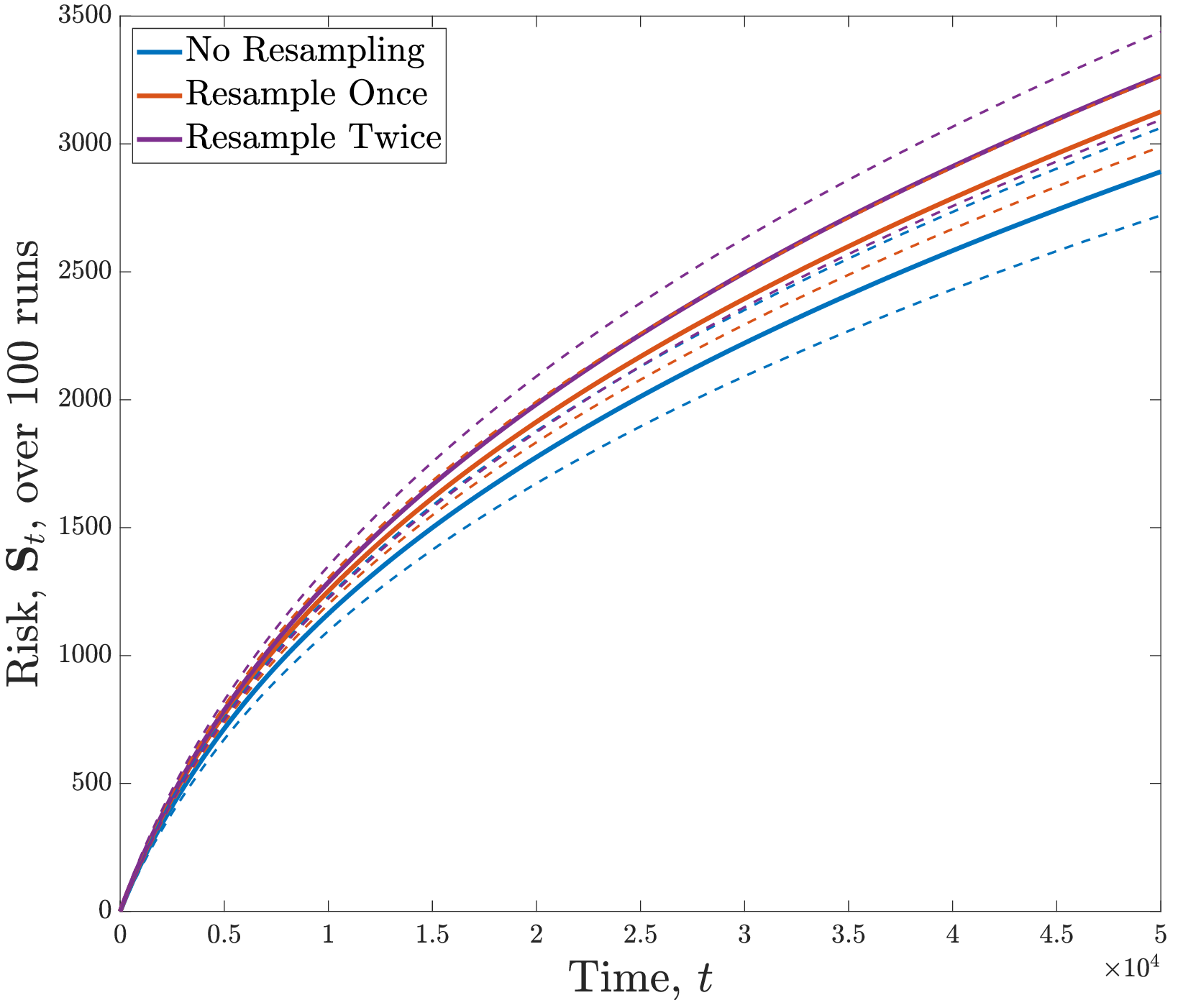}
    \includegraphics[width=0.4\linewidth]{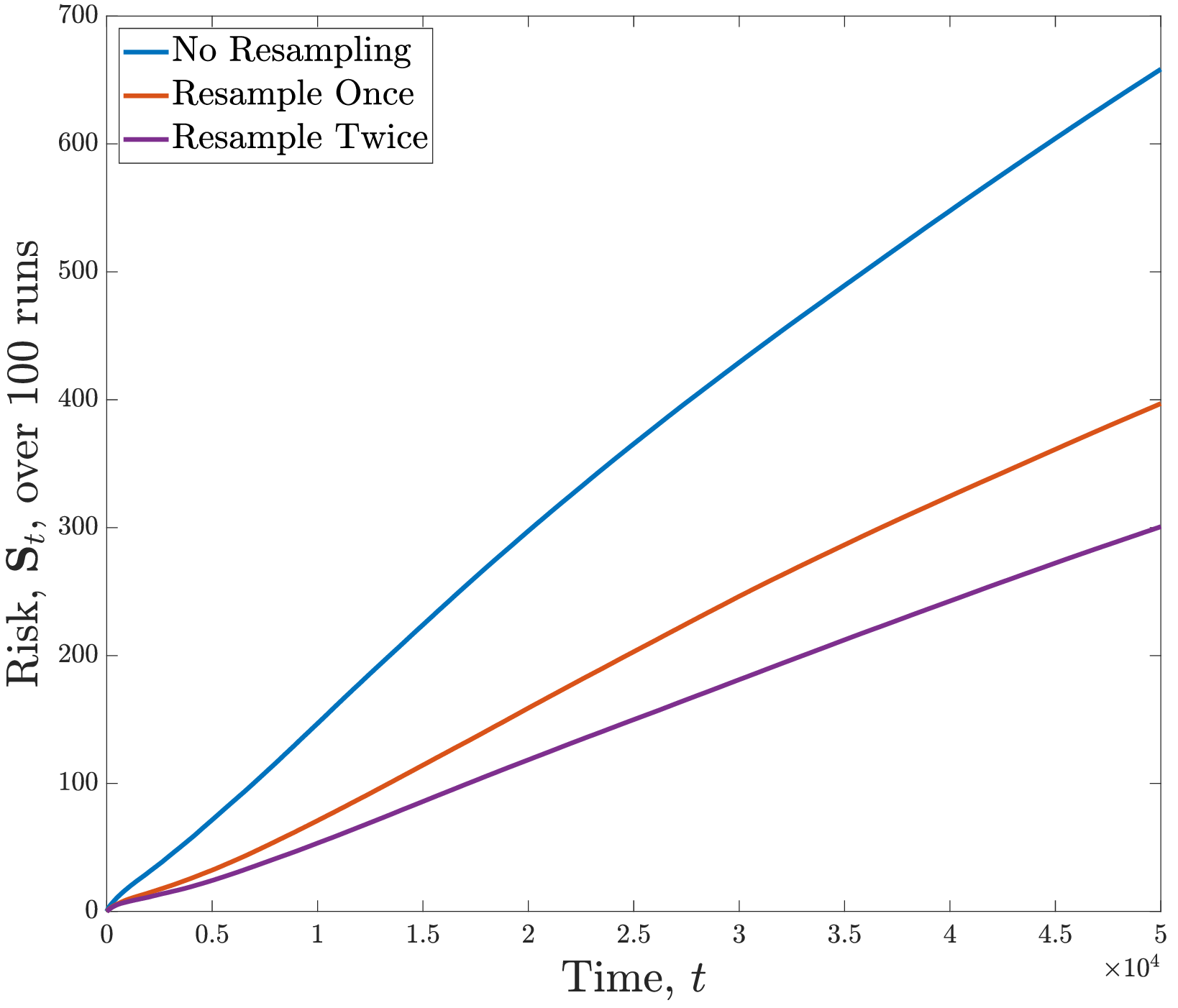}~\hspace{0.05\linewidth}~\includegraphics[width = 0.4\linewidth]{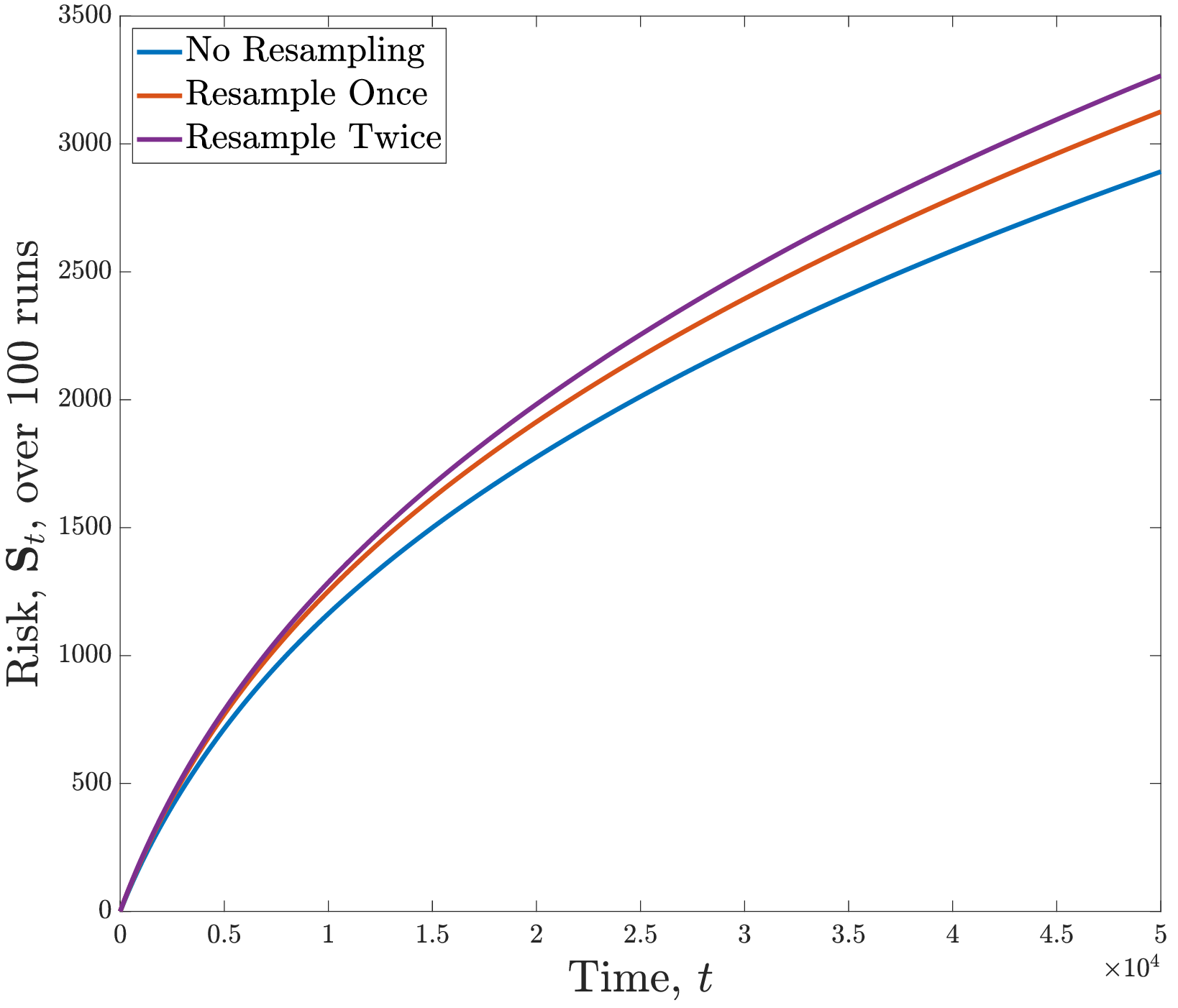}\\\vspace{-.5\baselineskip}
    \caption{\footnotesize Regret (left) and Risk (right) of $\rcolts$ with zero, one, and two resamplings per round. Top includes one-sigma error bars, and for clarity, the bottom figures omit them. Note that the regret behaviour is an order of magnitude smaller than the scale ${\sqrt{d^2 T}} \approx 6600,$ while the risk behaviour is about a factor of half of this. We further observe that resampling improves regret signficantly, while only hurting the risks slightly, although this effect appears to decelerate as resampling is increased.}
    \label{fig:rcolts_regret_longrun}
\end{figure}

\subsection{Hard Constraint Enforcement}

Next, we investigate the behaviour of $\scolts$ over the same instance, supplied with the data $\asafe = 0$. The natural point of comparison to $\scolts$ is the \textsc{safe-LTS} algorithm \cite{moradipari2021safe}, which operates in $O(\socp \log t)$ computation per round.\footnote{We do not implement other prior methods for SLBs, mainly because \textsc{safe-LTS} has previously been seen to have similar behaviour, and be about $2d = 18$ times faster than these methods. Of course, we also did not implement \textsc{doss} as a comparison for the soft constriant enforcement methods since it is impractical to execute for $d = m = 9$.}

\begin{figure}[t]
    \centering
    \includegraphics[width=0.45\linewidth]{PLOTS/hard_enforcement_longrun.eps}
    \caption{\footnotesize Regret Behaviour of $\scolts$ and \textsc{safe-LTS} on the same instance as previous figures (one-sigma error curves). We note that $\scolts$ offers a mild improvement in regret over \textsc{safe-LTS}. However, this comes with a $5\times$ reduction in net computational time per round, which is the main advantage of $\scolts$.}\vspace{-.5\baselineskip}
    \label{fig:scolts_regret}
\end{figure}

Concretely, we again drive this method with $\nu_{0.5}$ as before. For $\textsc{safe-LTS},$ we sample a perturbed objective vector with the same noise scale, and otherwise optimise over the second order conic constraints as detailed in \S\ref{sec:scolts_bounds}. In both cases, we used the library methods \texttt{linprog} and \texttt{coneprog} provided by MATLAB to implement these methods.\footnote{Of course, $\rcolts/\ecolts$ were also implemented using \texttt{linprog}.} Note that these methods are specifically tailored to linear and conic programming respectively. As before, we repeat runs of length $T = 5 \cdot 10^4$ for a total of $100$ runs.

\emph{Strong Safety Behaviour.} We note that in all of our runs, we did not observe any constraint violation from either $\scolts$ or $\textsc{safe-LTS}$, despite the fact that we executed these methods with $\delta = 0.1$. This suggests both that in practice, the parameter $\delta$ can be relaxed (which would yield mild improvements in regret), and in any case verifies the strong safety properties of these methods.

\emph{Comparison of Regret.} We show the regret traces over the $100$ runs in Figure~\ref{fig:scolts_regret}. We observe that $\scolts$ has a slightly improved regret performance relative to \textsc{safe-LTS}, which may be attributed to the selection of stronger exploratory directions through solving the perturbed program.

\emph{Computational Speedup.} In wall-clock terms, each iteration of \textsc{safe-LTS} is about $5.2 \times$ slower than that of $\scolts$ on this $9$ dimensional instance with $9$ unknown constraints (over $5 \cdot 10^6$ total iterations, $\scolts$ took about $0.22$ms per iteration, while \textsc{safe-lts} took about $1.16$ms), a significant computational advantage even in this modest parameter setup. %

\emph{High Level Conclusions.} The main takeaway from this set of experiments is that $\scolts$ offers tangible benefits in computational time relative to \textsc{safe-LTS} (and a fortiori, to other pessimism-optimism based frequentist hard constraint enforcement methods), while even obtaining a slight improvement in the regret behaviour. This demonstrates the utility of $\scolts$ over these prior methodologies, and suggests that it is the natural approach that should be used in practice.

\subsubsection{Investingating Behaviour with Increasing \texorpdfstring{$m$}{Number of Constraints}}\label{appx:scolts_varying_m}

\begin{figure}[tb]
    \centering
        \includegraphics[width = 0.4\linewidth]{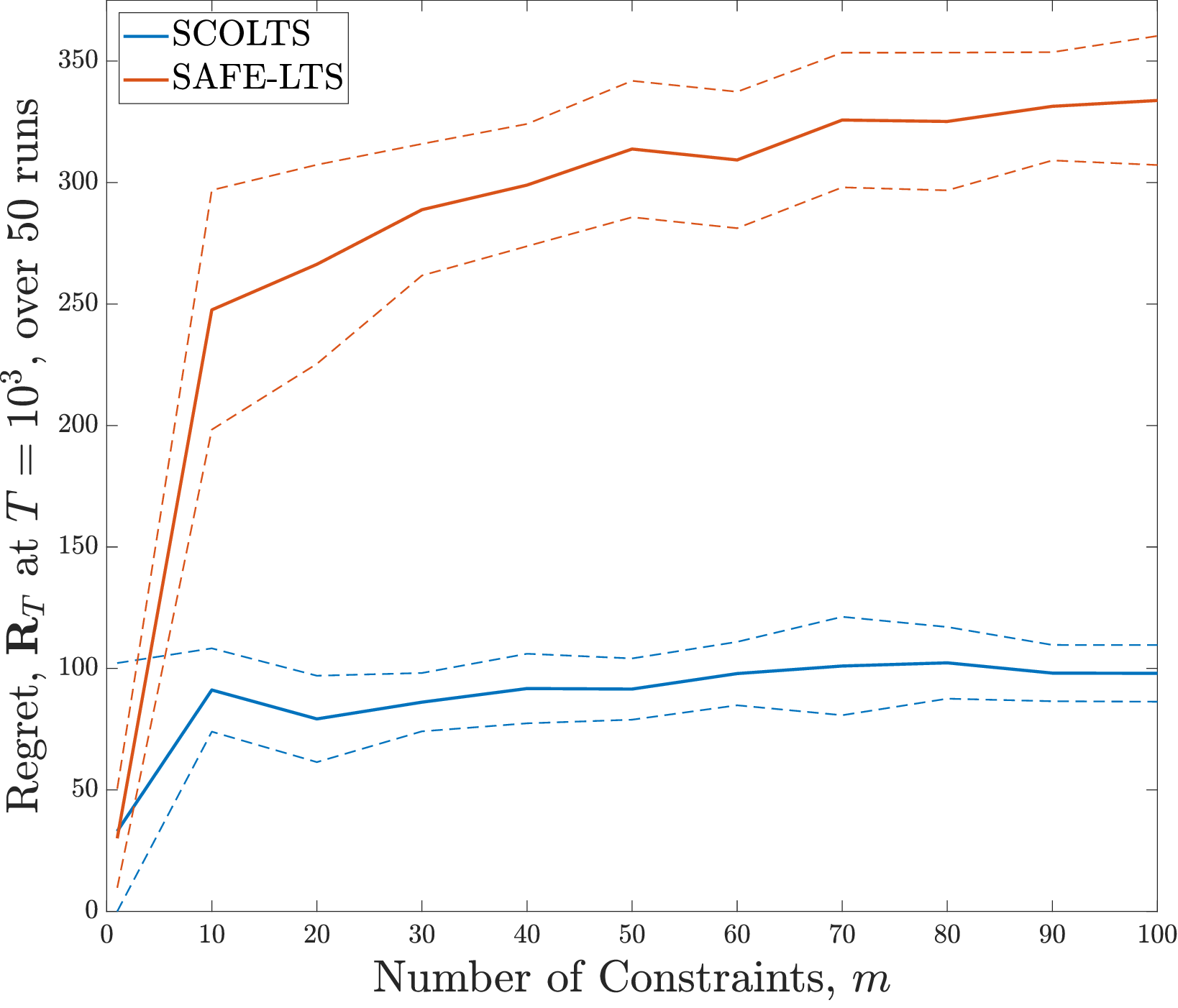}~\hspace{0.05\linewidth}\includegraphics[width = 0.4\linewidth]{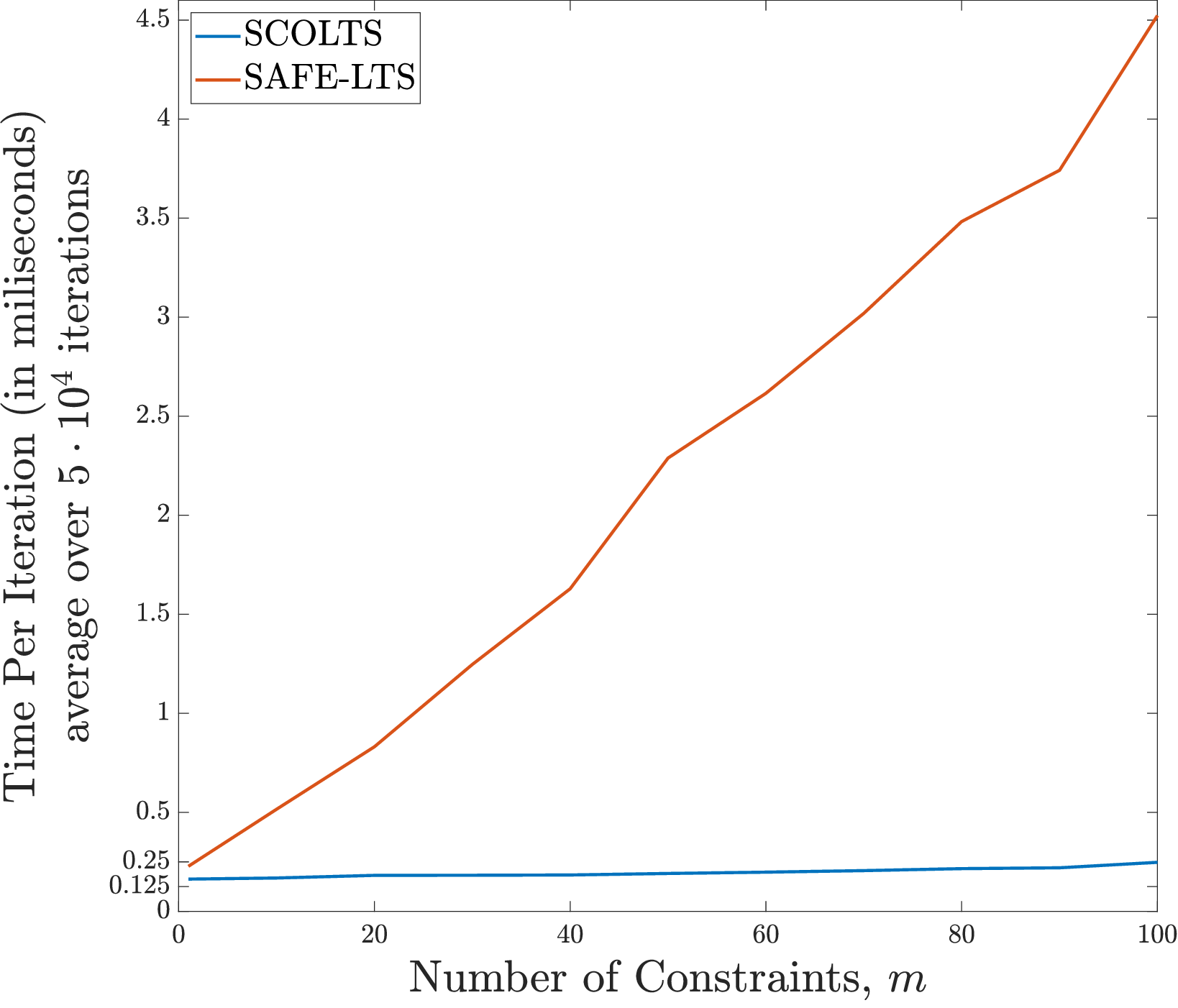}\vspace{-.5\baselineskip}
        \caption{\footnotesize Comparisons of the regret (left, one-sigma error curves) and computational costs (right) of $\scolts$ and \textsc{safe-lts} in the $d = 2$ instance as $m$ varies. This is the same setting as Figure~\ref{fig:scolts_main}, right, but presented separately rather than as a ratio. The left plots the regrets at time $T = 10^3$ over $50$, and the right plots the wall-clock time per iteration on our resources in milliseconds. $\scolts$ needs $0.14-0.25$ milliseconds per iteration, while $\textsc{safe-lts}$ needs $>4.5$ at $m = 100$. At the same time, for $m \ge 10,$ the regret of $\scolts$ is about $3\times$ smaller.}\vspace{-\baselineskip} \label{fig:scolts_vary_m_separate_plots}
\end{figure}

Of course, the computational problem of optimising $m$ SOC constraints becomes harder as $m$ grows, and so we expect that the computational advantage of $\scolts$ over \textsc{safe-lts} would grow with $m$.\footnote{Note that it may be possible to mitigate this somewhat by instead imposing the convex constraint $\max_i (\hat\Phi_t a - \alpha)^i + \|a\|_{V_t}^{-1} \le 0$ to exploit that the same matrix $V_t^{-1}$ appears in all constraints. However, the gradient computation of this map still grows with $m$, so the overall picture is unclear. Of course, imposing only $m$ linear constraints is bound to be faster.} To investigate this hypothesis more closely, we turn to a slightly different setup.\vspace{0.3\baselineskip}\\
\emph{Setup.}  We set $d = 2, \theta_* = (1,0),\mathcal{A}= [-1/\sqrt{d}, 1/\sqrt{d}]^d$. For $m \ge 3,$ we impose $m$ unknown constraints such that the feasible region forms a regular $m$-gon with one vertex at $(0.2/\sqrt{2},0)$. This allows us to systematically increase $m$ (to very high values) without incurring significant computational costs. We investigate the behaviour of $\scolts$ and $\ecolts$ on this setup with the coupled noise design as in the previous section ($\gamma = 0.5$) for $m \in \{10, 20, \cdots, 100\}$. We also execute this for $m = 1,$ where a single constraint passing through the same vertex is enforced. In all cases, we set $\asafe = 0$, which is always feasible.\vspace{0.3\baselineskip}\\
\emph{Strong Computatational Speedup.} As seen in Figure~\ref{fig:scolts_vary_m_separate_plots}, $\scolts$ has a strong computational advatange, which further grows with $m$. In particular, at $m = 1,$ $\scolts$ is about $1.3\times$ faster to execute than \textsc{safe-lts}, while for $m = 100,$ this advantage grows to $18\times$. \vspace{0.3\baselineskip}\\
\emph{Improved Regret Performance.}\footnote{Note: for the regret ratio in Figure~\ref{fig:scolts_main}, we perform $100$ separate runs with both methods, and compute the ratio of regret for the two methods in each. That figure reports the mean over this data - in this case, the expected mean is $\sim 1.5$ at $m = 1$, but with wide confidence intervals (CIs). For $m\ge 10,$ the lower confidence bounds all exceed $2$. At $m = 1,$ the mean regret of \textsc{safe-lts} is about $0.91\times$ that of $\scolts$, with strongly overlapping CIs.}   Further, instead of the small gain seen in the previous section, in this problem $\scolts$ has a strong statistical advantage relative to \textsc{safe-lts} for even moderate $m$. Indeed, while at $m = 1,$ its regret is about $10\%$ larger than that of \textsc{safe-lts}, for larger $m$, its regret is many times \emph{smaller}. In particular, for $m \ge 10,$ we found that the regret of $\scolts$ is roughly $3\times$ smaller (ranging between $2.7\times$ and $3.4\times$.).\vspace{0.3\baselineskip}\\
\emph{Takeaways.} This investigation further bolsters the strong advantage of $\scolts$ over \textsc{safe-lts}. Note that alternative confidence-set based hard enforcement methods are at least $2d$ times slowed than \textsc{safe-lts}, meaning that the computational advantage of $\scolts$ is even stronger relative to these methods. For large $m$, this appears to be accompanied by a large statistical advantage, making this the natural method in applications of SLBs.

\subsection{Simulation Study on the Behaviour of the Decoupled Noise}\label{appx:simulations_decoupled}

Finally, we investigate the behaviour of the $\colts$ framework under the decoupled noise design, wherein, instead of setting $H = -\mathbf{1}_m \eta,$ we draw $\eta$, and each row of $H,$ independently from $\nu_{\gamma}$. The main impetus behind this, of course, is that this decoupled design is a natural choice to execute $\colts$, although it is contraindicated by the analysis tools available to us.

\emph{Behaviour of Event Rates with $\gamma$.} To begin with, Figure~\ref{fig:decoupled_rates} shows the global optimism, local optimism, and unsaturation rates with this decoupled noise for the same instance as previously studied. Observe first that the decoupled noise design does experience a slight decrease in each of these rates compared to those seen in Figure~\ref{fig:rcolts_pilot_rates_with_gamma}. However, this effect is relatively mild, and in particular, we can see that the unsaturation rate is already up to nearly one at our previously selected value of $\gamma = 0.5.$ This suggests that the decoupled noise would do nearly as well as the coupled noise in this case.

\begin{figure}[htb]
    \centering
    \includegraphics[width=0.4\linewidth]{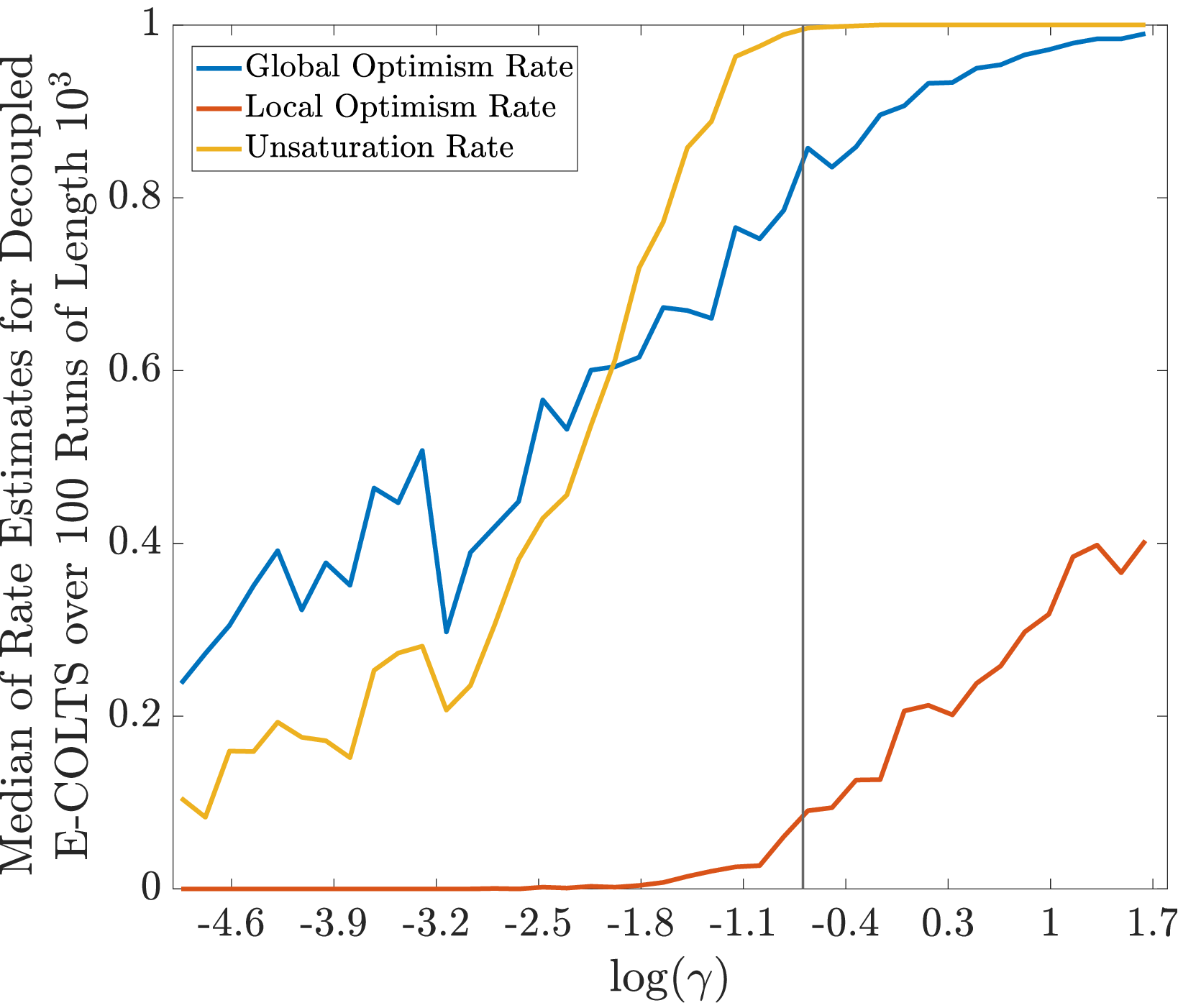}\vspace{-.5\baselineskip}
    \caption{\footnotesize Behaviour of the Global Optimism, Local Optimism, and Unsaturation Rates with $\gamma$ for the Decoupled Noise in the setting of Figure~\ref{fig:rcolts_pilot_rates_with_gamma}. Observe that while these rates decay somewhat with respect to the coupled noise, they are still strong, and especially for large $\gamma$ are nearly as good as with the coupled noise.}\vspace{-.5\baselineskip}
    \label{fig:decoupled_rates}
\end{figure}

\emph{Behaviour of Regret and Risk.} To further investigate the above claim, we execute $\ecolts$ without exploration (or equivalently, $\rcolts$ with $r = 0$) driven with this decoupled noise over the longer horizon $T = 5\cdot 10^4$. The resulting regret and risks are plotted in Figure~\ref{fig:decoupled_reg_and_risk}, along with the same for $\ecolts$ with coupled noise. Observe that the decoupled noise sees a significant loss of about $3\times $ in regret, but sees a gain of about $1.5\times $ in risk. Heuristically, we may think of the decoupled noise as behaving as if the noise is coupled but "shrunk", so that the behaviour of the risk is improved, but the behaviour of the regret worsens. 

Practically speaking, our recommendation remains to use the coupled noise design, in that it attains higher rates of explanatory events, and carries theoretical guarantees. Nevertheless, establishing that $\eff_T$ and $\saf_T$ do scale sublinearly with the decoupled noise design, as is evident from Figure~\ref{fig:decoupled_reg_and_risk}, is an interesting open problem.

\begin{figure}[t]
    \centering
    \includegraphics[width=0.4\linewidth]{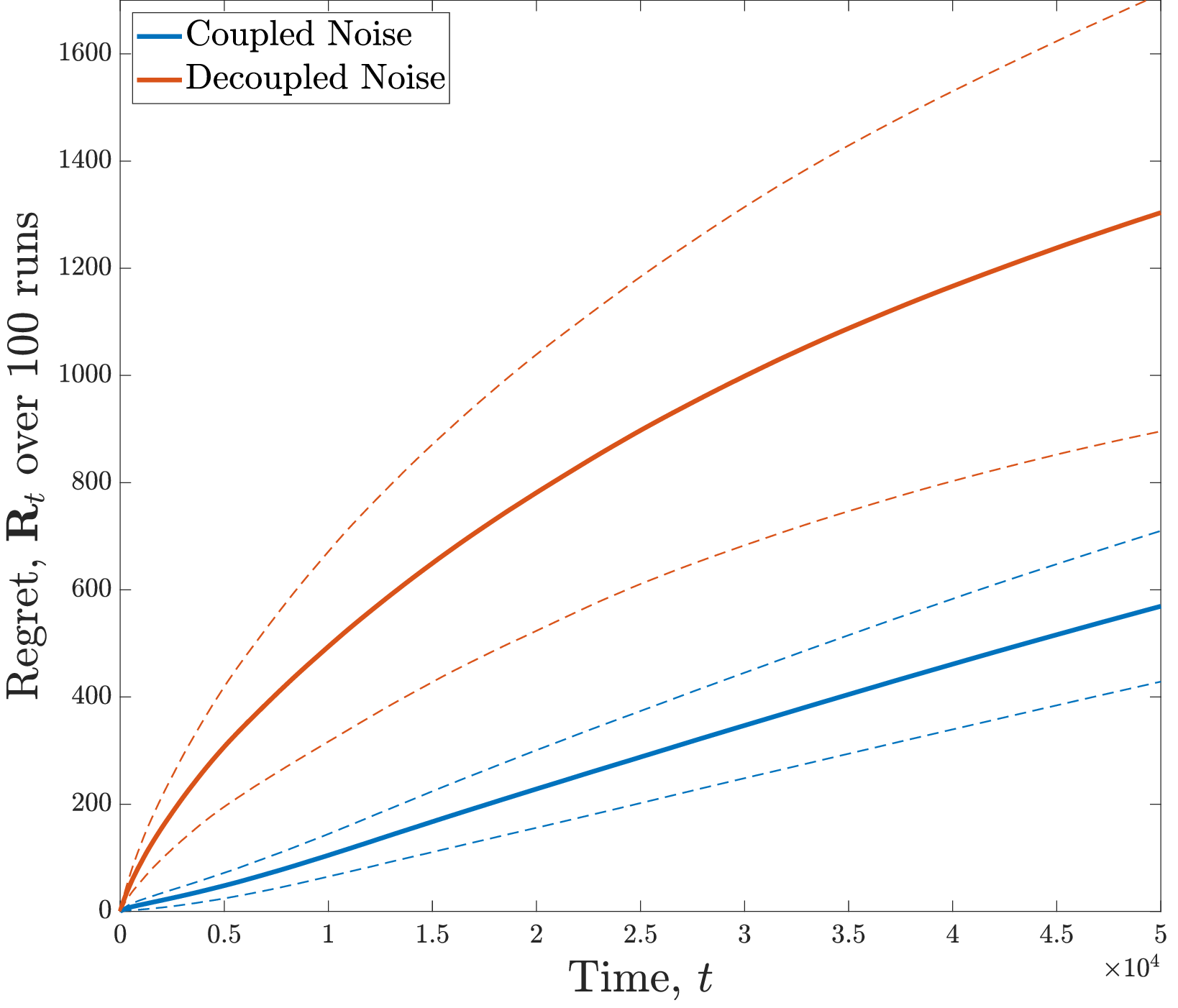}~\hspace{0.04\linewidth}~\includegraphics[width = 0.4\linewidth]{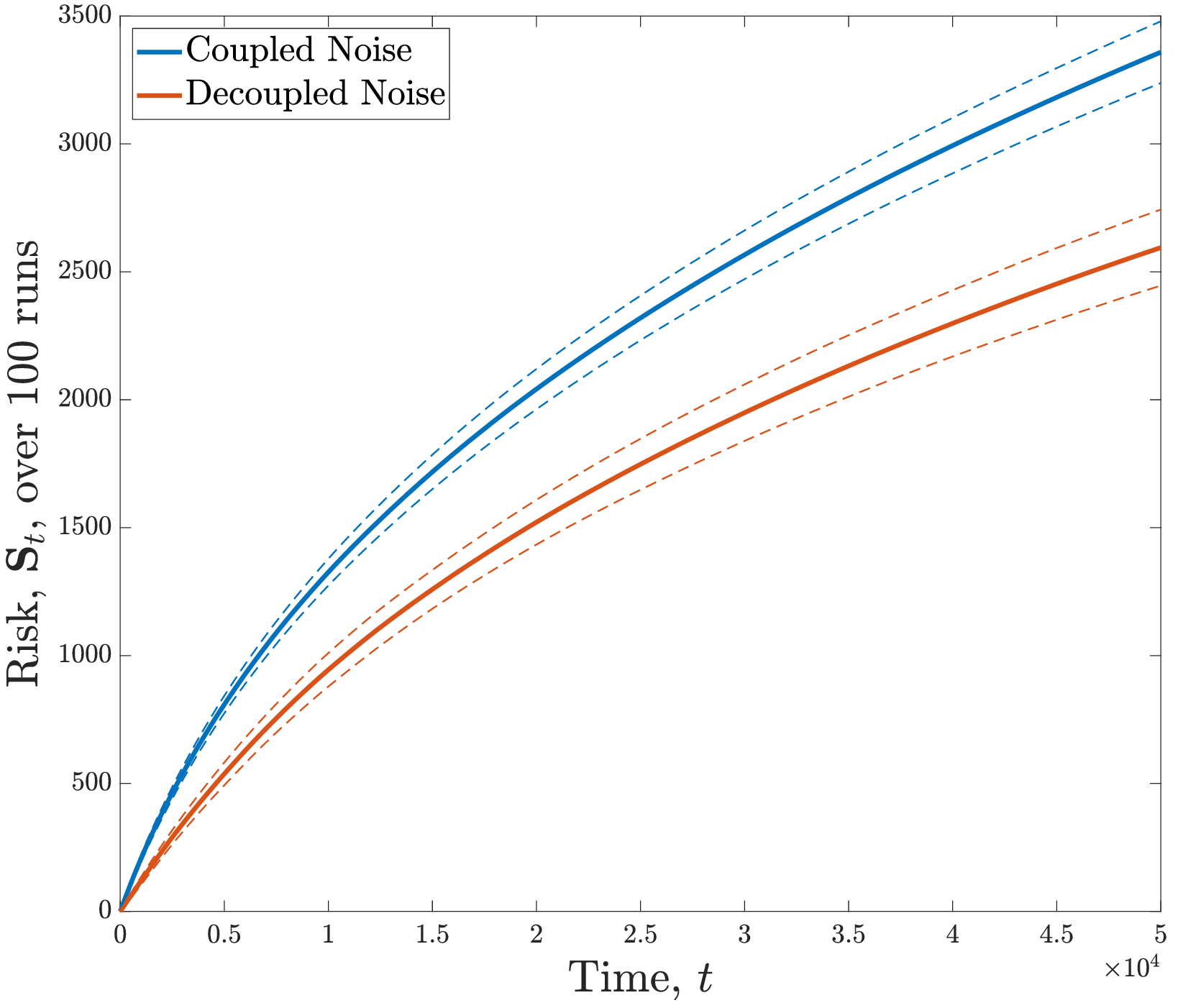}\vspace{-.5\baselineskip}
    \caption{\footnotesize Behaviour of regret (left) and risk (right) for $\ecolts$ executed with the decoupled noise compared with $\ecolts$ executed with the coupled noise (one-sigma error bars). Observe that the regret behaviour sharply deteriorates, while the risk behaviour slightly improves for the decoupled noise design. Heuristically, this suggests that the decoupled noise behaves `like' the coupled noise, but with a smaller value of $\gamma$.}%
    \label{fig:decoupled_reg_and_risk}
\end{figure}

\subsubsection{Investigation of Rates with Increasing \texorpdfstring{$m$}{Number of Constraints}}

Of course, the main obstruction with the use of the decoupled noise in \S\ref{sec:coupled_noise_design} was to do with many constraints. Indeed, it should be clear that under this decoupled noise, the local optimism rate must decay exponentially with $m$, since if any row of $\tPhi_t$ is perturbed so that $a_*$ violates its constraints, local optimism would fail (and this would occcur with a constant chance, no matter the estimates). 

To probe whether this indeed occurs, we simulate the behaviour of $\ecolts$ with the coupled and decoupled noise designs on a simplified setup. 

\emph{Setup.} We again take the $d = 2$ polygonal constraints investigated in \S\ref{appx:scolts_varying_m}. We investigate the behaviour of $\ecolts$ with both the coupled and decoupled noise designs on this instance as $m \in \{10, 20, \dots, 100\} \cup \{200, 300, \dots, 1000\},$ thus letting us probe an extremely high number of unknown constraints.

\emph{Observations.} There are two main observations of Figure~\ref{fig:rates_with_m}. Firstly, note that as shown in the main text, the rates of optimism and unsaturation under the coupled noise design are stable, and do not meaningfully vary with $m$ after it has grown at least slightly. 

On the other hand, under the decoupled noise design, the local optimism rate clearly crashes exponentially. The unsaturation rate has a slower but evident decay: roughly, this is as $m^{-1.3}$ for $m \le 100,$ and appears to be exponential for large $m$. However, surprisingly, the \emph{global optimism} rate remains stable (although lower than the same with the coupled design). This shows that there are situations with low-regret where frequent global optimism would be the `correct' explanation for good performance of methods like $\scolts$ or $\ecolts$ (indeed, this is what prompted us to write the optimism based analysis of these methods in \S\ref{appx:optimism_for_scolts}). Note however that \emph{proving} that global optimism is frequent under the decoupled design is an open problem. In fact, with unknown constraints, we do not know of any method to deal with global optimism lower bounds that does not pass through local optimism, since the approach of Abeille \& Lazaric \cite{abeille_lazaric} relies on convexity properties of the value function in terms of the unknown parameters, which fails in this case. 

\begin{figure}[t]
    \centering
    \includegraphics[width=0.45\linewidth]{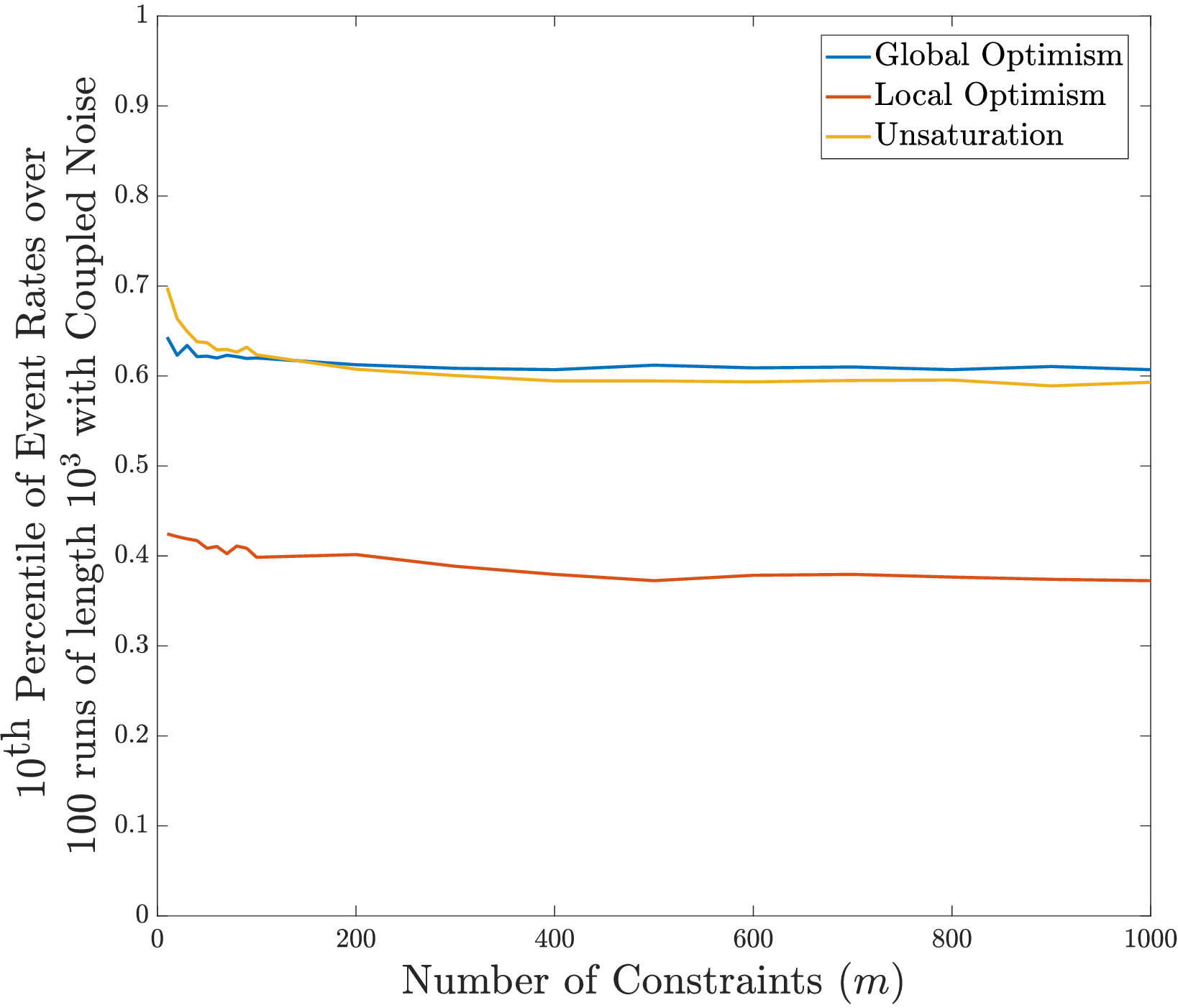}~\includegraphics[width = 0.45\linewidth]{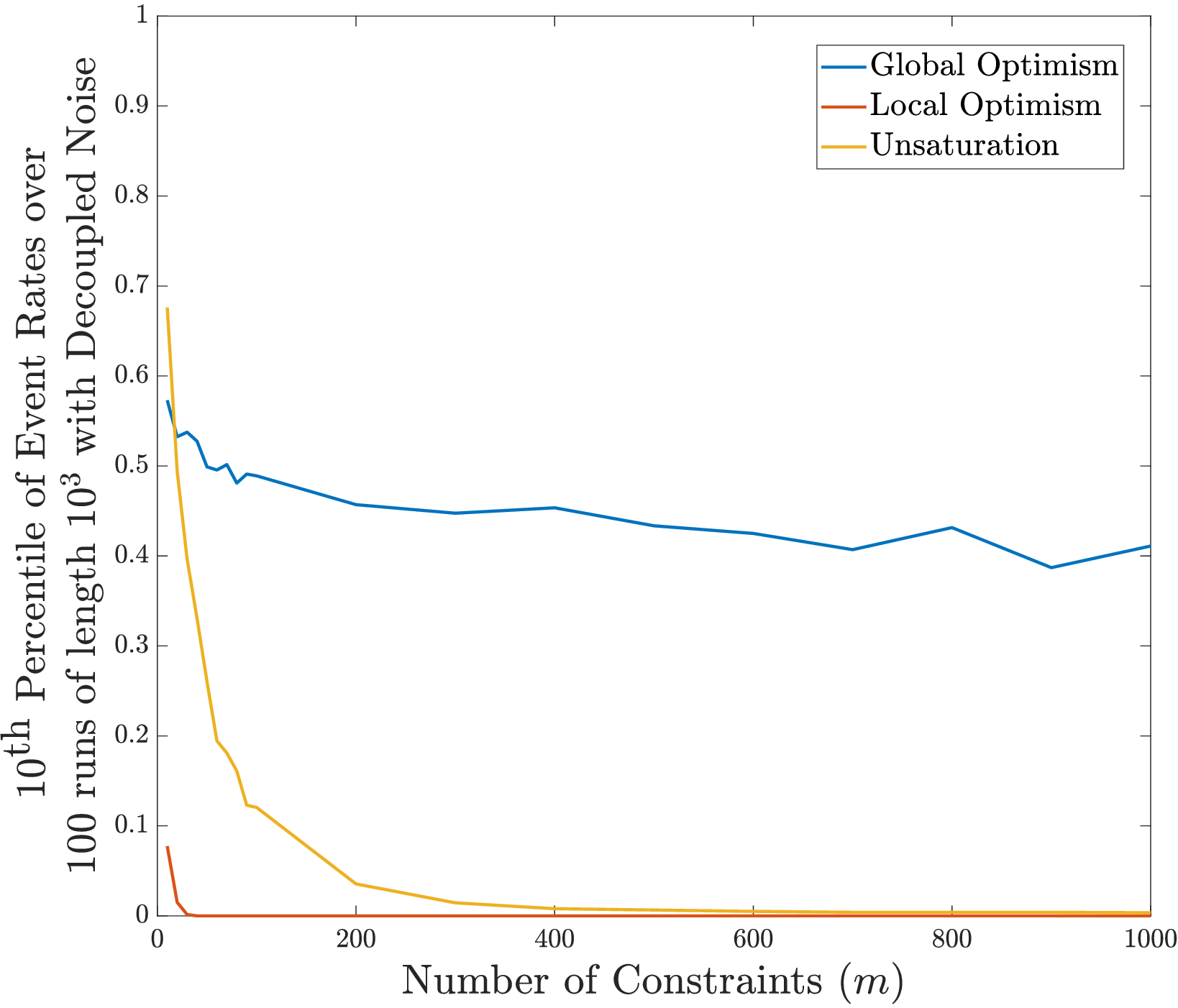}\vspace{-.5\baselineskip}
    \caption{\footnotesize Behaviour of the rates of global and local optimism, and of unsaturation, in the polygonal instances as the number of constraints is increased for the coupled (left) and decoupled (right) noise designs driven by $\mathrm{Unif}(\mathbb{S}^2)$. Observed that the behaviour of these is stable with $m$ for the coupled design, but for the decoupled design, the local optimism and unsaturation rate decay with $m$. Surprisingly, the global optimism rate remains stable even for the decoupled noise design.}
    \label{fig:rates_with_m}
\end{figure}

\end{document}